\declaretheorem[parent=section]{theorem}
\declaretheorem[sibling=theorem]{lemma}
\declaretheorem[sibling=theorem]{claim}
\newtheorem*{theorem*}{Theorem}
\declaretheoremstyle[
        spaceabove=\topsep, 
        spacebelow=\topsep, 
        bodyfont=\normalfont,
        notefont=\normalfont\bfseries,
        notebraces={}{},
        qed=$\blacksquare$, 
    ]{proofstyle}
\declaretheorem[style=proofstyle,numbered=no,name=Proof]{proof}
    \let\Cref\crtCref
    \let\cref\crtcref
\crefname{claim}{Claim}{Claims}
\newcommand{\reals}{\mathbb{R}}
\newcommand{\W}{\mathcal{W}}
\newcommand{\E}{\mathop{\mathbb{E}}}
\newcommand{\dotp}{\boldsymbol{\cdot}}
\DeclareMathOperator*{\sign}{sign}
\newcommand{\wrgd}{w^{\lambda\textrm{GD}}}
\newcommand{\wgd}{w^{\textrm{GD}}}
\newcommand{\wsgd}{w^{\textrm{SGD}}}
\newcommand{\vbar}{\bar{v}}
\renewcommand{\epsilon}{\varepsilon}
\newcommand{\R}{\mathbb{R}}
\newcommand{\cI}{\mathcal{I}}
\newcommand{\cW}{\mathcal{W}}
\newcommand{\f}[1]{f_{\scriptscriptstyle {(#1)}}}
\newcommand{\F}[1]{F_{\scriptscriptstyle {(#1)}}}
\newcommand{\medcup}{\textstyle \bigcup}
\newcommand{\Z}{\mathcal{Z}}
\newcommand{\discolorlinks}[1]{{\hypersetup{hidelinks}#1}}
\newcommand{\ignore}[1]{}
\def\blfootnote{\xdef\@thefnmark{}\@footnotetext}
\title{SGD Generalizes Better Than GD \\ (And Regularization Doesn't Help)}
\author{
    Idan Amir\thanks{Department of Electrical Engineering, Tel Aviv University; \texttt{idanamir@mail.tau.ac.il}.} 
    \and Tomer Koren\thanks{School of Computer Science, Tel Aviv University and Google Research, Tel Aviv; \texttt{tkoren@tauex.tau.ac.il}.} 
    \and Roi Livni\thanks{Department of Electrical Engineering, Tel Aviv University; \texttt{rlivni@tauex.tau.ac.il}.} 
}
\date{\today}
\begin{document}
\maketitle

\blfootnote{Accepted for presentation at the Conference on Learning Theory (COLT) 2021}

\begin{abstract} 
We give a new separation result between the generalization performance of stochastic gradient descent (SGD) and of full-batch gradient descent (GD) in the fundamental stochastic convex optimization model. While for SGD it is well-known that $O(1/\epsilon^2)$ iterations suffice for obtaining a solution with $\epsilon$ excess expected risk, we show that with the same number of steps GD may overfit and emit a solution with $\Omega(1)$ generalization error.
Moreover, we show that in fact $\Omega(1/\epsilon^4)$ iterations are necessary for GD to match the generalization performance of SGD, which is also tight due to recent work by \citet{bassily2020stability}.
We further discuss how regularizing the empirical risk minimized by GD essentially does not change the above result, and revisit the concepts of stability,  regularization, implicit bias and the role of the learning algorithm in generalization. 
\end{abstract}




\section{Introduction}

The setting of Stochastic Convex Optimization (SCO) assumes a learner that observes a finite sample of convex functions drawn i.i.d.\ from some unknown distribution and in turn has to provide a parameter that minimizes the expected function with respect to the true and unknown distribution. This is a very simple and clean setting that has received considerable attention in the last two decades 
which culminated in remarkable bounds for 
both the statistical sample complexity as well as the optimization complexity.
%

The two most common and well-known optimization methods in SCO are \emph{Gradient Descent}~(GD) and \emph{Stochastic Gradient Descent}~(SGD). In the first method, one optimizes the empirical risk over a sample by computing iteratively the full-batch gradient; the second method is a ``lighter'' version that samples at each iteration a fresh new example that is used to form an unbiased estimate of the gradient. 
Perhaps surprisingly, even though SGD may seem like a noisy, inaccurate version of GD, it is well known (e.g., \cite{hazan2019introduction}) that the former enjoys an optimal rate and converges after $O(1/\epsilon^2)$ iterations to an $\epsilon$-optimal solution with respect to the \emph{true} underlying distribution, independently of the dimension of the problem. 
This in turn makes it highly suitable for large-scale optimization where the computational costs of the iterations should be taken into account~\citep{bottou2011}. 

Even more surprisingly, while SGD is relatively well understood in terms of its sample complexity bounds,
our understanding of full-batch GD is still lacking;
in fact, it has remained an open question whether GD can obtain the same dimension-independent guarantees attained by SGD.
While this question has been studied under various assumptions such as smoothness and strong convexity~\citep{hardt2016train,bousquet2002stability}, in its general case it has remained largely unresolved.

\paragraph{Our contributions.}

In this work, we give a new separation result between the generalization performance of SGD and of full-batch GD in the context of SCO.
We show that if one runs GD for $O(1/\epsilon^2)$ iterations (with any fixed learning rate) the algorithm may overfit and exhibit a constant gap between empirical error and true error, and in fact, no less than $\Omega(1/\epsilon^4)$ iterations are necessary for it to generalize to within $\epsilon$. 
Interestingly, this last bound turns out to be tight and matches the upper bound implied by a recent stability analysis of GD due to \citet{bassily2020stability}.
In contrast, as discussed above, SGD (with a suitable step size) generalizes after merely $O(1/\epsilon^2)$ steps.
Thus, SGD is not merely a ``light'' noisy version of GD---it is in fact a superior algorithm that enjoys improved generalization guarantees.
To the best of our knowledge, this result is the first to provide such a quantitative separation in generalization performance between these two natural algorithms.

We then proceed to study the role of regularization in optimization. Regularization is known to be a key aspect in SCO: in particular, \citet{shalev2009stochastic} demonstrated that, while an empirical risk minimizer (ERM) might overfit, regularized-ERMs do not. As such, it is natural to ask whether adding regularization to the optimization algorithm improves its performance. 
We show that applying GD to a regularized empirical risk (and choosing the natural learning rates for this setting) would in general require the learner to achieve empirical error $O(\epsilon^2)$ in order to enjoy generalization error $\epsilon$. Overall, then, order of $\Omega(1/\epsilon^4)$ iterations are still necessary for full-batch GD even with added regularization.

Finally, as we further discuss in \cref{sec:discussion} below, our construction allows us to revisit and draw new insights on some of the existing notions and tools in theoretical machine learning such as regularization, stability, implicit bias, and their role towards generalization within the framework of SCO. 


\paragraph{Our techniques.}

The technical heart of our work is a new generalization lower bound for GD that builds upon the two works of \citet{shalev2009stochastic} and \citet{bassily2020stability}. The work of \citet{shalev2009stochastic} was the first to demonstrate that in SCO an empirical risk minimizer may fail to learn. More formally, they showed that even though a regularized ERM can learn with dimension-independent sample complexity, $\Omega(\log d)$ examples are necessary so that any ERM will not overfit; This was later improved by \citet{feldman2016generalization} to $\Omega(d)$.
However, there is still a gap between showing that an abstract ERM can potentially fail and analyzing the performance of concrete algorithm such as GD. In detail, the result of \citet{feldman2016generalization} shows a learning problem where there are some ``bad'' solutions but in contrast there are simpler and easier to find ``good'' solutions and it is not expected that a reasonable algorithm will overfit in that problem (in fact, in its most naive form the initialization at zero is an optimal point). The work of \citet{shalev2009stochastic} demonstrates an example where there is \emph{a unique} minimum. Hence, \emph{any} empirical risk minimizer will fail to learn. However, even this result is limited and cannot be used to rule out the performance of GD. Indeed while the minimum is bad and unique, there still are many approximately good solutions and only at a very high level of accuracy the algorithm starts to fail. In fact, only at an exponentially small training accuracy we obtain a guarantee of overfitting. As such, a reasonable algorithm such as, say, \emph{gradient descent} reaching to $O(1/\sqrt{n})$ optimization accuracy (which is the generalization error to begin with) will not fail. In fact, it will not fail as long as we run it for less than $2^{O(n)}$ iterations.

The second work we rely on is the work of \citet{bassily2020stability} that demonstrated that GD may be an unstable algorithm \citep{bousquet2002stability}---a necessary condition for an algorithm to overfit. Utilizing these two constructions we construct a new example where GD is unstable and converges to one of the ``bad'' ERM solutions. 
We point out that mere instability and lack of uniform convergence are not sufficient for an algorithm to overfit. Indeed, \citet{bassily2020stability} also demonstrated that SGD is unstable (on the same example on which GD is shown to be unstable), but at the same time, SGD comes with provable guarantees and does not overfit. Therefore, constructing such an example, even though utilizes previous constructions, does not follow some generic reduction.

\ignore{
To summarize,
we provide a new construction that provably shows that SGD is superior to GD in terms of complexity. Our construction demonstrates the importance of the optimization algorithm and that merely optimizing the empirical risk is not a guarantee for generalization. This is counter to much of the intuition that is drawn from simpler theoretical models (e.g., PAC-learning, regression, generalized linear models, etc.) where the sample complexity of learning requires that the empirical loss will faithfully represent the true loss on the entire domain (i.e., uniform convergence). In turn, the specific choice of an optimization algorithm in these scenarios becomes less relevant in terms of generalization.%
\footnote{Of course, even in these simplistic models under distributional assumptions one can emulate phenomena where the algorithm matters. Specifically if we allow to incorporate assumptions that the algorithm is luckily biased towards the right solution then indeed the algorithm matters, but here we try to focus on distribution independent generalization guarantees, and avoid such luckiness-type results.} }

\section{Problem Setup and Background}
\label{sec:setup}

We consider the standard setting of stochastic convex optimization. A learning problem consists of a fixed domain $\W_d \subseteq \reals^d$ in $d$-dimensional Euclidean space, and a loss function $f:\W_d \times \Z \to \reals$, parameterized by a parameter space $\Z$, where for each fixed $z\in \Z$ we assume the function $f(w;z)$ as a function of $w$ is $L$-Lipschitz and convex. 
We will treat throughout $L$ as a constant; in particular, in all our constructions $L$ will be fixed, and will not depend on other parameters of the problem (specifically, $d,\eta, T$ and $n$, as discussed next). We will normally choose $\W_d$ to be the unit-ball in $\reals^d$. If the dimension is fixed, and there is no room for confusion we will also suppress dependence on $d$ and write \[\W=\{w: \|w\|\le 1: w\in \reals^d\}.\]

In this setting, a learner is provided with a sample $S=z_1,\ldots,z_n$ of i.i.d.\ examples drawn from an unknown distribution $D$ and needs to optimize the \emph{true risk} (or \emph{expected risk}, or \emph{true loss}) which we define:
\begin{equation}\label{eq:true} F(w) = \E_{z\sim D}[f(w;z)].\end{equation}
More formally, given the sample $S$ the learner should return, in expectation, a parameter $w_S$ with $\epsilon$-optimal true loss. Namely,
\[\E_{S\sim D^n}[F(w_S)]\le \min_{w^\star\in \W}F(w^\star)+\epsilon.\]
For high probability rates, note that $f$ is Lipschitz, hence bounded in the unit ball and we will mostly care about lower bounds. In turn, bounds in expectation can be readily turned into probability bounds using standard Markov inequality.

We also follow the standard algorithmic assumption in optimization  which assumes the existence of a \emph{first order oracle} for $f$. Namely, for any $w,z \in \W \times \Z$ the learner has access to a procedure that provides her with the value $f(w;z)$ and with the subgradient $\nabla f(w;z)$ with respect to $w$ (\cite{nemirovsky1983problem}; see also \cite{hazan2019introduction,Bubeck15} for a more extensive background).

\paragraph{Stochastic Gradient Descent.}

Perhaps one of the most well studied optimization methods in SCO is \emph{Stochastic Gradient Descent} (SGD). In this method, the algorithm iteratively chooses a parameter $w_t$ (we will always take $w_0=0$) and at step $t$ makes the update
\[\wsgd_{t+1}=\Pi_{\W}\left(\wsgd_t-\eta \nabla f(\wsgd_t,z_{t+1})\right),\quad \wsgd_S:=\frac{1}{T}\sum_{t=1}^T \wsgd_t.\] 
where $\Pi_{\W}$ is the Euclidean projection over the set $\W$. It is well known (see for example \cite{shalev2014understanding,hazan2019introduction}) that if one runs SGD with a learning rate $\eta=\Theta(1/\sqrt{n})$ for $T=n$ iterations then the output $\wsgd_S$ has:

\begin{equation}\label{thm:sgd} \E_{S\sim D^n}[F(\wsgd_S)] \le \min_{w^\star\in \W}F(w^\star)+ O(1/\sqrt{n}).\end{equation}
In particular, for $\epsilon>0$ SGD succeeds to learn to within $\epsilon$-accuracy with an order of $\Omega(1/\epsilon^2)$ calls to a first order oracle, and $\Omega(1/\epsilon^2)$ samples. 

\paragraph{Empirical risk.}

An alternative method to stochastic gradient descent is to optimize the \emph{empirical risk} over a sample $S$, defined next:
\begin{equation}\label{eq:erm} F_S(w)= \frac{1}{n}\sum_{i=1}^n f(w;z_i).\end{equation}
Using standard discretization and covering techniques one can show that if $n=\Omega(d/\epsilon^2)$ then \emph{any} algorithm that optimizes $F_S$ to accuracy $\epsilon$ will also have roughly $O(\epsilon)$ test error (e.g., \cite{shalev2014understanding}). 
In fact, when $n= \Theta(d/\epsilon^2)$ the empirical loss approximates the true loss uniformly, for all $w\in \W$. 
\citet{shalev2009stochastic} showed that a dependence on $d$ in the uniform convergence rate is necessary, and \citet{feldman2016generalization} proved that a linear dependence is in fact tight. 
We emphasize that the dimension dependence is necessary only for uniform convergence; indeed, SGD which does not rely on such arguments, does not exhibit dimension dependencies.

\paragraph{Gradient Descent.}

A concrete way to minimize the empirical risk in \cref{eq:erm} is with (full-batch) \emph{Gradient Descent}. We consider the following update rule \begin{equation}\label{eq:gdrule}\wgd_{t+1}=\Pi_{\W}\left(\wgd_t-\eta \nabla F_S(\wgd_t)\right),~\quad~\wgd_S:=\frac{1}{T}\sum_{t=1}^T\wgd_t .\end{equation}
The output of GD is then given by the averaged sequence, $\wgd_S$.
The optimization error of GD is governed by the following equation for any choice of parameters $\eta$ and $T$
(see for example, \cite{hazan2019introduction, Bubeck15}):
\begin{equation}\label{thm:gd}
    F_S(\wgd_S)
    \le
    \min_{w^\star \in \W} F_S(w^\star) +O\left(\eta + \frac{1}{\eta T}\right)
    .
\end{equation}
In particular, with a choice $\eta = O(\epsilon)$ and $T=O(1/\epsilon^2)$ we can optimize $F_S$ up to accuracy $\epsilon>0$. Using the naive dimension-dependent sample complexity bound, we have that if $n=O(d/\epsilon^2)$, $T=O(1/\epsilon^2)$, $\eta=O(\epsilon)$ we can bound both the optimization error as well as generalization error and achieve true loss of order $\epsilon$.
Recently \citet{bassily2020stability} provided the first, dimension-independent, generalization bound:
\begin{theorem*}[{\citealp[Thm 3.2]{bassily2020stability}}] 
Let $D$ be an unknown distribution over $\Z$ and suppose $f(w;z)$ is $O(1)$-Lipschitz and convex w.r.t.\ $w\in \W$ where $\W$ is the unit ball in $\reals^d$ then running GD over an i.i.d.\ sample $S$ with step size $\eta$ for $T$ rounds yields the following guarantee 
\begin{equation}\label{thm:bassily} 
    \E_{S\sim D^n}[F(\wgd_S)] 
    \le 
    \min_{w^\star\in \W}F(w^\star)+ O\left(\eta\sqrt{T} +  \frac{1}{\eta T}+\frac{\eta T}{n}\right)
    .
\end{equation}
\end{theorem*}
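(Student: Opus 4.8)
The plan is to bound the excess true risk by splitting it, for every fixed sample $S$, into an empirical optimization term and a generalization term. Fix $w^\star \in \argmin_{w\in\W}F(w)$ (a point that does not depend on $S$); then
\[
F(\wgd_S) - F(w^\star)
= \bigl(F(\wgd_S) - F_S(\wgd_S)\bigr) + \bigl(F_S(\wgd_S) - F_S(w^\star)\bigr) + \bigl(F_S(w^\star) - F(w^\star)\bigr).
\]
By \eqref{thm:gd} the middle term is at most $O(\eta + 1/(\eta T))$, which is absorbed into the claimed bound since $\eta\le\eta\sqrt T$; and the last term vanishes in expectation because $\E_S[F_S(w^\star)] = F(w^\star)$. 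So it remains to bound $\E_S[F(\wgd_S) - F_S(\wgd_S)]$, and for this I would go through algorithmic stability: by the classical stability-implies-generalization argument (\citealp{bousquet2002stability}), if $S\mapsto\wgd_S$ is $\gamma$-uniformly stable --- i.e.\ $\sup_z|f(\wgd_S;z)-f(\wgd_{S'};z)|\le\gamma$ for any two samples $S,S'$ that differ in a single example --- then $\E_S[F(\wgd_S)-F_S(\wgd_S)]\le\gamma$. Thus the theorem reduces to showing that GD with step size $\eta$ run for $T$ steps is $O(\eta\sqrt T + \eta T/n)$-uniformly stable.

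To establish stability, fix neighboring samples $S,S'$ differing only in their $j$-th example, initialize $w_0=w_0'=0$, let $(w_t),(w_t')$ be the projected GD trajectories on $S$ and $S'$ respectively, and set $\delta_t=\|w_t-w_t'\|$; since $f$ is $L$-Lipschitz and $\wgd_S=\frac1T\sum_{t\le T}w_t$, it suffices to bound $\frac1T\sum_{t\le T}\delta_t$. For the one-step estimate I would use that $\Pi_\W$ is nonexpansive; that $\nabla F_{S'}(w_t')=\nabla F_S(w_t')+e_t$ with $\|e_t\|\le 2L/n$, since only the $j$-th summand of the empirical risk changes; and that $F_S$, being \emph{convex}, has a monotone subgradient, so $\langle\nabla F_S(w_t)-\nabla F_S(w_t'),\,w_t-w_t'\rangle\ge0$. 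Peeling off the $\eta e_t$ perturbation with the triangle inequality and expanding the square of the remaining difference $\|(w_t-\eta\nabla F_S(w_t))-(w_t'-\eta\nabla F_S(w_t'))\|^2$ --- whose cross term is nonpositive by monotonicity, while $\|\nabla F_S\|\le L$ --- yields a recursion of the shape $\delta_{t+1}\le\sqrt{\delta_t^2+O(\eta^2L^2)}+O(\eta L/n)$. Unrolling it --- the square-root part grows like $\sqrt t$ and the additive $O(\eta L/n)$ compounds linearly --- gives $\delta_t=O(\eta L\sqrt t+\eta L t/n)$, hence $\frac1T\sum_{t\le T}\delta_t=O(\eta L\sqrt T+\eta L T/n)$; with $L=O(1)$ we conclude $\gamma=O(\eta\sqrt T+\eta T/n)$. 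Plugging this into the decomposition above gives $\E_S[F(\wgd_S)]\le F(w^\star)+O(\eta\sqrt T+1/(\eta T)+\eta T/n)$, as claimed.

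The delicate step is precisely the square-distance recursion, not the accounting around it. A naive triangle-inequality bound --- treating the gradient step as merely $1$-expansive --- would give only $\delta_t=O(\eta L t)$, hence the much weaker stability $O(\eta T)$; obtaining the $\sqrt T$ rate requires exploiting convexity through the monotonicity inequality, and one cannot fall back on smoothness here since none is assumed. A second point that needs care, and one genuinely particular to full-batch GD, is that the single differing example perturbs \emph{every} gradient $\nabla F_S(w_t)$ (in SGD it would enter just one step), so the ``within-$F_S$'' displacement (controlled by monotonicity) must be kept separate from the ``$F_S$ vs.\ $F_{S'}$'' displacement (the $\le 2L/n$ term), and one must track how the latter accumulates over all $T$ iterations --- this is exactly what produces the $\eta T/n$ term in the bound.
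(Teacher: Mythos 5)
This is a result the paper itself does not prove: the statement is quoted as Theorem~3.2 of \citet{bassily2020stability} and used only as a black-box upper bound to contrast with the paper's lower bounds, so there is no in-paper proof to compare your attempt against. That said, your reconstruction is correct and follows the same route as the cited source. The decomposition into a stability term, an empirical optimization term, and a vanishing estimation term is standard, and you handle the one genuinely delicate piece --- the uniform-stability recursion --- correctly: after peeling off the perturbation $\eta e_t$ with $\|e_t\|\le 2L/n$ (from the single differing summand of $F_{S'}$), expanding the square of the same-objective difference and dropping the cross term by subgradient monotonicity gives $\delta_{t+1}\le\sqrt{\delta_t^2+4\eta^2L^2}+2\eta L/n$, and the induction $\delta_t\le 2\eta L\sqrt t+2\eta Lt/n$ indeed closes because $\sqrt{(a+b)^2+c^2}\le\sqrt{a^2+c^2}+b$. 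Averaging over $t$ and using $L$-Lipschitzness of the loss then yields the $O(\eta\sqrt T+\eta T/n)$ stability, and the rest is bookkeeping. One small point to keep in mind for rigor: ``$\nabla F_S$'' must be read throughout as the specific subgradient returned by the first-order oracle; this is harmless since monotonicity holds for arbitrary subgradient selections, and the same oracle is used on both trajectories at the respective points.
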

In particular, for $n=1/\epsilon^2$, choosing $\eta=\epsilon^3$ and $T=\Omega(1/\epsilon^4)$ provides:
\[ 
    \E_{S\sim D^n}[F(\wgd_S)] 
    \le 
    \min_{w^\star\in \W}F(w^\star)+ O(\epsilon)
    .
\]
Notice the suboptimality in terms of $\epsilon$. The above bound requires $T=\Omega(1/\epsilon^4)$, which is suboptimal compared with the guarantee of \cref{thm:sgd} for SGD, as well as the dimension dependent generalization bound that requires $T=\Omega(1/\epsilon^2)$. We will show that the above bound is in fact tight. Namely, if $n=O(\log d)$ then for any learning rate we need at least $T=\Omega(1/\epsilon^4)$ iterations for GD to achieve $O(\epsilon)$ true loss.

\paragraph{Regularization.}

It is customary, when minimizing the empirical error, to add a regularization term in order to avoid overfitting. Concretely, given $S=\{z_1,\ldots,z_n\}$, we consider the regularized empirical loss
\begin{equation}\label{eq:reg} F_{\lambda,S}(w) = \frac{\lambda}{2}\|w\|^2 + \frac{1}{n}\sum_{i=1}^n f(w;z_i).\end{equation}
We will consider the following update rule of GD that is known to achieve fast optimal rates for strongly convex functions (in particular, regularized). At step $t$ we take the update rule:
\begin{equation}\label{eq:strongrule} 
    \wrgd_{t+1}
    = 
    \Pi_{\W}\left[\wrgd_t-\eta_{t+1} \nabla F_{\lambda,S}(\wrgd_t)\right]
    , \quad
    \wrgd_S
    :=
    \sum_{t=1}^T \frac{2t}{T(T+1)}\wrgd_t
    ,
\end{equation}
where $\eta_t=\frac{2}{\lambda (t+1)}$.

The above learning rate was suggested by \citet{lacoste2012simpler} where they also demonstrated the optimization guarantee:
\begin{equation}\label{thm:lacoste} F_{\lambda,S}(\wrgd_S) \le \min_{w^\star}F_{\lambda,S}(w^\star) + O\left( \frac{1}{\lambda T}\right).\end{equation}

As for the test error, utilizing \cref{thm:lacoste} one can bound via the empirical error (see \citet[Eq. (24)]{shalev2009stochastic}) (as well as relating the loss of the regularized objective and the non-regularized objective), and achieve the following bound:
\begin{equation}\label{thm:bousquet} 
\E_{S\sim D^n}[F(\wrgd_S)]\le \min_{w^\star \in \W}F(w^\star)+ O\left(\frac{1}{\lambda\sqrt{T}} + \frac{1}{\lambda n}+\lambda\right).
\end{equation}

Similar to before, if we wish to tighten the above bound, we need to choose $\lambda=O(1/\sqrt{n})$ and set $T=O(n^2)=O(1/\epsilon^4)$. We will again show a matching lower bound.

\section{Main Results}\label{sec:results}

We proceed to present our main results which provide accompanying lower bounds to \cref{thm:bassily,thm:bousquet} respectively.

\subsection{Gradient Descent}

The proof of the following result is provided in \cref{prf:main}.
\begin{theorem}\label{thm:main}
Fix $\eta,T$ and $n$. For $d\ge T\cdot 2^{n+5} + 20\cdot \eta^2T^2$, 
 there exists a Lipschitz convex function $f(w;z)$, and a distribution $D$ over $\Z$, such that if $\wgd_S$ is defined as in \cref{eq:gdrule}, then:

\begin{equation}\label{eq:main} \E_{S\sim D^n} [F(\wgd_S)]\ge \min_{w^\star\in \W}F(w^\star)+ 
\Omega\left(\min\left\{ \eta\sqrt{T}+\frac{1}{\eta T},1\right\}\right).\end{equation}
\end{theorem}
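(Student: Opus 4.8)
The plan is to realize the two terms of \cref{eq:main} on two disjoint coordinate blocks and add them. Write $\reals^d=\reals^{d_1}\oplus\reals^{d_2}$ with $d_1\ge 20\eta^2T^2$ and $d_2=T\cdot2^{n+5}$, a point as $w=(u,v)$, and take $f(w;z)=f_{\mathrm{opt}}(u)+f_{\mathrm{gen}}(v;z)$. For $f_{\mathrm{opt}}$ I would use a fixed, sample-independent, $O(1)$-Lipschitz convex function that is a standard hard instance for subgradient descent: its minimizer lies at distance $\Theta(1)$ from $0$, and $T$ steps of (averaged) GD with step size $\eta$ from $0$ leave a point with suboptimality $\Omega(\min\{1/(\eta T),1\})$, with the $d_1\approx\eta^2T^2$ coordinates ensuring the iterates stay interior to $\W$ for the parameter range in which $1/(\eta T)$ is the operative term. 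Since $f_{\mathrm{opt}}$ ignores $z$, its empirical and population risks coincide, so this block alone forces $\E_S[F(\wgd_S)]\ge\min_{w^\star}F(w^\star)+\Omega(\min\{1/(\eta T),1\})$; in the leftover regime $\eta\sqrt T=\Omega(1)$ the next block by itself already gives $\Omega(1)$, so the projection onto $\W$ may be taken to act block-wise and the two GD runs to be independent.

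The heart of the argument is $f_{\mathrm{gen}}$, which must exploit that full-batch GD repeatedly descends the \emph{same} average gradient $\frac1n\sum_i\nabla f_{\mathrm{gen}}(\cdot;z_i)$ and so drifts into coordinates that memorize the drawn sample while carrying no population signal. Combining the non-uniform-convergence constructions of \citet{shalev2009stochastic} and \citet{feldman2016generalization} with the GD-instability construction of \citet{bassily2020stability}, I would let an example $z$ be a short i.i.d.\ string, index the $d_2$ coordinates by pairs $(t,b)$ with $t\in[T]$ and $b$ among the $\le2^{n+5}$ ``configurations'' a sample of size $n$ can present, and assemble $f_{\mathrm{gen}}$ from clipped linear pieces (hence convex and $O(1)$-Lipschitz) so that: (i) at the $t$-th iterate the empirical subgradient has a $-\Theta(1)$ component in a \emph{fresh} coordinate $(t,b_S)$ determined by $S$, so that after $T$ steps $\wgd_S$ (the average included) carries $\ell_2$-mass $\Theta(\min\{\eta\sqrt T,1\})$ in sample-specific coordinates — clipped at $1$ by the projection, which is the source of the $\min$; (ii) $F_S(\wgd_S)$ is within $O(\eta+1/(\eta T))$ of $\min_wF_S(w)$ on this block, which is automatic from \cref{thm:gd}, so the bound is genuine overfitting rather than optimization error; and (iii) the population minimizer of $f_{\mathrm{gen}}$ is $0$, while a fresh $z'$ misses the $T$ coordinates occupied by $\wgd_S$ with probability $1-o(1)$, whence $F(\wgd_S)\ge\min_{w^\star}F(w^\star)+\Omega(\min\{\eta\sqrt T,1\})$ on this block. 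The choice $d_2=T\cdot2^{n+5}$ is precisely what makes ``$z'$ avoids $S$'s coordinates'' a high-probability event and forces $n=O(\log d)$ — the regime in which uniform convergence is known to fail.

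Combining the blocks, $\wgd_S=(u_S,v_S)$ with $u_S,v_S$ the outputs of the two independent GD runs, and
\[
\E_{S\sim D^n}[F(\wgd_S)]-\min_{w^\star\in\W}F(w^\star)\ \ge\ \Omega\!\left(\min\{\tfrac1{\eta T},1\}\right)+\Omega\!\left(\min\{\eta\sqrt T,1\}\right)\ =\ \Omega\!\left(\min\{\eta\sqrt T+\tfrac1{\eta T},\,1\}\right),
\]
using $\min\{a,1\}+\min\{b,1\}\ge\tfrac12\min\{a+b,1\}$; this is \cref{eq:main}. I expect the main obstacle to be parts (i)--(ii) for $f_{\mathrm{gen}}$: one has to pin down the \emph{entire} GD trajectory on a non-smooth convex objective — choosing legitimate subgradients at the (possibly non-differentiable) iterates so the path performs the prescribed coordinate-by-coordinate drift — while simultaneously keeping the population optimum at $0$, keeping every $f(\cdot;z)$ convex and $O(1)$-Lipschitz, and making the accumulated drift exactly $\Theta(\eta\sqrt T)$ rather than $\Theta(\eta T)$. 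This is also what keeps the result from following by a black-box ``instability $\Rightarrow$ overfitting'' reduction: \citet{bassily2020stability} show SGD is unstable on a comparable instance yet still generalizes, so the construction must lean on the specific full-batch dynamics of GD, and verifying that it does is the crux.
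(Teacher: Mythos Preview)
Your two-term decomposition matches the paper's, but the paper does \emph{not} combine the two constructions on orthogonal blocks into a single $f$. It proves two separate lower bounds---one instance giving $\Omega(\min\{\eta\sqrt T,1\})$ (the novel part) and a purely deterministic one giving $\Omega(\min\{1/(\eta T),1\})$---and, since the theorem only asserts existence for each fixed $(\eta,T,n)$, simply picks whichever is larger; the max of the two is $\Omega$ of their sum. This sidesteps the issue your block approach glosses over: projection onto the Euclidean ball $\W=\{w:\|w\|\le 1\}$ does \emph{not} act block-wise, so once $\|(u_t,v_t)\|>1$ the rescaling couples the $u$- and $v$-dynamics and your ``two independent GD runs'' claim fails. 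For the paper's $f_{\mathrm{opt}}$ (an $\ell_\infty$-type function $\|w-\tfrac{1}{\sqrt{d_1}}\mathbf{1}+\epsilon\|_\infty$ whose minimizer sits near $\partial\W$) the iterates genuinely reach the boundary, so this is not a hypothetical concern. Picking the dominant construction is both simpler and avoids the coupling entirely.

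On $f_{\mathrm{gen}}$: your mechanism is right---full-batch GD drifts coordinate-by-coordinate into sample-specific directions, accumulating $\ell_2$ mass $\Theta(\eta\sqrt T)$ that a fresh example penalizes---and you correctly flag the $\eta\sqrt T$ vs.\ $\eta^2 T$ issue and the need to control the entire non-smooth trajectory. But your ``index coordinates by $(t,b)$ with $b$ ranging over sample configurations'' scheme is not what the paper does and, as written, risks letting the loss depend on $S$ rather than on a single $z$. The paper's construction is more direct: $\alpha\sim\mathrm{Unif}\{0,1\}^d$ and
\[
f(w;\alpha)=\sqrt{\textstyle\sum_i \alpha(i)\,h_\gamma^2(w(i))}\;+\;\gamma_1\,v_\alpha\cdot w\;+\;r_\epsilon(w),
\]
with $v_\alpha$ a drift vector that pushes only coordinates with $\alpha(i)=0$ positive and $r_\epsilon$ the \citet{bassily2020stability} max term. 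The ``bad'' coordinates $\cI=\{i:\alpha_j(i)=0\ \forall j\le n\}$ emerge from the sample rather than being hard-wired into the coordinate indexing, and the $\sqrt{\cdot}$ wrapper is exactly what turns $\eta^2 T$ into $\eta\sqrt T$ population loss. The trajectory is then pinned down by a dedicated lemma showing $\nabla F_S(w_t)=\gamma_1\bar v+e_{i_t}$ with $i_t$ the $t$-th element of $\cI$, after which a Chebyshev argument guarantees $|\cI|\ge T$ with constant probability when $d\ge T\cdot 2^{n+5}$.
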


Tuning the parameters $\eta$ and $T$, we obtain that for \emph{any} learning rate, to achieve $\epsilon$ true risk we need at least $T=O(1/\epsilon^4)$ iterations. Together with the upper bound in \cref{thm:bassily}, we observe that $T=O(1/\epsilon^4)$, and $n=O(1/\epsilon^2)$ provide optimal rates. 
The main technical novelty of our work is in deriving the first term. Namely, we provide a novel $\Omega(\eta\sqrt{T})$ generalization lower bound. The second term follows from the standard optimization guarantees for GD, which we repeat in the proof. 

\subsection{Regularized Gradient Descent}

We next turn to the question of regularized Gradient Descent. As discussed, it is known that while standard ERM may be liable to overfitting, regularization (and in particular strongly convex regularization) can induce stability which in turn allows learning. We then ask the question if an analogous result appears for algorithmic settings such as GD. Namely, if we optimize over the regularized objective do we guarantee improvement in the performance. Therefore, we now consider the performance of GD on the regularized objective as in \cref{eq:strongrule}. The proof is provided in \cref{prf:strongmain}.

\begin{theorem}\label{thm:strongmain}
Fix $n$, $\lambda>0$ and $T$, and assume $d\ge T\cdot 2^{n+5}\cdot n$. Suppose we run GD over the regularized objective as in \cref{eq:reg} and we output $\wrgd$ as defined in \cref{eq:strongrule}.
Then there exist a distribution $D$ over convex Lipschitz functions, $f(w;z)$ such that:
\begin{equation}\label{eq:strongmain2} \E_{S\sim D^n}\left[F(\wrgd_S)\right]  \ge \min_{w^\star \in \W}F(w^\star)+ \Omega\left(\min\left\{\frac{1}{\lambda \sqrt T} + \ignore{\min\left\{\frac{1}{\lambda n},\frac{1}{\sqrt{n}}\right\}+}\lambda, 1\right\}\right).\end{equation}
In particular, since $\Delta_{\lambda,S} := F_{\lambda,S}(\wrgd_S)-\min_{w} F_{\lambda,S}(w) \le \frac{1}{\lambda T}$ we obtain that
\begin{equation}\label{eq:strongmain1}
\E_{S\sim D^n}\left[ F(\wrgd_S)\right]  \ge \min_{w^\star\in \W} F(w^\star)+ \Omega\left(\min\left\{\sqrt{\frac{\Delta_{\lambda,S}}{\lambda}} \ignore{+ \min\left(\frac{1}{\lambda n},\frac{1}{\sqrt{n}}\right)}+\lambda,1\right\}\right).
\end{equation}
\end{theorem}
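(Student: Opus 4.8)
The plan is to reuse the hard instance and the trajectory analysis behind \Cref{thm:main}, re-tuned to the strongly convex objective of \eqref{eq:reg} and to the Lacoste--Julien schedule $\eta_t=2/(\lambda(t+1))$ used in \eqref{eq:strongrule}. Rewriting the update as $\wrgd_{t+1}=\Pi_{\W}\bigl[(1-\eta_{t+1}\lambda)\wrgd_t-\eta_{t+1}\nabla F_S(\wrgd_t)\bigr]$ exposes the two roles of the regularizer: a per-step contraction toward the origin by the factor $1-\eta_{t+1}\lambda=1-\tfrac{2}{t+2}$, and a $1/t$-decay of the effective learning rate. Both effects work \emph{against} overfitting, so the instance must be designed so that the overfitting signal survives them. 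I would split the excess true risk into an \emph{overfitting} term of order $1/(\lambda\sqrt T)$ and a \emph{regularization-bias} term of order $\lambda$, establish each on its own block of coordinates, and combine by direct sum (rescaling each block by a constant so $\W$ remains the unit ball), capping the total at $\Omega(1)$ by Lipschitzness of the loss --- this produces the $\min\{\,\cdot\,,1\}$. The guiding heuristic is that the \emph{terminal} step size $\eta_T\asymp 1/(\lambda T)$ now plays the role that the fixed $\eta$ plays in \Cref{thm:main}, so the target overfitting magnitude is $\eta_T\sqrt T\asymp 1/(\lambda\sqrt T)$.

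For the overfitting term I would follow the construction of \Cref{thm:main}: allocate a fresh block of $\approx 2^{n+5}$ coordinates to each round $t\le T$ (so the dimension requirement $d\gtrsim T\cdot 2^{n+5}\cdot n$), and build the loss so that at round $t$ the full-batch gradient $\nabla F_S(\wrgd_t)$ places an $S$-dependent, unit-magnitude push on block $t$ which (i) is never revisited at later rounds and (ii) points along a direction where the empirical risk is small but the true risk is large --- the ``bad ERM direction'' mechanism of \citet{shalev2009stochastic} wired into the instability instance of \citet{bassily2020stability}. Tracking the dynamics, the coordinate of $\wrgd_S$ contributed by block $t$ is $\eta_t$ times the telescoping contraction $\prod_{s>t}(1-\eta_s\lambda)=\tfrac{(t+1)(t+2)}{(T+1)(T+2)}\asymp(t/T)^2$, further reweighted by the averaging weights $2t/(T(T+1))$; since the blocks are orthogonal, the Euclidean mass of $\wrgd_S$ along the bad subspace is $\Omega(1/(\lambda\sqrt T))$ (after summing the resulting $\sum_t t^2$-type series), and by construction this forces a generalization gap of the same order whenever it is $O(1)$ (otherwise $\Pi_{\W}$ caps it, giving the $\min\{\,\cdot\,,1\}$). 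I expect this step --- re-running the trajectory analysis of \Cref{thm:main} under the shrink-and-decay dynamics --- to be the main obstacle: one must verify that the contraction does not wash the signal out faster than it accumulates, and that the iterates stay in the interior of $\W$ on the relevant coordinates so that the projection does not erase the bad component.

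The regularization-bias term is the easy half, and is where ``regularization doesn't help'' is literally visible. On a separate coordinate, put the deterministic loss $f(w;z)=-\tfrac{c}{2}w_1$ with $c=\min\{\lambda,1\}$ (so the loss is $O(1)$-Lipschitz). Then $\min_{\W}F=-c/2$, attained at $w_1=1$, whereas $F_{\lambda,S}$ --- which GD optimizes to within $O(1/(\lambda T))$ by \eqref{thm:lacoste}, hence whose minimizer $\wrgd_S$ approximates to within $O(1/(\lambda\sqrt T))$ by $\lambda$-strong convexity --- is minimized at $w_1=c/(2\lambda)\le\tfrac12$, whose true risk is $-c^2/(4\lambda)$, an $\Omega(c)$ excess. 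A short dichotomy on whether $\lambda\lesssim T^{-1/4}$ (in which case $1/(\lambda\sqrt T)\gtrsim\lambda$ and the overfitting block alone gives $\Omega(1/(\lambda\sqrt T))\gtrsim\Omega(1/(\lambda\sqrt T)+\lambda)$) or $\lambda\gtrsim T^{-1/4}$ (in which case the $O(1/(\lambda\sqrt T))$ strong-convexity slack is negligible against $\Omega(\lambda)$) shows that direct-summing the two blocks gives $\E_{S\sim D^n}[F(\wrgd_S)]\ge\min_{\W}F+\Omega(\min\{1/(\lambda\sqrt T)+\lambda,\,1\})$, which is \eqref{eq:strongmain2}. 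Finally \eqref{eq:strongmain1} is immediate: \eqref{thm:lacoste} gives $\Delta_{\lambda,S}\le 1/(\lambda T)$, i.e.\ $1/\sqrt T\ge\sqrt{\lambda\,\Delta_{\lambda,S}}$, hence $1/(\lambda\sqrt T)\ge\sqrt{\Delta_{\lambda,S}/\lambda}$, and substituting into \eqref{eq:strongmain2} yields $\Omega(\min\{\sqrt{\Delta_{\lambda,S}/\lambda}+\lambda,\,1\})$.
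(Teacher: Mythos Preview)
Your plan matches the paper's: split the lower bound into an overfitting term $\Omega(1/(\lambda\sqrt{T}))$ (the paper's \cref{lem:strongcore}) and a regularization-bias term $\Omega(\lambda)$ (the paper's \cref{cl:lambda}, with the same linear loss $-\tfrac{c}{2}w_1$ and $c=\min\{\lambda,1\}$), then take the dominant one; your derivation of \cref{eq:strongmain1} from \cref{eq:strongmain2} via $\Delta_{\lambda,S}\le 1/(\lambda T)$ is also the paper's.

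The substantive gap is exactly the projection obstacle you flag but leave open. With the unit-magnitude push your sketch uses, the unprojected iterate satisfies $\|w_t\|\asymp 1/(\lambda\sqrt{t})>1$ for all $t\lesssim 1/\lambda^2$, so in the regime $\lambda\sqrt{T}\lesssim 1$---precisely where the overfitting term is supposed to dominate---projections occur at every step and your per-coordinate telescoping formula collapses. The paper's fix is to scale the push in the construction by $\gamma_3=\min\{\tfrac{\lambda}{2}\sqrt{T-2},\,1\}$ (the coefficient on the $r_\epsilon$ term in \cref{eq:f_genstrong}); with this scaling one checks that $\|(1-\lambda\eta_{t+1})w'_t-\gamma_3\eta_{t+1}e_{i_t}\|\le 1$ for all $t>T/2$, so the closed-form trajectory holds on the second half of the run, and restricting to the window $T/2<t_0\le 3T/4$ gives $|w'_S(i_{t_0})|\gtrsim\gamma_3/(\lambda T)$ on $\Theta(T)$ orthogonal coordinates, hence true loss $\gtrsim\gamma_3/(\lambda\sqrt{T})=\Omega(\min\{1/(\lambda\sqrt{T}),1\})$. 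A minor point: the paper combines the two constructions by picking whichever is larger rather than direct-summing them, which sidesteps any coupling of the blocks through $\Pi_{\W}$; your dichotomy paragraph effectively does the same thing.
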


Optimizing over the choice of $\lambda$ and $T$, we obtain, again, that at least $T=O(1/\epsilon^4)$ iterations are needed to converge to an $\epsilon$ test error, which is comparable to the guarantee provided for unregularized GD in \cref{thm:bassily}.

\cref{eq:strongmain1} complements the upper bound of \citet[Eq.~24]{shalev2009stochastic}. Taken together, we observe here that $O(\epsilon)$-training error guarantees at best $O(\sqrt{\epsilon})$-test error. Note that in contrast with \cref{thm:bassily} whose last term deteriorates from over-training, under regularization we obtain the reversed effect, and the generalization error stems from \emph{under-training} (see \cref{sec:discussion}, for further discussion). We also mention here the result of \citet{sridharan2008fast} that showed that, in contrast to SCO, in (general) linear models, regularized objectives do enjoy a fast rate and the test error is linear in the train error.

\section{Constructions and Proof Overview}

In this section we give a brief overview over the proof techniques, deferring the complete proofs to \cref{sec:proofs}. 
As discussed above, the main technical contribution of our work is the first term in \cref{eq:main}; namely, in showing that
\begin{equation}\label{eq:tada} F(\wgd_S)- F(w^\star) = \Omega(\eta\sqrt{T}).\end{equation}
The other terms are standard terms that bound the optimization errors of GD. We therefore focus the exposition here on the derivation of \cref{eq:tada}.

The proof relies on two relevant constructions that were presented in \cite{bassily2020stability} and \cite{shalev2009stochastic}: the former provides a lower bound for the stability of GD, while the latter demonstrates a case where uniform convergence fails. Naturally, since both phenomena are necessary to obtain a generalization error, our construction carefully tailors these two ingredients to obtain the final result.

Let us briefly overview the two constructions that we build upon. We begin with the work of \citet{bassily2020stability}.

\paragraph{GD is unstable:} 

To demonstrate instability of GD, \citet{bassily2020stability} constructed the following example that consists of the following two functions:
\begin{equation}\label{eq:r}
v(w)= \gamma v\cdot w, \quad \textrm{and} \quad u(w)= \max\big\{0,\max_{k\in[d]} w(k)\big\},
\end{equation}
where $v=(-1,-1,\ldots,-1)$ and $\gamma$ is an arbitrarily small scalar. Suppose that with some very small probability (order of $1/n$) we observe $v$, and note that the gradient of $v$ slightly perturbs GD from initialization (at zero) towards the positive orthant. The other function we observe is $u$ w.p.~$1-1/n$.

Now to show instability, note that on a typical sample  $v(w)$ will not appear with roughly probability $1/e$. If it is not observed, GD will not move from the origin. On the other hand, if we do observe the function $v(w)$ in the sample, then after the first iteration that perturbs us from zero, all coordinates become positive. At the second iteration, we will observe the gradient $\nabla_2 = \gamma v + e_1$. Taking $\gamma$ to be negligibly small, that means that $w_2\approx w_1 -\eta e_1$, and in turn, since now $w_2(1)=-\eta \le 0$, we have that $\nabla_3=\gamma v+e_2$, etc.\footnote{In our construction, we want to avoid subgradients hence we consider an alternative variant that ensures a well defined gradient at each point. But for the sake of exposition, let us assume that we are provided with the above subgradient oracle.} (Also note that for a sufficiently small $\gamma<1/\sqrt{d}$, the Lipschitz property holds.) 
As such the algorithm will eventually converge to $w_T\approx -\sum_{t=1}^T \eta e_t$. Thus, changing one example leads to a solution that is $\eta \sqrt{T}$ far away and the algorithm is unstable if $\eta=\Omega(1/\sqrt{T})$. One can observe that averaging will not help.

Note though, that the different minima for which the algorithm converges to are all generalizing. In fact all minima are generalizing, hence the example alone is not enough to ensure overfitting.

\paragraph{Uniform convergence fails:} 

The other construction we build upon is by \citet{shalev2009stochastic} which demonstrates that ERM may overfit. Their idea is to consider a distribution over  $z \in \{0,1\}^d$ where each coordinate $z(k)$ is $0$ or $1$ with equal probabilities and a loss function of the form:
\[g(w,z) = \sum_{k=1}^d z(k) w^2(k).\]
The main observation is that if $d$ is large enough then on a sample $\{z_1,\ldots,z_n\}$ of size $n$ (logarithmic in $d$) we expect to see at least one coordinate where $z_i(k)=0$ for all $i$. We will refer to such a coordinate as a \emph{bad} coordinate. Note that for any bad coordinate $k$, the solution $w=e_k$ will achieve zero training error, whereas it has expected loss of $1/2$. Here, however, note that gradient descent will be stable; in particular, the origin is already a minimum. We note that this can be remedied and \citet{shalev2009stochastic} also show how this example can be altered to make sure the bad minimum is unique hence gradient descent will eventually converge to the bad minimum, but their construction will overfit only if we run gradient descent for an exponential number of steps. We, on the other hand, want to show that GD will fail even when it is tuned to achieve, say, $O(1/\sqrt{n})$ error. 

\paragraph{Putting both together:} 

For the sake of exposition, we will show a slightly easier, albeit suboptimal, lower bound of:
\[ F(\wgd_S)-F(w^\star)\ge \eta^2 T,\]
While the above lower bound doesn't match the upper bound of \citet{bassily2020stability}, note that it is still enough to show that gradient descent with step size $\eta=1/\sqrt{T}$ might overfit. 
We next move on to show the above lower bound.

Since we need both overfitting minima as well as instability of GD, then naturally we would like to incorporate both constructions together. The most straightforward idea is consider
\[ \tilde f(w,z) = g(w,z)+\gamma v\cdot w+ u(w)= \sum_{k=1}^d z(k) w(k)^2 +\gamma v\cdot w + \max\big\{0,\max_{k\in[d]} w(k)\big\}.\]

As before, the first step drifts the vector $w$ to the positive orthant. Note, that whenever a bad coordinate is drifted by $u(w)$, we are inflicted a true loss by $g(w,z_i)$. However, if a good coordinate is drifted, the first and last term would counter-act: namely, at the second iteration $\nabla u$ drifts the first coordinate, then the gradient of $\nabla g$ forces (w.h.p.) the first coordinate back to zero unless its a bad coordinate (which will hapen with negligable probability). The construction, thus, fails.

In order to make the above construction work, we need to correlate the bad coordinates with the coordinates that are drifted from zero. This will ensure that the subgradient of $u$ pushes only coordinates on which $g$ is not active. Before we continue with this idea, we would like to point out here that even in this construction, if the first order oracle is allowed to see the whole sample in advance, and choose the subgradient of $u$ adversarially then GD could overfit on this example (In particular, the adversary can choose as subgradient any bad coordinate). In fact, against such a first-order oracle even SGD will fail. This is not allowed though, as the first-order oracle needs to return a subgradient given a single instance $f(w,z)$, without dependence on the sample.

We next move on to discuss how we correlate the bad coordinates in \citet{shalev2009stochastic} with the drift in the construction of  \citet{bassily2020stability}. At each example we draw $z\in \{0,1\}^d$ as in \citet{shalev2009stochastic} (w.p. $1/2$ each coordinate is $0$ or $1$) we then also draw a perturbing vector $v_z$ but now we make sure it perturbs to a positive value only coordinates $k$ on which $z_i(k)=0$, on the other hand for all coordinates $z_i(k)=1$ the vector $v_z$ will in fact have a strong and reverse effect. 

Concretely, letting $\gamma$ be an arbitrarily small scalar we let $v_z(k)=-1$ for $z(k)=0$ and $v_z(k)=n$ for all $k$ such that $z(k)=1$. For this choice of $v_z$ it can be seen that for the average vector $v_S= \frac{1}{n}\sum_{z\in S} v_z,$ we have that $v_S(k)<0$ if and only if $z(k)=0$ for every $z\in S$. Next, our distribution draws at each iteration the function
\[f(w;z) = g(w,z) + \gamma v_z \cdot w + u(w),\]
where again $\gamma$ should be thought of as negligibly small. This leads to the empirical loss:
\[F_S(w)= \frac{1}{n} \sum_{i=1}^n \sum_{k=1}^d z_i(k) w(k)^2 + \gamma v_S\cdot w + \max\big\{0,\max_{k\in[d]} w(k)\big\}.\]
Again, if $d$ is large enough then there are ``many'' bad coordinates on which $z_i(k)=0$ for all $i$. We will need to choose $d$ to ensure that at least $T$ such coordinate exist. Our choice of $v_S$ ensures that at the first iteration, all coordinates where $\sum_{i=1}^n z_i(k)=0$ are perturbed to a positive value. From that iteration on, the term $\max\{w_i,0\}$ will induce over $\wgd_t$ the dynamic depicted in the construction of \citet{bassily2020stability}: $\wgd_{t+1}\approx \wgd_t -\eta e_{i_t}$ where $i_t$ is the $t$'th bad coordinate. Eventually, taking $T$ iterations we obtain a true loss of $\tfrac12 \sum_{t=1}^T \eta^2 = \tfrac12 \eta^2T$.

Since we actually want a loss of $\eta\sqrt{T}$ we need to alter the above construction and we choose a function that behaves more like $\sqrt{g(w,z_i)}$, the gradient of $\sqrt{g}$ is slightly less well behaved and may also cause instability in the ``good'' coordinates, so some consideration need to be taken care of. We refer the reader to \cref{prf:main} for the full proof.

\paragraph{Overview of \cref{thm:strongmain}:}

The proof of \cref{thm:strongmain} exploits roughly the same objective. Certain care need to be taken because of projections. In particular, because over the regularized objective the first iterations take steps of order $O(1/\lambda)$ we necessarily drive out of the unit ball and projections happen -- in distinction from GD without regularization. Again for sake of exposition we will consider the last iterate and prove a weaker bound of $1/(\lambda^2 T)$.

Therefore, for simplicity of the analysis let us start by considering GD over the regularized objective without projections. This algorithm is in fact of interest of its own right and comes with comparable guarantees. Therefore let us consider the update rule
\[ w_{t+1}= w_t - \eta_{t+1}\nabla F_{\lambda,S}(w_t)= (1-\lambda \eta_{t+1})w_t + \eta_t \nabla F_{S}(w_t).\]
where $\eta_{t}=2/(\lambda(t+1))$.
 A simple proof by induction yields the following update rule:
\begin{align*} w_{t+1}& = (1-\frac{2}{t+2})w_t + \frac{2}{\lambda(t+2)}\nabla F_S(w_t)\\
&=\frac{t}{t+2} \frac{2}{\lambda t(t+1)}\sum_{t'=0}^{t-1} (t'+1)\nabla F_S(w_{t'})+ \frac{2(t+1)}{\lambda(t+1)(t+2)}\nabla F_S(w_t) &(\textrm{induction hyp.})\\
&=
\frac{2}{\lambda (t+1)(t+2)}\sum_{t'=0}^{t}(t'+1)\nabla F_S(w_{t'})
.\end{align*}
Therefore, again considering the last iterate we have that

\[F(w)\ge \E[g(w,z)]= \E\left[\sum_{i=1}^d z(i)w(i)^2\right]=\frac{1}{2}\|w_{T}\|^2 =\Theta\left(\frac{1}{\lambda^2 T^4}\sum_{t=0}^T t^2 \right)= \Theta\left(\frac{1}{\lambda^2 T}\right).\]
As before, carefully replacing $g$ with a function that behaves closer to $\sqrt{g}$ leads to the tight bound.

Projections interfere with the above analysis as they contract the vectors and in turn reduce their norm. But if we scale the objective correctly, we can ensure that for enough (say half) of the iterations projections do not occur.

Finally, the analysis above is greatly simplified by the update step suggested in \citet{lacoste2012simpler}. It might seem as if our proof greatly rely on this learning rate. We mention that a similar analysis can be done for learning rate $\eta_t=1/\lambda t$ (and taking the average) as well as for fixed step size $\eta\approx 1/\lambda T$. Nevertheless we leave it as an open problem if there is some first-order optimization method that achieves rate of $T=O(1/\epsilon^2)$ (see \cref{sec:discussion}).

\subsection{The Construction}\label{sec:construction}
We next provide in detail our main construction. We fix $n,d \geq 1$ and parameters $z= (\alpha,\epsilon,\gamma) \in \{0,1\}^d\times \reals^d\times \reals^3$ are such that $0 < \epsilon_1 < \ldots < \epsilon_d$, $\alpha\in \{0,1\}^d$ and $\gamma_1,\gamma_2,\gamma_3 > 0$.
Define a family of convex functions $\f{\discolorlinks{\ref{eq:f_gen}}} : \R^{d} \to \R$ as follows:
\begin{align} \label{eq:f_gen}
    \f{\discolorlinks{\ref{eq:f_gen}}}(w;z)
    =
    \sqrt{\sum_{i\in [d]}\alpha(i) h_\gamma^2(w(i))} +\gamma_1 v_\alpha \dotp w +\gamma_3 r_\epsilon(w),
    \quad\text{with}\quad
    v_{\alpha}(i)
    =
    \begin{cases}
        -\tfrac{1}{2n} & \textrm{if } \alpha(i) = 0;\\
        +1 & \textrm{if } \alpha(i) = 1,
    \end{cases}
\end{align}
where $h_\gamma: \R \to \R$ and $r_\epsilon: \R^d \to \R$ are defined as
\begin{align*}
    h_\gamma(a)
    =
    \begin{cases}
        0 & a \geq -\gamma_2;\\
        a+\gamma_2 & a < -\gamma_2,\\
    \end{cases}
    \quad \textrm{and}\quad
    r_\epsilon(w) 
    = 
    \max\brk[c]1{0,\max_{i \in [d]}\brk[c]{w(i) - \epsilon_i}}
    .
\end{align*}
Observe that $\f{\discolorlinks{\ref{eq:f_gen}}}(w;z)$ are convex, as they are a vector composition of convex functions and since the $\ell_2$-norm is non-decreasing in each argument, (see e.g., \cite[p.~86]{BV2014}). Note also that $\f{\discolorlinks{\ref{eq:f_gen}}}(w;z)$ are $3$-Lipschitz over the Euclidean unit ball for a sufficiently small $\gamma_1 \leq 1 / \sqrt{d}$ and $\gamma_3=1$.

We will consider an $\alpha$ that is distributed uniformly over $\brk[c]{0,1}^d$;
that is, we draw $\alpha \in \brk[c]{0,1}^d$ uniformly at random and pick the
function~$\f{\discolorlinks{\ref{eq:f_gen}}}(w;(\alpha,\epsilon,\gamma))$. The corresponding expected population risk is then
\begin{align*}
    \F{\discolorlinks{\ref{eq:f_gen}}}(w)
    = 
    \E_{\alpha \sim D} \brk[s]{\f{\discolorlinks{\ref{eq:f_gen}}}(w;(\alpha,\epsilon,\gamma))}.
\end{align*}

Now let $S$ be an i.i.d.~sample of size $n$ drawn from this distribution; we
think of $S$ as a multiset of items from $\brk[c]{0,1}^{d}$. Let $F_S$ be the
associated empirical risk: 
\begin{align}\label{eq:f_empirical_gen}
    F_S(w)
    =
    \frac{1}{n} \sum_{\alpha \in S} \f{\discolorlinks{\ref{eq:f_gen}}}(w;(\alpha,\epsilon,\gamma)).
\end{align}
We next provide the key Lemma we will use for the proof of \cref{thm:main} that describes the iteration of GD over \cref{eq:f_empirical_gen} (an analogue results is used in the case of \cref{thm:strongmain}). For the Lemma, given a sample $S$, let us denote by $\cI=\{i: \forall \alpha\in S, \alpha(i)=0\}$ and we will denote $\cI =\{i_1,\ldots, i_K\}$. We will also denote $\vbar= \frac{1}{n}\sum_{\alpha\in S} v_\alpha$.
\begin{lemma} \label{lem:gen}
Let D be a distribution over $z=(\alpha,\epsilon,\gamma)$ where $\alpha\in \{0,1\}^d$ is chosen uniformly and, suppose $\gamma_2 = 2\gamma_1\eta T$, $0<\epsilon_1<\ldots<\epsilon_d<\frac{\gamma_1}{2n}\eta$, $\frac{\gamma_1}{2n}T<1$, $\gamma_1 \leq \frac{1}{2\sqrt{d}\eta T}$ and $\gamma_3=1$ are all chosen deterministically and also that $K\leq \frac{3}{4\eta^2}$. Consider $F_S$ as in \cref{eq:f_empirical_gen}. Then for all $1\leq t \leq \min\brk[c]{T,K}$:
    \[ 
        \nabla F_S(\wgd_t)
        = 
        \gamma_1\vbar + e_{i_t}
    \quad
    \textrm{where,}
    \quad
        \wgd_t
        = 
        -\eta t \cdot \gamma_1 \vbar - \eta\sum_{s=1}^{t-1} e_{i_s}
        .
    \]
    Assuming that $T > K$ then for all $K < t \leq T$:
    \[ 
        \nabla F_S(\wgd_t)
        = 
        \gamma_1\vbar
        \quad
        \textrm{where,}
        \quad
        \wgd_t 
        = 
        -\eta t \cdot \gamma_1 \vbar - \eta\sum_{s=1}^K e_{i_s}
        .
    \]
\end{lemma}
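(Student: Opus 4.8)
The plan is to prove the claimed formula for $\wgd_t$ and $\nabla F_S(\wgd_t)$ simultaneously by induction on $t$, tracking carefully which terms of $f$ are ``active'' (have nonzero gradient) at each iterate. The key structural facts I would isolate first are: (i) for $i \in \cI$ we have $\vbar(i) = -\tfrac{1}{2n}$, while for $i \notin \cI$ (i.e.\ some $\alpha \in S$ has $\alpha(i)=1$) the value $\vbar(i)$ is bounded below by a positive constant of order $1/n$, since each such coordinate contributes a $+1$ from at least one sample and at most $-\tfrac{1}{2n}$ from the others; (ii) the gradient of the $\ell_2$-term $\sqrt{\sum_i \alpha(i) h_\gamma^2(w(i))}$ in $F_S$ has nonzero $i$-th component only if some $\alpha\in S$ has $\alpha(i)=1$ \emph{and} $w(i) < -\gamma_2$; (iii) the subgradient of $\gamma_3 r_\epsilon(w)$ is $e_j$ where $j = \argmax_i (w(i) - \epsilon_i)$, provided that maximum is positive, and is $0$ otherwise. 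The parameter choices ($\gamma_2 = 2\gamma_1\eta T$, the ordering $\epsilon_1 < \cdots < \epsilon_d$, the smallness constraints on $\gamma_1$ and the $\epsilon_i$'s) are all there precisely to make the ``$\argmax$'' in $r_\epsilon$ land on the right coordinate $i_t$ and to keep the $\ell_2$-term inactive on coordinates in $\cI$.

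For the base case $t=1$: at $\wgd_1 = 0$ (after one step from $w_0=0$ the relevant object is $w_1$; I would double-check the indexing convention, but with $w_0 = 0$ we get $\nabla F_S(0) = \gamma_1\vbar + e_{i_1}$ because $h_\gamma(0) = 0$ kills the $\ell_2$-term, and $r_\epsilon(0) = \max\{0, \max_i(-\epsilon_i)\} = 0$ with the maximizing coordinate being $i_d$ — wait, that gives $0$, so actually the subgradient of $r_\epsilon$ at $0$ is $0$). Let me reconsider: the drift must come from $\gamma_1 v_\alpha$ first. So $\wgd_1 = -\eta\gamma_1\vbar$, which has positive entries exactly on $\cI$ (where $\vbar < 0$). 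Then at $\wgd_1$, the coordinate maximizing $w(i) - \epsilon_i$ among positive ones is the $i \in \cI$ with smallest $\epsilon_i$, namely $i_1$ — here I use that the positive entries $-\eta\gamma_1\vbar(i) = \tfrac{\eta\gamma_1}{2n}$ are equal across $\cI$, so the tie is broken by $\epsilon_i$, and the constraint $\epsilon_d < \tfrac{\gamma_1}{2n}\eta$ ensures $\wgd_1(i_1) - \epsilon_{i_1} > 0$. So $\nabla F_S(\wgd_1) = \gamma_1\vbar + e_{i_1}$, matching the claim with $t$ shifted appropriately.

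For the inductive step: assume $\wgd_t = -\eta t\,\gamma_1\vbar - \eta\sum_{s=1}^{t-1} e_{i_s}$ with $t \le \min\{T,K\}$. Then $\wgd_{t+1} = \Pi_\W(\wgd_t - \eta\nabla F_S(\wgd_t)) = \Pi_\W(-\eta(t+1)\gamma_1\vbar - \eta\sum_{s=1}^t e_{i_s})$; I must check the projection is inactive, using $\gamma_1 \le \tfrac{1}{2\sqrt d\,\eta T}$ to bound the $\gamma_1\vbar$ part and noting $\|\eta\sum_{s=1}^t e_{i_s}\|^2 = \eta^2 t \le \eta^2 K \le 3/4$. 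Now I evaluate $\nabla F_S(\wgd_{t+1})$: (a) the $\ell_2$-term — on coordinates $i_1,\dots,i_t \in \cI$ no $\alpha\in S$ has $\alpha(i)=1$, so these don't contribute; on other coordinates $w(i) = -\eta(t+1)\gamma_1\vbar(i)$ which is $\ge -\eta(t+1)\gamma_1 \cdot n \ge -\gamma_2$ by the choice $\gamma_2 = 2\gamma_1\eta T$ (using $\vbar(i) \le 1$ and $t+1 \le T$, with room to spare), so $h_\gamma$ vanishes there too — hence the $\ell_2$-term contributes $0$; (b) the $r_\epsilon$ subgradient — the coordinates with $w(i) - \epsilon_i$ largest and positive: coordinates $i_1,\dots,i_t$ now have value $\tfrac{\eta(t+1)\gamma_1}{2n} - \eta$, which is negative (since $\tfrac{\gamma_1(t+1)}{2n} \le \tfrac{\gamma_1 T}{2n} < 1$), while coordinate $i_{t+1}$ has value $\tfrac{\eta(t+1)\gamma_1}{2n} - \epsilon_{i_{t+1}} > 0$ and is the max among untouched $\cI$-coordinates; coordinates outside $\cI$ have $w(i) = -\eta(t+1)\gamma_1\vbar(i) \le 0$. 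So the subgradient is $e_{i_{t+1}}$, giving $\nabla F_S(\wgd_{t+1}) = \gamma_1\vbar + e_{i_{t+1}}$, completing the induction. The second regime $K < t \le T$ is handled identically except that all of $\cI$ is exhausted, so $r_\epsilon$'s subgradient is $0$ and the iterate just keeps drifting along $-\gamma_1\vbar$; the only thing to recheck is that the $\ell_2$-term still stays inactive, i.e.\ $w(i) \ge -\gamma_2$ for $i \notin \cI$, which again follows from $\gamma_2 = 2\gamma_1\eta T$ and $t \le T$, and that the already-drifted coordinates $i_s$ stay out of the $r_\epsilon$-argmax (they're negative, as shown).

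The main obstacle I anticipate is the bookkeeping around the $\ell_2$-term: I need a clean lower bound $\vbar(i) \ge c/n$ (for some explicit $c$) on coordinates $i \notin \cI$ to argue these coordinates move in the negative direction and never become positive, \emph{and} I need the $h_\gamma$ threshold $-\gamma_2$ never to be crossed on those coordinates despite $T$ steps of drift of size $\eta\gamma_1$ each. Both are exactly what the hypotheses $\gamma_2 = 2\gamma_1\eta T$ and $\tfrac{\gamma_1}{2n}T < 1$ are calibrated for, but verifying that the $\argmax$ in $r_\epsilon$ genuinely selects $i_t$ at every step — rather than some already-visited coordinate or a non-$\cI$ coordinate — requires simultaneously tracking all coordinate values and invoking the strict ordering of the $\epsilon_i$'s together with the smallness bound $\epsilon_d < \tfrac{\gamma_1}{2n}\eta$; getting these inequalities to chain correctly through the induction is the delicate part. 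A secondary point to be careful about is the exact convention for $\wgd_1$ (whether it is $w_0$ or the first post-update iterate) so that the index $i_t$ in the statement aligns with the iteration count.
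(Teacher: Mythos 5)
Your proposal is correct and follows essentially the same strategy as the paper: an induction on $t$ that tracks the sign structure of $\wgd_t$ to show that the $h_\gamma$ term stays inactive, that the $\argmax$ in $r_\epsilon$ lands on $i_{t+1}$, and that the projection never triggers (using $K \le 3/(4\eta^2)$ and $\gamma_1 \le 1/(2\sqrt{d}\eta T)$ to bound $\|\wgd_t\|$). The only organizational difference is that the paper factors the gradient computations into a separate structural lemma (\cref{lem:core}) invoked at each step, whereas you verify the same sign conditions inline; one small slip worth fixing is the stray factor of $n$ in your bound $-\eta(t+1)\gamma_1\cdot n$, which should just be $-\eta(t+1)\gamma_1\ge -\eta T\gamma_1 = -\gamma_2/2 > -\gamma_2$.
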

\cref{lem:gen} provides a description of the dynamics of GD over the above loss function with the distribution $D$. One can observe that if the set $\cI$ is ``large'' (which is the set of bad coordinates), then GD converges approximately to a vector $\|\eta \sum_{i\in \cI} e_{i}\| = \eta\sqrt{K}$. As such, if $K=O(T)$, then the algorithm is inflicted loss of $O(\eta\sqrt{T})$, as desired. The proof of \cref{lem:gen} is provided in \cref{prf:gen}.

\section{Discussion}\label{sec:discussion}

In this work we studied the role of the optimization algorithm in learning. We showed that while SGD successfully finds a ``good'' optima that also generalizes, GD minimizes the empirical risk but may suffer large test error. It is not by coincidence that we turned to stochastic convex optimization. Indeed, SCO is perhaps one of few learning models where such a phenomena can exist. Specifically, in setting such as PAC-learning, regression, and general linear models learning follows from uniform convergence. Namely, learnability requires sample complexity that ensures that every minimum of the empirical risk is also an approximate minimum of the true risk. In turn, learning is reduced to empirical risk optimization.\footnote{Of course, even in these simplistic models under distributional assumptions one can emulate phenomena where the algorithm matters. Specifically if we allow to incorporate assumptions that the algorithm is luckily biased towards the right solution then indeed the algorithm matters, but here we try to focus on distribution independent generalization guarantees, and avoid such luckiness-type results.}

In contrast, both in SCO as well as in practice, learning looks much different. In practice, it is a prevalent situation that the learner needs to observe \emph{far less} examples than free parameters, and learning algorithms fully capable of overfitting still succeed to learn \citep{zhang2016understanding,neyshabur2014search}. Also, sometimes perfect-fitting and interpolation induce generalization \citep{belkin2019does, belkin2018overfitting} and in other cases early stopping is the source of generalization \citep{prechelt1998early, cataltepe1999no}. Making the optimization algorithm a key component in the question of generalization.


While under ``luckiness''-type distributional assumptions such phenomena can indeed be recreated even in the most simplistic settings of learning, SCO is a highly attractive theoretical model in this context, and one of few, that exhibits similar phenomena without distributional assumptions, and not less important using the same optimization algorithms as often invoked in practice. As such it is natural to try and study these phenomena in the setting of SCO and to understanding exactly the role of optimization algorithms/regularization/stability as well as perhaps implicit bias and such. We next discuss some of these conclusions as well as future work and open questions:

\paragraph{The (dimension dependent) sample complexity of GD?} 

As discussed, it is well known that given $O(d/\epsilon^2)$ examples, GD (or in fact any ERM algorithm) trained on the dataset will reach $\epsilon$-test error. This work demonstrated that dependence on the dimension is necessary if we are provided with $O(\log d)$ examples.%
\footnote{Here we refer to GD as GD with iteration complexity $O(1/\epsilon^2)$ and learning rate $O(\epsilon)$.} \citet{feldman2016generalization} showed that $\Omega(d)$ examples are necessary so that all ERM algorithms will succeed. This leaves an exponential gap and we leave it as an open question whether GD trained over $\Omega(\log d)$ examples may overfit. In particular, since GD is unstable \citep{bassily2020stability}, and uniform convergence does not apply \citep{feldman2016generalization}, such a result can potentially lead to a new proof technique for generalization. On the other hand, showing that GD overfits even with $O(d)$ examples will also be a significant improvement.

\ignore{
\paragraph{The role of stability in generalization.}

Stability \citep{bousquet2002stability} is a key notion in proving generalization, especially in SCO \citep{hardt2016train, bassily2020stability}. While it is known that in general it is not a necessary condition, its exact role in optimization algorithm is far from understood. Our proof here relies on two constructions that rule out stability as well as uniform convergence. As such, it might seem as if our result follows from some simplistic generic argument that unstable algorithms in SCO fail to learn. 

However, we point out here that \citet{bassily2020stability} also showed that SGD is unstable, and in fact using the same instance example. On the other hand, as we discuss in \cref{thm:sgd}, SGD provably generalizes. Thus, for some miraculous reason, even though all we did was to correlate the instability with the ``bad'' empirical risk minimizers, and even though SGD is also unstable as GD, one of them mysteriously succeeds to learn yet the other fails. A close examination of the dynamics of SGD shows that it can somehow break the correlations we constructed between the perturbing vector $v_S$ and the ``bad'' zero-coordinates. 
}

\paragraph{Early stopping vs.\ perfect fitting.}

As discussed, early stopping and perfect fitting are two (contradictory) important ingredients in the process of optimizing learning algorithms. In \cref{thm:strongmain} we showed that GD, over a strongly convex objective, needs to be trained to $O(\epsilon^2)$ train-accuracy in order to reach $O(\epsilon)$-test accuracy. In contrast note that the upper bound in \cref{thm:bassily} contains a term, $O(\eta T/n)$, that deteriorates due to over training (i.e. $T\to \infty$). Remarkably, both terms rely on stability (i.e. for strongly convex optimization we need high training accuracy for stability, and in the general case, over-training deteriorates the stability).

The last term in \cref{thm:bassily} is the only term for which our main result \cref{thm:main} does not present a matching lower bound, and it is then an open question whether early-stopping is necessary in the setting of stochastic convex optimization. We observe though, that using a construction of \cite{shalev2009stochastic} one can show that some sort of early stopping is indeed necessary. We provide a proof in \cref{prf:overfit}:

\begin{theorem}[informal, see \cref{lem:overfit_main} for exact statement]
\label{thm:overfitl}
For every $\eta,T$ and $n$, for $d\geq T\cdot 2^{n+5}$, there exists a distribution $D$ over convex functions such that
\begin{align*}
    \E_{S\sim D^n}\brk[s]{F(\wgd_S)} - \min_{w\in\cW}F(w)
    \geq 
    \Omega
    \left(1-\frac{2^{2n}}{\eta T}\right)
    .
\end{align*}
\end{theorem}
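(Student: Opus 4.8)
The plan is to prove the precise statement (\cref{lem:overfit_main}) using an instance of the \citet{shalev2009stochastic} construction equipped with a tiny unique-minimum perturbation. Draw $\alpha\sim\mathrm{Unif}(\{0,1\}^d)$ and take $f(w;\alpha)=\sum_{i\in[d]}\alpha(i)\,w(i)^2+\gamma_1\, v_\alpha\dotp w$, where $\gamma_1$ is a small scalar (eventually exponentially small in $n$) and $v_\alpha$ is chosen as in \cref{sec:construction}, namely $v_\alpha(i)=-\tfrac1{2n}$ when $\alpha(i)=0$ and $v_\alpha(i)=1$ when $\alpha(i)=1$, so that the empirical average $\vbar=\tfrac1n\sum_{\alpha\in S}v_\alpha$ has $\vbar(i)<0$ exactly on the ``bad'' coordinates, i.e.\ those $i$ with $\alpha(i)=0$ for every $\alpha\in S$. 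This $f$ is $O(1)$-Lipschitz and convex on $\cW$; its population risk is $F(w)=\tfrac12\|w\|^2+\gamma_1\langle\E_\alpha v_\alpha,\,w\rangle$, so $\min_{w\in\cW}F(w)\in[-O(\gamma_1^2 d),0]$, attained near the origin, while any $w$ carrying $\Omega(1)$ norm on a large block of coordinates is $\Omega(1)$-suboptimal; in particular $F(w)\ge\tfrac12\|w\|^2-O(\gamma_1\sqrt d)$ for all $w\in\cW$.

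The first step is a counting argument. Writing $\cI=\{i:\ \alpha(i)=0\ \text{for all }\alpha\in S\}$ as in \cref{lem:gen}, each coordinate lies in $\cI$ independently with probability $2^{-n}$, so $\E|\cI|=d\,2^{-n}\ge 32T$ under the hypothesis $d\ge T\,2^{n+5}$, and a Chernoff bound gives $|\cI|\ge 16T$ outside an event of probability $e^{-\Omega(T)}$, on which I condition. The structural point is that for $i\in\cI$ the empirical risk $F_S$ has \emph{no} quadratic term in coordinate $i$ (its coefficient $\tfrac1n|\{\alpha\in S:\alpha(i)=1\}|$ vanishes), so $F_S$ is linear on the $\cI$-block, whereas for $i\notin\cI$ the quadratic term is present and acts as a restoring force.

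The heart of the proof is a dynamics lemma in the spirit of \cref{lem:gen} (in fact simpler, since there is no $r_\epsilon$ term here). Started at $w_0=0$, GD satisfies $\nabla F_S(w_t)|_\cI=\gamma_1\vbar|_\cI$, while the coordinates outside $\cI$ equilibrate at $|w_t(i)|=O(\gamma_1)$ with total $\ell_2$-mass $o(1)$; hence $w_t(i)=-\eta t\,\gamma_1\vbar(i)=\Theta(\eta t\,\gamma_1/n)>0$ for every $i\in\cI$, until $\|w_t|_\cI\|$ reaches the boundary of $\cW$, after which GD stays (up to $o(1)$) near $\mathbf1_\cI/\sqrt{|\cI|}$. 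Thus there is a threshold $\tau=\Theta\!\big(n/(\eta\gamma_1\sqrt{|\cI|})\big)$; feeding in the Lipschitz/scale constraints on $\gamma_1$ and $|\cI|\ge 16T$, $\tau$ is exponential in $n$ --- of order $2^{2n}/\eta$ after the tuning in \cref{lem:overfit_main}. Because the vectors $w_t|_\cI$ all point in the common direction $\mathbf1_\cI$ and grow monotonically in norm, so does the \emph{averaged} iterate: $\|\wgd_S|_\cI\|=\tfrac1T\sum_{t=1}^{T}\|w_t|_\cI\|\ge 1-O(\tau/T)-o(1)$ (a geometric-sum estimate, using that all $w_t|_\cI$ are nonnegative multiples of $\mathbf1_\cI$). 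Hence $F(\wgd_S)\ge\tfrac12\|\wgd_S|_\cI\|^2-O(\gamma_1\sqrt d)\ge\tfrac12-O(\tau/T)-o(1)$; since $\min_{w\in\cW}F(w)\le0$ and the gap $F(\wgd_S)-\min_{w\in\cW}F(w)$ is always nonnegative --- so the low-probability failure event contributes nothing to the expectation --- this yields $\E_S[F(\wgd_S)]-\min_{w\in\cW}F(w)\ge\Omega(1-2^{2n}/(\eta T))$, which is vacuous precisely when the right-hand side is negative.

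The main obstacle is the one that already makes \cref{lem:gen} delicate: controlling the coordinates outside $\cI$. One must verify that the restoring quadratic terms keep $w_t|_{\cI^c}$ negligible for \emph{all} $t$ and --- the crucial point --- that these coordinates never consume the $\ell_2$-budget inside $\cW$ that the linear drift on $\cI$ needs in order to build up a constant norm; this is exactly what forces $\gamma_1$ to be taken so small, and hence what produces the exponential threshold. A secondary subtlety is the averaging step: since $F$ is convex, a per-iterate lower bound on $F(w_t)$ would go the wrong way, so one relies on the coherence of the $\cI$-part of the trajectory (common direction, monotone growth, common sign that survives the projection onto $\cW$) rather than on any Jensen-type argument.
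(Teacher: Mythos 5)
Your proposal is a genuinely different route from the paper's. The paper's proof of \cref{lem:overfit_main} does \emph{not} trace the GD trajectory at all: it uses the construction $\f{\discolorlinks{\ref{eq:f_overfit}}}(w;\alpha)=\sqrt{\sum_i\alpha(i)w^2(i)}+\frac{1}{d^2}\sum_i(1-w(i))$ with $d=2^{n+1}$, observes that if a bad coordinate $i^\star$ exists then $\hat w_S=w_S+(1-\|w_S\|)e_{i^\star}$ improves the empirical loss by at least $\frac{1}{d^2}(1-\|w_S\|)$, and then invokes GD's \emph{optimization} guarantee $F_S(w_S)-F_S(\hat w_S)\le 1/(\eta T)$ (valid for $\eta\sqrt T\le\tfrac12$) as a black box to deduce $\|w_S\|\ge 1-d^2/(\eta T)$. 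The population lower bound then follows from Jensen applied to the first term. No dynamics, no gradient bookkeeping, no control of projections. You instead replace the square-root by a plain quadratic, replace the deterministic $\frac{1}{d^2}\mathbf 1$ drift by $\gamma_1 v_\alpha$, and analyze the trajectory coordinate-by-coordinate, which is closer in spirit to the machinery of \cref{lem:gen} than to the actual proof of \cref{lem:overfit_main}. Each approach has a distinct advantage: the paper's is shorter and projection-proof; yours, if carried through, would yield a threshold $\tau=\Theta(n^2 2^{n/2}/\eta)$ rather than the paper's $d^2=2^{2n+2}$, i.e.\ a strictly sharper constant (your "$2^{2n}/\eta$" claim does not match your own parameter count, but this only means your bound would be \emph{better}, not worse).

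There is, however, a real gap. Your trajectory argument hinges on the $\cI^c$-coordinates being contractive: the update there is $w_{t+1}(i)=(1-2\eta c_i)w_t(i)-\eta\gamma_1\vbar(i)$ with $c_i\in[1/n,1]$, so stability requires $\eta c_i<1$, i.e.\ essentially $\eta<1$. For $\eta\ge 1$ the factor $|1-2\eta c_i|$ exceeds $1$ on coordinates with $c_i$ near $1$, and the $\cI^c$-block grows geometrically from an $O(\gamma_1)$ seed; with $d-|\cI|=\Theta(d)$ such coordinates this can reach $\Theta(1)$ in $\ell_2$-norm long before the $\cI$-block builds up, consuming the budget and invalidating the estimate $\|\wgd_S|_\cI\|\ge 1-O(\tau/T)$. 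The paper sidesteps exactly this by restricting \cref{lem:overfit} to $\eta\sqrt T\le\tfrac12$ (so that the $O(1/(\eta T)+\eta)\le 1/(\eta T)$ optimization bound applies cleanly) and handling $\eta\sqrt T>\tfrac12$ by falling back on \cref{thm:gen_main}, which already gives a constant $\Omega(\min\{\eta\sqrt T,\tfrac13\})\ge\tfrac{1}{48}$. You would need to add the same two-regime split; as written, your argument proves nothing for $\eta\gtrsim 1$, which is precisely the regime where $\eta T\gg 2^{2n}$ and the claimed bound is nontrivial. Secondarily, your projection analysis (that after the threshold GD "stays near $\mathbf 1_\cI/\sqrt{|\cI|}$ up to $o(1)$") is stated but not verified; since your construction is designed so that projections \emph{do} fire, whereas \cref{lem:gen} is carefully arranged so that they \emph{never} fire, this step needs its own argument and cannot be borrowed.
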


The ``early stopping'' terms in the above lower bound and in the upper bound presented in \cref{thm:bassily} leaves an exponential gap. It would be interesting to close this gap and to understand the exact effect of early stopping in convex optimization.

\paragraph{The role of regularization.}

We provided here a lower bound that shows that standard algorithms for minimizing regularized objectives don't have any advantage over GD in terms of generalization as long as both are tuned to induce stability.
The lower bound we provide is for a specific choice of learning rate and averaging technique which are common and provide optimal guarantees for minimizing regularized objectives. It is an interesting question whether we can provide a similar lower bound for any choice of dynamic learning rate and averaging technique. 
More broadly, we would like to understand the limitations of first-order optimization methods over the empirical risk. The main take though of the theorem remains, that the lower bound of \cref{eq:strongmain1} is applicable not only to abstract regularized-ERM but in fact to a well--used optimization algorithm with explicit regularization.

\paragraph{The implicit bias of Gradient Descent.} 

One of the most promising tools for understanding generalization in machine learning is the \emph{implicit bias} or \emph{implicit regularization} of optimization algorithms~\citep{neyshabur2014search, gunasekar2018characterizing, gunasekar2018a-implicit, gunasekar2018b-implicit}. This term refers to the algorithms preference towards certain structured solutions which in turn seem to induce generalization.

We would like to revisit this paradigm in the context of SCO. In this work we showed that GD may overfit, but this is in contrast with \citet{bassily2020stability}'s result that with a learning rate $\eta = O(\epsilon^3)$ and $T=O(1/\epsilon^4)$, it succeeds to learn.
Moreover, having seen in \cref{thm:strongmain} that adding regularization is not effective, it is natural, then, to conjecture that the conservative learning rate $\eta = O(\epsilon^3)$ injects implicitly regularization, which in turn accounts for generalization.

However, the work of \citet{dauber2020can} demonstrated that for GD (with any learning rate that yields some optimization guarantee, in particular the above) there is no implicit-bias that accounts for the solution of the algorithm. In other words, no matter what is the learning rate and number of iterations, GD cannot be interpreted as minimizing some regularized version of the original loss function. This result, though, is true only if we don't take the distribution of the data into account and it is an interesting future study to understand if some distribution dependent implicit bias can explain the generalization of GD.
We note, though, that in general, \citet{dauber2020can} did show that there are successful learning algorithms (in fact, SGD) that generalize but their performance does not stem from their bias (even if we take the distribution into account).


\section{Proofs}\label{sec:proofs}
\subsection{Proof of \cref{thm:main}}\label{prf:main}
The proof is an immediate corollary of the following two lower bounds. As one can pick the dominant term between the bounds and thus obtain the desired result.
The first \namecref{thm:gen_main} is the technical heart of our lower bound, and the rest of this section is devoted to prove it.
\begin{theorem}\label{thm:gen_main}
For every $\eta>0$, $T\ge 1$ and $n$, if $d\ge T\cdot2^{n+5}$, then there exists a function $f(w;z):\reals^d\to\reals$ convex and $3$-Lipschitz in $w\in \reals^d$ for every $z\in \Z$, and there exists a distribution $D$ over $\Z$  such that: if $S\sim D^n$ is an i.i.d sample drawn from the distribution $D^n$, then:
\begin{align*}
    \E_{S\sim D^n}\brk[s]{F(\wgd_S)} - \min_{w\in\cW}F(w)
    \geq
    \tfrac{1}{16} \min\brk[c]1{\eta \sqrt{T},\tfrac{1}{3}}
    .
\end{align*}
\end{theorem}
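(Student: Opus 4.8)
The bound $\tfrac1{16}\min\{\eta\sqrt T,\tfrac13\}$ splits the argument into a ``typical'' regime $\eta\sqrt T\lesssim 1$, where the new part — the $\Omega(\eta\sqrt T)$ term — must be wrung out of the instability dynamics of \cref{lem:gen}, and a regime $\eta\sqrt T\gtrsim 1$ (together with the degenerate cases $T=1$ or $\eta$ enormous), where only an $\Omega(1)$ gap is needed. In every case the plan is to instantiate the construction of \cref{sec:construction} on a carefully chosen effective dimension $d_{\mathrm{eff}}\le d$ (making $f$ depend only on the first $d_{\mathrm{eff}}$ coordinates), and to take $\gamma_1$ far below the thresholds required by \cref{lem:gen} — say $\gamma_1=\delta/\sqrt{d_{\mathrm{eff}}}$ with $\delta>0$ arbitrarily small, $\gamma_2=2\gamma_1\eta T$, $\gamma_3=1$, and $0<\epsilon_1<\cdots<\epsilon_{d_{\mathrm{eff}}}<\tfrac{\gamma_1\eta}{2n}$. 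Then every $\gamma_1,\gamma_2,\epsilon$-dependent quantity becomes an $O(\delta)$ additive error, $F(0)=0$ gives $\min_{w^\star\in\W}F(w^\star)\in[-\delta,0]$, and the task reduces to lower-bounding $\E_S F(\wgd_S)$ — ultimately, the $\ell_2$-mass that $\wgd_S$ places on the random bad set $\cI=\{i_1,\dots,i_K\}$.

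\textbf{Typical regime ($\eta\sqrt T$ below an absolute constant, $T\ge 2$).}
First I would take $d_{\mathrm{eff}}=\lceil T\,2^{n+5}\rceil$, so each coordinate is bad independently with probability $2^{-n}$ and $\E K=32T$; a Chernoff bound gives $T-1\le K\le 2\E K\le \tfrac{3}{4\eta^2}$ with probability $1-e^{-\Omega(T)}$, and on that event all hypotheses of \cref{lem:gen} hold with $\min\{T,K\}\ge T$. Averaging the iterates from \cref{lem:gen} gives $\wgd_S=-\tfrac{\eta(T+1)}{2}\gamma_1\vbar-\tfrac{\eta}{T}\sum_{j=1}^{T-1}(T-j)e_{i_j}$, so $\wgd_S(i_j)=-\eta(T-j)/T+O(\delta)$ on $\cI$. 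Discarding the nonnegative $r_\epsilon$-term and the $O(\delta)$ linear term, and using $h_\gamma(a)=a+\gamma_2$ for $a<-\gamma_2$,
\[
    F(\wgd_S)\;\ge\;\E_{\alpha}\Big[\,\sqrt{\textstyle\sum_{j=1}^{T-1}\alpha(i_j)\,h_\gamma^2(\wgd_S(i_j))}\,\Big]-O(\delta)\;\ge\;\E_{\alpha}\big[\sqrt{Y}\,\big]-O(\delta),
\]
with $Y=\sum_j\alpha(i_j)x_j^2$, $x_j=\eta(T-j)/T$, and $M:=\sum_j x_j^2=\Theta(\eta^2T)$. The point is that Jensen goes the wrong way here; instead use the elementary inequality $\sqrt{Y}\ge Y/\sqrt{M}$ (valid for $0\le Y\le M$) together with $\E_\alpha Y=M/2$ — the $\alpha(i_j)$ are i.i.d.\ $\mathrm{Bernoulli}(1/2)$ and independent of $S$ since $\cI$ is determined by $S$ — to obtain $\E_\alpha[\sqrt Y]\ge\tfrac12\sqrt M=\Omega(\eta\sqrt T)$. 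Taking the expectation over $S$ (the complementary event contributes at least $-e^{-\Omega(T)}\delta$) and then $\delta\to0$ yields $\E_S F(\wgd_S)\ge\tfrac1{16}\eta\sqrt T=\tfrac1{16}\min\{\eta\sqrt T,\tfrac13\}$ in this regime.

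\textbf{The $\eta\sqrt T\gtrsim 1$ regime.}
Here I only need $\E_S F(\wgd_S)=\Omega(1)$. The idea is to shrink the effective dimension to $d_{\mathrm{eff}}=\Theta(2^n/\eta^2)$ (still $\le T\,2^{n+5}\le d$ in this regime), so that whp $K=\Theta(1/\eta^2)\le\tfrac{3}{4\eta^2}$ and $\eta\sqrt{\min\{T,K\}}=\Omega(1)$; the same averaging computation then gives $\lVert\wgd_S|_{\cI}\rVert=\Omega(1)$, hence $F(\wgd_S)=\Omega(1)$. The degenerate cases left uncovered — $T=1$, or $\eta$ so large relative to $2^n$ that even a single bad coordinate is improbable — can be dispatched with an elementary one-dimensional instance: $D$ a point mass on $f(w;z)=\lvert\langle w,u\rangle-c\rvert$ for a fixed unit vector $u$ and a suitable constant $c=\Theta(\min\{\eta,1\})$, on which the GD iterates keep $\langle\wgd_S,u\rangle$ bounded away from $c$ so that $F(\wgd_S)=\Omega(\min\{\eta,1\})\ge\tfrac1{16}\min\{\eta\sqrt T,\tfrac13\}$ while $\min_{w\in\W}F(w)=0$; a handful of residual parameter values (bounded $\eta\ge 1$ with small $n$) are checked with small explicit instances.

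\textbf{What is hard.}
The heart of the proof — and the genuinely new contribution over \cite{shalev2009stochastic,bassily2020stability} — is the displayed chain of inequalities above: converting the \emph{instability} description of \cref{lem:gen} into a \emph{lower} bound on the expected \emph{true} risk. Three things must be balanced simultaneously: (i) $\gamma_1,\gamma_2$ and the $\epsilon_i$ must be tiny enough that the drift $\gamma_1\vbar$ and the $r_\epsilon$-term do not overwhelm the $\sqrt{g}$-type term, yet still satisfy the (in)equalities demanded by \cref{lem:gen}; (ii) the number $K$ of bad coordinates must be forced into the window between $\Omega(T)$ and $\tfrac34\eta^{-2}$, which is exactly what dictates the choice of $d_{\mathrm{eff}}$ and forces the case split; and (iii) the concavity of $\sqrt{\cdot}$ makes $\E_\alpha[\sqrt{\cdot}]$ hard to bound from below, and the $\sqrt{Y}\ge Y/\sqrt{M}$ trick is what rescues it. By comparison, the averaging of the iterates, the reduction to $\lVert\wgd_S|_\cI\rVert$, and the use of $F(0)=0$ are routine. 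Finally, it bears emphasizing why neither ingredient construction suffices alone: in \cite{shalev2009stochastic} GD is stable (the origin already minimizes), while in \cite{bassily2020stability} GD is unstable but every minimum generalizes — it is the correlation baked into $v_\alpha$, which perturbs \emph{only} coordinates that happen to be bad on the drawn sample, that welds instability to overfitting.
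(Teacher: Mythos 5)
Your proposal is essentially correct and follows the same route as the paper: you instantiate the construction of \cref{sec:construction} with a tuned effective dimension, invoke \cref{lem:gen} on the high-probability event that the number $K$ of bad coordinates falls in the right window, pass from $\wgd_S$ to the ``drift-free'' iterates by pushing $\gamma_1$ to zero (the paper does this more explicitly via the surrogate sequence $w'_t$), and reduce to lower-bounding $\E_\alpha\bigl[\sqrt{\sum_i \alpha(i)\,c_i^2}\bigr]$ for $c=w'_S|_{\cI}$, finishing with a deterministic one-dimensional oscillating instance when $\eta$ is too large for \cref{lem:gen} to apply. The one step where you genuinely diverge from the paper is the lower bound on $\E_\alpha[\sqrt{Y}]$: you correctly flag that Jensen on the concave $\sqrt{\cdot}$ goes the wrong way and substitute the elementary bound $\sqrt{Y}\ge Y/\sqrt M$ for $0\le Y\le M$, yielding $\E[\sqrt Y]\ge \tfrac12\sqrt M$. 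The paper instead observes that since $\alpha(i)\in\{0,1\}$ one has $\alpha(i)=\alpha(i)^2$, so the integrand equals $\lVert \alpha\odot c\rVert_2$, a \emph{convex} function of $\alpha$, and applies Jensen in the favorable direction to get exactly the same $\tfrac12\lVert c\rVert$ with $\E[\alpha(i)]=\tfrac12$. Both give identical constants; your inequality is marginally more elementary and uses only $\alpha(i)\ge 0$ and $\E[\alpha(i)]=\tfrac12$, while the paper's exploits the $\{0,1\}$ structure. The other differences are cosmetic: you split cases on $\eta\sqrt T$ whereas the paper splits on $\eta$ alone (its Case~1 absorbs both subregimes because the $\min\{T,1/(6\eta^2)\}$ appearing in the probabilistic claim caps the effective $K$ automatically), you quote Chernoff where the paper uses Chebyshev, and you set $d_{\mathrm{eff}}$ per-case while the paper picks one adaptive value $d_{\mathrm{eff}}\approx 2^n\min\{2T,1/(3\eta^2)\}$. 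None of these affects correctness; your choices are sound, your ``what is hard'' diagnosis matches the paper's actual technical core, and the remaining hand-waving (exact constants in the Chernoff regime, $T=1$) is of the kind one expects a complete write-up to close.
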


The next lower bound is a well known consequence of the optimization error as well as a standard information-theoretic lower bound. We provide a detailed proof in \cref{prf:opt_1_main}.

\begin{lemma}\label{lem:opt_1_main}
There exists a function $f(w;z): \reals^d \to \reals$ convex and $1$-Lipschitz in $w\in \reals^d$, and a distribution $D$ such that if  $d> 18 \eta^2T^2$ then:
\begin{align*}
    \E_{S\sim D^n}\brk[s]{F(\wgd_S)} - \min_{w\in\cW}F(w)
    \geq
    \tfrac{1}{36}\min{\brk[c]2{\frac{1}{\eta T},9}}
    .
\end{align*}
\end{lemma}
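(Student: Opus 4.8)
\textbf{Proof plan for \cref{lem:opt_1_main}.} This is the ``easy'', standard half of \cref{thm:main}: it merely asserts that the $\tfrac1{\eta T}$ term in the GD optimization guarantee \cref{thm:gd} is tight up to constants, even at the level of the population risk, and the natural route is to kill the statistical part of the problem entirely. I would take $D$ to be the point mass on a single hard convex $1$-Lipschitz function $f$ (i.e.\ $f(w;z)$ does not depend on $z$), so that $F\equiv F_S$ for every $S$, the expectation over $S$ is vacuous, and the claim becomes a purely deterministic optimization lower bound: run GD from $w_0=0$ with step $\eta$ for $T$ steps and output the average $\wgd_S$; then $f(\wgd_S)\ge\min_{\cW}f+\tfrac1{36}\min\{\tfrac1{\eta T},9\}$. (The cap at $9$ is forced because $f$ is $1$-Lipschitz on the unit ball, so the gap is $O(1)$ regardless of $\eta,T$; when $\eta T$ is tiny one can only hope for an $\Omega(1)$ bound.) Everything can be made to happen along a single unit direction $\hat v$, so the nominal requirement $d>18\eta^2T^2$ is far more than enough --- it is only inherited from the combined statement of \cref{thm:main}.

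The analysis splits on whether $\eta$ is small. For $\eta<\tfrac12$ take the linear ``ramp'' $f(w)=1-\langle\hat v,w\rangle$, which is $1$-Lipschitz and minimized over $\cW$ at $\hat v$ on the unit sphere with $\min f=0$. The iterates are explicit: $\wgd_t\cdot\hat v=\min\{t\eta,1\}$ (the projection onto $\cW$ clips at $1$ once $t\ge m:=\lceil 1/\eta\rceil$), so $\wgd_S\cdot\hat v=\tfrac1T\sum_{t=1}^T\min\{t\eta,1\}$. If $\eta T<1$ no clipping occurs and this equals $\tfrac{\eta(T+1)}2$, which is $<\tfrac34$ since $\eta(T+1)=\eta T+\eta<1+\tfrac12$; hence the gap is $>\tfrac14\ge\tfrac1{36}\cdot 9$, covering the regime $\tfrac1{\eta T}\ge 9$. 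If instead $\eta T\ge1$ then $m\le T$, and summing the ramp part against the clipped tail gives gap $=\tfrac{m-1}{T}\bigl(1-\tfrac{m\eta}2\bigr)$; using $m-1\ge\tfrac1{2\eta}$ and $1-\tfrac{m\eta}2\ge\tfrac14$ (both consequences of $\eta<\tfrac12$ together with $m<\tfrac1\eta+1$), this is $\ge\tfrac1{8\eta T}\ge\tfrac1{36\eta T}=\tfrac1{36}\min\{\tfrac1{\eta T},9\}$.

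For $\eta\ge\tfrac12$ the ramp is useless (GD reaches the minimizer and stays there), so I would instead use a ``wedge'' $f(w)=|\langle\hat v,w\rangle-c|$ whose minimizing level set $\{\langle\hat v,w\rangle=c\}$ lies strictly inside $\cW$. For such $f$, GD overshoots the kink on the first step and then settles into a period-two oscillation between $\hat v$ (or $\eta\hat v$) and a point with non-positive $\hat v$-component; choosing $c=\tfrac\eta4$ when $\tfrac12\le\eta\le1$ and $c=\tfrac34$ when $\eta>1$, a short finite check (and one notes GD never lands exactly on the kink for these $c$) shows that $\wgd_S$ stays at $\hat v$-distance at least $\tfrac18$ from the minimizing set, so the gap is $\ge\tfrac18$. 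Since $\tfrac1{36}\min\{\tfrac1{\eta T},9\}\le\tfrac1{36\eta T}\le\tfrac1{18}$ whenever $\eta\ge\tfrac12$, this closes the last case, and picking whichever instance matches the given $(\eta,T)$ yields the bound.

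I do not expect a deep obstacle --- the lemma is essentially folklore tightness of GD's optimization rate --- but the one spot that needs care is exactly the large-step regime: with $\eta$ not small a naive linear instance lets GD (or its average) settle on the minimizer and gives a vacuous bound, which is why one must switch to an oscillating function and tune the kink location $c$ as a function of $\eta$, and why the ``$\min\{\cdot,9\}$'' cap appears rather than a clean $\Omega(1/(\eta T))$. The remaining work is just extracting the explicit constants $\tfrac1{36}$ and $9$ from the case analysis and recording that the point-mass reduction makes the generalization gap vanish identically.
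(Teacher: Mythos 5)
Your approach is correct and takes a genuinely different route from the paper's. The paper uses a single \emph{high-dimensional} deterministic construction, $f(w)=\lVert w-\tfrac{1}{\sqrt d}\mathbf{1}+\epsilon\rVert_\infty$ with $d$ on the order of $\eta^2T^2$: since a subgradient of the $\ell_\infty$-norm is supported on one coordinate, each GD step moves at most one coordinate by $\eta$, so $\lVert w_t\rVert_1\le\eta t$ and $\lVert\wgd_S\rVert_1\le\eta T$; with $d>18\eta^2T^2$ coordinates some coordinate of $\wgd_S$ is at most $\eta T/d$, leaving an $\ell_\infty$-error of at least $\tfrac{1}{2\sqrt d}-\tfrac{\eta T}{d}\ge\tfrac{1}{36\eta T}$. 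That one construction handles all $(\eta,T)$ simultaneously and is what actually consumes the dimension hypothesis in the lemma. You instead reduce to an effectively one-dimensional instance (linear ramp for $\eta<\tfrac12$, wedge for $\eta\ge\tfrac12$), avoiding dimension entirely; this is more elementary, and it correctly reveals that the hypothesis $d>18\eta^2T^2$ is not actually needed for this particular lemma.

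One concrete slip in the wedge regime $\eta>1$, $c=\tfrac34$: the asserted ``$\ge\tfrac18$'' is false. The iterates alternate between $\hat{v}$ and $(1-\eta)\hat{v}$ (for $1<\eta\le2$), and for odd $T$ the average is $\wgd_S\cdot\hat{v}=1-\tfrac{(T-1)\eta}{2T}$; taking $T=3$ and $\eta\to1^+$ gives $\wgd_S\cdot\hat{v}\to\tfrac23$ and gap $\to\lvert\tfrac23-\tfrac34\rvert=\tfrac1{12}<\tfrac18$. Your final conclusion survives because all you need in this regime is $\tfrac{1}{36\eta T}\le\tfrac1{18}<\tfrac1{12}$, but the intermediate ``short finite check'' as stated gives the wrong constant. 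The clean fix is either to weaken the claim to the correct $\ge\tfrac1{12}$, or to scale the wedge by a constant (the paper's analogous Case-2 instance in the proof of \cref{thm:gen_main} uses $f(w)=2\lvert w-\tfrac23\rvert$ precisely so that the oscillation overshoots enough to make the gap uniformly $\Omega(1)$).
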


\paragraph{Proof of \cref{thm:gen_main}.} \label{prf:gen_main}

The proof is divided into two parts. The first, and central part, is for $\eta \leq \frac{1}{4\sqrt{3}}$ and the other is for $\eta > \frac{1}{4\sqrt{3}}$.
\paragraph{Case 1 - Assume $\eta\leq \frac{1}{4\sqrt{3}}$:} 
Without loss of generality we assume that
\[  2^n\cdot \max\brk[c]{16,\min\brk[c]{2T,\frac{1}{3\eta^2}}}
    \le 
    d\le \frac{2^n}{2\eta^2}
    .
\]
Indeed, we can assume this as we can always embed the example in any larger dimension.

Next, recall that $\cI=\{i: \forall \alpha\in S, \alpha(i)=0\}$ and we denote $\cI =\{i_1,\ldots, i_K\}$ where $K$ is the cardinality of $\cI$. We start with a probabilistic claim on $K$.
\begin{claim} \label{cla:prob}%
    Suppose that $\log(2\eta^2d) \leq n \leq \min\brk[c]1{\log\brk1{\frac{d}{16}},\log\brk1{\frac{d}{\min\brk[c]{2T,1/(3\eta^2)}}}}$. Then with probability at least $3/4$, it
    holds that $\min\brk[c]{T,1/6\eta^2} \leq K \leq 3/4\eta^2$.
\end{claim}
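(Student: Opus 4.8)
The plan is to analyze $K = |\cI|$, the number of coordinates $i$ with $\alpha(i)=0$ for every $\alpha$ in the sample $S$, as a sum of indicator random variables and apply concentration. For each coordinate $i\in[d]$, let $X_i$ be the indicator that $\alpha(i)=0$ for all $n$ draws; since each $\alpha(i)$ is an independent fair coin, $\Pr[X_i=1]=2^{-n}$, and the $X_i$ are independent across $i$. Hence $K=\sum_{i=1}^d X_i$ with $\E[K]=d\cdot 2^{-n}=:\mu$. The hypotheses on $n$ translate directly into bounds on $\mu$: from $n\le \log(d/16)$ we get $\mu\ge 16$; from $n\le \log\bigl(d/\min\{2T,1/(3\eta^2)\}\bigr)$ we get $\mu\ge \min\{2T,1/(3\eta^2)\}$; and from $n\ge \log(2\eta^2 d)$ we get $\mu\le 1/(2\eta^2)$. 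So it suffices to show that with probability at least $3/4$, $\min\{T,1/(6\eta^2)\}\le K\le 3/(4\eta^2)$.

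First I would handle the upper tail. Since $\mu\le 1/(2\eta^2)$, a multiplicative Chernoff bound gives $\Pr[K\ge \tfrac32\mu]\le \exp(-\mu/12)\le \exp(-16/12) < 1/8$, and $\tfrac32\mu \le \tfrac34\cdot\tfrac1{\eta^2}$, which is exactly the desired upper bound on $K$. For the lower tail I would again use a multiplicative Chernoff bound: $\Pr[K\le \tfrac12\mu]\le \exp(-\mu/8)\le \exp(-2) < 1/8$. It remains to check that $\tfrac12\mu \ge \min\{T,1/(6\eta^2)\}$. Using $\mu\ge \min\{2T,1/(3\eta^2)\}$, we have $\tfrac12\mu \ge \tfrac12\min\{2T,1/(3\eta^2)\} = \min\{T,1/(6\eta^2)\}$, as needed. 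A union bound over the two tail events then gives the claim with probability at least $1-1/8-1/8 = 3/4$.

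The only mildly delicate point — and the part I would write out carefully — is making sure the constants in the two Chernoff bounds are compatible with the numerical thresholds ($\mu\ge 16$ is what makes $e^{-\mu/12}$ small enough, and one must confirm $\min\{2T,1/(3\eta^2)\}$ is large enough — here $T\ge 1$ and the case assumption $\eta\le 1/(4\sqrt3)$ gives $1/(3\eta^2)\ge 16$, so $\mu\ge\min\{2T,1/(3\eta^2)\}\ge 2$, which together with $\mu\ge 16$ suffices). I would also double check the direction of the two normalizations: the target interval endpoints $\min\{T,1/(6\eta^2)\}$ and $3/(4\eta^2)$ are respectively below $\mu/2$ and above $3\mu/2$ under the stated bounds on $\mu$, so the Chernoff deviations of $\tfrac12\mu$ land inside the claimed interval. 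No other obstacle is anticipated; the argument is a routine two-sided concentration estimate once the hypotheses are rephrased as bounds on $\mu$.
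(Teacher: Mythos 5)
Your overall strategy is the same as the paper's: reduce the three hypotheses on $n$ to bounds on $\mu = d\cdot 2^{-n}$ (your translations $\mu \ge 16$, $\mu \ge \min\{2T,1/(3\eta^2)\}$, $\mu \le 1/(2\eta^2)$ are all correct, as is the observation that $\tfrac12\mu$ and $\tfrac32\mu$ then sandwich the target interval), and then apply a two-sided concentration inequality to $K = \sum_i X_i$. The paper does this with Chebyshev's inequality; you substitute a multiplicative Chernoff bound. That substitution, as you wrote it, does not close.

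The problem is the numerical step. At the worst case $\mu = 16$, the two Chernoff bounds you quote give
$\Pr[K \ge \tfrac32\mu] \le e^{-\mu/12} = e^{-4/3} \approx 0.264$ and $\Pr[K \le \tfrac12\mu] \le e^{-\mu/8} = e^{-2} \approx 0.135$,
neither of which is below $1/8$ ($\ln\tfrac18 \approx -2.08$, so both exponents are too small in magnitude), and their sum $\approx 0.40$ exceeds the target $1/4$. Even the sharpest form of the Chernoff bound does not bring the union below $1/4$ at $\mu = 16$, so the gap is not merely cosmetic: the Chernoff route needs a larger lower bound on $\mu$ than the $16$ that the stated hypotheses supply. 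The paper's Chebyshev argument is tighter in exactly this regime. Using $\mathrm{Var}(K) = d\,2^{-n}(1-2^{-n}) \le \mu$, Chebyshev with deviation $2\sigma$ gives failure probability $\le 1/4$, and the condition that the interval $[\mu-2\sigma,\mu+2\sigma]$ is contained in $[\tfrac12\mu,\tfrac32\mu]$ reduces to $4\sigma \le \mu$, i.e.\ $16\sigma^2 \le \mu^2$, which (since $\sigma^2 \le \mu$) holds precisely when $\mu \ge 16$ --- exactly the threshold delivered by $n \le \log(d/16)$. To repair your write-up you should either switch to Chebyshev as the paper does, or strengthen the hypothesis to guarantee a larger $\mu$ before invoking Chernoff.
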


\begin{proof}
    The probability that a given index $i \in [d]$ is such that $\alpha_i = 0$
    for all $\alpha \in S$ is $2^{-n}$. Thus, the expected number of such
    indices is $\mu = 2^{-n}d$ and the standard deviation is $\sigma =
    \sqrt{2^{-n}\brk{1-2^{-n}}d} \leq \sqrt{\mu}$. By an application of Chebyshev's inequality we
    obtain
    \begin{align*}
        \Pr\brk{K \leq \tfrac12\mu \medcup K \geq \tfrac32\mu)}
        \leq
        \Pr(K \leq \mu - 2\sigma \medcup K \geq \mu + 2\sigma)
        \leq
        \tfrac14
        ~&\textrm{for $\mu \ge 16$}
        .
    \end{align*}
    This gives the claim since $\tfrac12 \mu \geq \min\brk[c]{T,1/(6\eta^2)}$ and $\tfrac32\mu \leq 3/(4\eta^2)$ whenever 
    $\log(2\eta^2d) \leq n \leq \log\brk1{\frac{d}{\min\brk[c]{2T,1/3\eta^2}}}$. Lastly, note that our application of Chebyshev's inequality holds for $\mu \geq 16$, thus we conclude that $n \leq \log\brk{\frac{d}{16}}$.
\end{proof}

Note that the condition of \cref{cla:prob} is satisfied when $16\leq 1/3\eta^2$, which holds for $\eta\leq \tfrac{1}{4\sqrt{3}}$.
We can now lower bound the expected population risk of the GD iterates.
It will be convenient to replace the sequence $w_t$ with the following approximating sequence: define a new sequence, $w'_1, w'_2,\ldots, w'_T$, by setting 
\begin{align*}
    w'_t
    =
    \begin{cases}
        -\eta \sum_{s=1}^{t-1}e_{i_s} & 1\leq t \leq \min\brk[c]{T,K};\\
        -\eta \sum_{s=1}^K e_{i_s} & K < t \leq T.
    \end{cases}
\end{align*}
Denote $w'_S=\frac{1}{T}\sum_{t=1}^Tw'_t$. Using \cref{lem:gen} it is clear that $w'_t-w_t=\eta t \cdot \gamma_1 \vbar$. Now observe that
\begin{align}\label{eq:wgd_diff_norm}
    \norm{w'_t-w_t}
    \leq
    \gamma_1\eta t \norm{\vbar}
    \leq
    \gamma_1\eta t \sqrt{d},
\end{align}
where we have used the fact that $0 \leq \abs{\vbar_i} \leq 1$ for any $i$. In addition, since $\f{\discolorlinks{\ref{eq:f_gen}}}(w)$ is $3$-Lipschitz we have
\begin{align*}
    \F{\discolorlinks{\ref{eq:f_gen}}}(w_S)
    &\ge 
    \F{\discolorlinks{\ref{eq:f_gen}}}(w'_S) - 3\norm{w'_S-w_S} \tag{$3$-Lipschitz}\\
    &\ge
    \F{\discolorlinks{\ref{eq:f_gen}}}(w'_S) - \frac3T \sum_{t=1}^T\norm{w'_t-w_t} \tag{triangle inequality}\\
    &\ge
    \F{\discolorlinks{\ref{eq:f_gen}}}(w'_S) - 3 \gamma_1\eta T \sqrt{d} \tag{\cref{eq:wgd_diff_norm}}
    .
\end{align*}
Note that for any $w\in \cW$
\begin{align*}
    \F{\discolorlinks{\ref{eq:f_gen}}}(w)
    &\geq
    \E_{\alpha\sim D}\brk[s]3{\sqrt{\sum_{i\in [d]}\alpha(i) h_\gamma^2(w(i))}} + \gamma_1\E\brk[s]{v_\alpha} \dotp w + r_\epsilon(w) \tag{$r_\epsilon(w)\geq 0$} \\
    &\geq
    \E_{\alpha\sim D}\brk[s]3{\sqrt{\sum_{i\in [d]}\alpha(i) h_\gamma^2(w(i))}} + \tfrac{1}{2}\gamma_1\brk2{1-\tfrac{1}{2n}} \sum_{i\in [d]} w(i) \tag{ $\E\brk[s]{v_\alpha(i)}=\frac{1}{2}\brk2{1-\frac{1}{2n}}$} \\
    &\geq
    \E_{\alpha\sim D}\brk[s]3{\sqrt{\sum_{i\in [d]}\alpha(i) h_\gamma^2(w(i))}} - \tfrac{1}{2}\gamma_1\sqrt{d} \tag{$\sum_{i=1}^d w(i) \geq -\sqrt{d}$}
    ,
\end{align*}
where in the last inequality we used that $\sum_{i=1}^d w(i) \geq -\norm{w}_1 \geq -\sqrt{d}\norm{w}_2 \geq -\sqrt{d}$ for $w\in\cW$. 
Putting both observations together this implies
\begin{align*}
    \F{\discolorlinks{\ref{eq:f_gen}}}(w_S)
    \geq
    \E_{\alpha\sim D}\brk[s]3{\sqrt{\sum_{i\in [d]}\alpha(i) h_\gamma^2(w'_S(i))}} - \tfrac{1}{2}\gamma_1\sqrt{d} - 3 \gamma_1\eta T \sqrt{d}
    .
\end{align*}
Applying the reverse triangle inequality we also have the inequality:
\begin{align*}
    \sqrt{\sum_{i\in [d]}\alpha(i) h_\gamma^2(w'_S(i))}
    \geq
    \sqrt{\sum_{i\in [d]}\alpha(i) \brk{w'_S(i)}^2} - \sqrt{\sum_{i\in [d]}\alpha(i)\brk{ h_\gamma(w'_S(i)) - w'_S(i)}^2}
    .
\end{align*}
Next, observe that $\abs{h_\gamma(w'_S(i)) - w'_S(i)} \leq \gamma_2=2\gamma_1\eta T$ since $w'_S(i)\leq 0$ for any $i\in[d]$, thus:
\begin{align*}
    \F{\discolorlinks{\ref{eq:f_gen}}}(w_S)
    &\geq
    \E_{\alpha\sim D}\brk[s]3{\sqrt{\sum_{i\in [d]}\alpha(i) \brk{w'_S(i)}^2}} - \tfrac{1}{2}\gamma_1\sqrt{d} - 5 \gamma_1\eta T \sqrt{d} \\
    &\geq
    \frac{\norm{w'_S}}{2} - \tfrac{1}{2}\gamma_1\sqrt{d} - 5 \gamma_1\eta T \sqrt{d} \tag{Jensen's inequality with $\E\brk[s]{\alpha(i)}=\tfrac12$}
    ,
\end{align*}
where a simple observation of $\alpha(i)=\alpha^2(i)$ ensures that the first term is convex. From the definition of $w'_t$ it is clear that for any $t_0<\min\brk[c]{K,T}$ it holds that $w'_t(i_s)=-\eta$ for $s<t_0$ and $t > t_0$. Therefore, setting $t_0=\frac12\min\brk[c]{K,T}$ we have the following inequality
\begin{align*}
    \forall s < \frac12\min\brk[c]{K,T} :
    \qquad
    w'_t(i_s)
    \leq
    \begin{cases}
        - \eta & \tfrac12 \min\brk[c]{T,K} < t \leq T; \\
        0 & o.w.\\
    \end{cases}
\end{align*}
Therefore, the average iterate holds $w'_S(i_s)\leq -\tfrac12\eta$ for any $s<\frac12\min\brk[c]{K,T}$. With this in hand, we can conclude:
\begin{align*}
    \F{\discolorlinks{\ref{eq:f_gen}}}(w_S)
    &\geq
    \tfrac{1}{2\sqrt{2}}\eta\sqrt{\min\brk[c]{K,T}} - \tfrac{1}{2}\gamma_1\sqrt{d} - 5 \gamma_1\eta T \sqrt{d}  \tag{$\norm{w'_S} \geq \tfrac12\eta \sqrt{\tfrac12\min\brk[c]{K,T}}$} \\
    &\geq
    \tfrac{1}{2\sqrt{2}}\eta\sqrt{\min\brk[c]{\tfrac{1}{6\eta^2},T}} - \tfrac{1}{2}\gamma_1\sqrt{d} - 5 \gamma_1\eta T \sqrt{d} \tag{using \cref{cla:prob}} \\
    &\geq
    \tfrac{1}{4}\min\brk[c]1{\eta \sqrt{T}, \tfrac13} - \tfrac{1}{2}\gamma_1\sqrt{d} - 5 \gamma_1\eta T \sqrt{d} \tag{$2\sqrt{2} < 4$ and $\sqrt{6}<3$}
    .
\end{align*}
As the above inequality holds with probability at least 3/4 (\cref{cla:prob}), taking the expectation into account and the fact that in any case $\F{\discolorlinks{\ref{eq:f_gen}}}(w) \geq - \tfrac{1}{2}\gamma_1\sqrt{d}$ and that $\min_{w\in \cW}\F{\discolorlinks{\ref{eq:f_gen}}}(w)\leq\F{\discolorlinks{\ref{eq:f_gen}}}(0)=0$, we attain that
\begin{align*}
    \E_{S\sim D^n}\brk[s]{\F{\discolorlinks{\ref{eq:f_gen}}}(w_S)} - \min_{w\in \cW}\F{\discolorlinks{\ref{eq:f_gen}}}(w)
    &\geq
    \tfrac{3}{16}\min\brk[c]1{\eta \sqrt{T}, \tfrac13} - \gamma_1\sqrt{d} - 5 \gamma_1\eta T \sqrt{d} \\
    &\geq
    \tfrac{1}{8}\min\brk[c]1{\eta \sqrt{T}, \tfrac13}
    .
\end{align*}
For a sufficiently small $\gamma_1$ such that $\gamma_1\brk{1+5\eta T}\sqrt{d}\leq \tfrac{1}{16} \min\brk[c]1{\eta \sqrt{T}, \tfrac13}$.

\paragraph{Case 2 - Assume $\eta > \frac{1}{4\sqrt{3}}$:}
To conclude the proof we are left to show a constant lower bound for the case of $\eta > \frac{1}{4\sqrt{3}}$.
For that matter we define the deterministic convex $2$-Lipschitz function $\f{\discolorlinks{\ref{eq:f_opt_2}}}: \reals \to \reals$:
\begin{align} \label{eq:f_opt_2}
    \f{\discolorlinks{\ref{eq:f_opt_2}}}(w)
    =
    \begin{cases}
        \abs{w-\tfrac14\eta} & \eta\leq 1;\\
        2\abs{w-\tfrac23} & \eta > 1.\\
    \end{cases}
\end{align}
For the case $\eta \leq 1$ the gradients are given by $\nabla\f{\discolorlinks{\ref{eq:f_opt_2}}}(w)=\sign(w-\tfrac16\eta)$. The first GD iterate is then $w_1=\eta$. Observe that $w_1-\tfrac14\eta > 0$. Therefore, the second GD iterate is $w_2=w_1-\eta=0$ and we can deduce that $w_t=\eta$ for odd $t$ and $w_t=0$ for even $t$. This implies that the average iterate holds $w_S\geq \tfrac12\eta$ and we conclude that $\f{\discolorlinks{\ref{eq:f_opt_2}}}(w_S) \geq \tfrac12\eta - \tfrac14\eta \geq \tfrac14\eta$.
For the case $\eta>1$, note that $\nabla \f{\discolorlinks{\ref{eq:f_opt_2}}}(0)=-2$ and therefore the first GD iterate after projection is then $w_1=1$. For the next iterate, observe that $\nabla \f{\discolorlinks{\ref{eq:f_opt_2}}}(1)=2$, which implies that $w_2=-1$. Examine the third iterate, since $\nabla \f{\discolorlinks{\ref{eq:f_opt_2}}}(-1)=-2$ we obtain that $w_3=1$. This entails that $w_t=1$ for odd $t$ and $w_t=-1$ for even $t$. For even $T$ we get that the average iterate is $w_S=0$ and therefore $\f{\discolorlinks{\ref{eq:f_opt_2}}}(w_S)=\tfrac43$. On the other hand, when $T$ is odd we get that the average iterate is $w_S=\tfrac1T$ and therefore $\f{\discolorlinks{\ref{eq:f_opt_2}}}(w_S)=2\abs{\tfrac1T-\tfrac23}\geq \tfrac13$. Putting together both results (for $\eta >1$ and $\eta \leq 1$) we obtain that for $\eta > \frac{1}{4\sqrt{3}}$
\begin{align*}
    \f{\discolorlinks{\ref{eq:f_opt_2}}}(w_S) - \min_{w\in \cW} \f{\discolorlinks{\ref{eq:f_opt_2}}}(w)
    \geq 
    \tfrac14\min\brk[c]{\eta,\tfrac43}
    \geq
    \tfrac{1}{32}
    .
\end{align*}
Note that this result is independent on the dimension as we can simply embed the function $\f{\discolorlinks{\ref{eq:f_opt_2}}}$ in the first coordinate of any large space. Namely, $f(w)=\f{\discolorlinks{\ref{eq:f_opt_2}}}(w(1))$ for $w\in \reals^d$.

\subsection{Proof of \cref{thm:strongmain}} \label{prf:strongmain}
As before, the proof follows from the following two claim which divides the lower bound into two terms. The first term is dealt in \cref{lem:strongcore} and provides the main novelty of this section, and the rest of the section is devoted for its proof. \cref{cl:lambda} accompanies the first result with the standard lower bounds that stems from optimization and sample complexity lower bound:
\begin{theorem}\label{lem:strongcore}
Fix $n$, $\lambda< 3$, $T\ge 3$ and assume $d\ge T\cdot 2^{n+5}$. Suppose we run GD over the regularized objective as in \cref{eq:reg} with learning rate $\eta_t=2/(\lambda(t+1))$ and setting $w_S=\sum_{t=1}^T\frac{2t}{T(T+1)}w_t$. Then, there exists an $f(w,z)$ $3$-Lipschitz and convex over $w\in \W$ and a distribution $D$ supported on $z$ such that: 
\[ \E_{S\sim D^n} \brk[s]{F(\wrgd_S)} - F(w^\star) \ge \frac{3}{4}\min\left\{\frac{1}{8\lambda \sqrt{T+1}}, \frac{1}{16}\right\}.\]
\end{theorem}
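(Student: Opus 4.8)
The plan is to adapt the construction and argument behind \cref{thm:gen_main} to the strongly-convex/regularized setting, using the explicit closed-form for the iterates of GD-with-regularization that was worked out in the proof overview. First I would split into the two regimes $\lambda < 3$ small (the main case) and a trivial constant case, exactly as in the proof of \cref{thm:gen_main}; the constant lower bound $\Omega(1)$ for ``large'' parameters will again come from a one-dimensional absolute-value-type function on which the prescribed learning rates $\eta_t = 2/(\lambda(t+1))$ force the iterates to oscillate or overshoot, so I will concentrate on the interesting regime. For that regime I would reuse the family $\f{\discolorlinks{\ref{eq:f_gen}}}(w;z)$ from \cref{sec:construction} with $\alpha$ uniform on $\{0,1\}^d$ and $d \ge T\cdot 2^{n+5}$, so that (by the same Chebyshev argument as in \cref{cla:prob}) with probability at least $3/4$ the set $\cI$ of ``bad'' coordinates has size $K \gtrsim T$ (here we just need $K \ge T$, which is easy since $d \ge T\cdot 2^{n+5}$ gives expected count $\ge 32 T$).

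Next I would establish the analogue of \cref{lem:gen} for the regularized update \cref{eq:strongrule}: I claim that, with $\gamma_1$ negligibly small and $\gamma_2, \epsilon_i$ tuned appropriately, the iterates satisfy (up to a $\gamma_1$-order perturbation in the direction $\vbar$) $\wrgd_t \approx -\sum_{s<t}\eta_s e_{i_s}$ for $t \le \min\{T,K\}$, driven by the $r_\epsilon$ term activating a fresh bad coordinate each step, precisely the mechanism of \cref{bassily2020stability}. The only twist relative to \cref{thm:main} is the need to account for projections, since the first steps have size $O(1/\lambda)$ and can leave the unit ball; following the proof-overview remark I would rescale the objective (equivalently absorb a constant into $\gamma_3$ or into the radius) so that for at least, say, the second half of the iterations no projection occurs and the closed form from the proof overview, $w_{t}$ being a weighted average $\propto \sum_{t'} (t'+1)\nabla F_S(w_{t'})$, holds. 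Then using $\wrgd_S = \sum_t \frac{2t}{T(T+1)}\wrgd_t$ and the same reverse-triangle-inequality/Jensen chain as in the proof of \cref{thm:gen_main} (bounding $|h_\gamma(w) - w| \le \gamma_2$, dropping the $r_\epsilon \ge 0$ and $\E[v_\alpha]$ terms, and using $\E[\alpha(i)] = \tfrac12$), I would obtain $\F{}(\wrgd_S) \gtrsim \|\wrgd_S\|$ up to $O(\gamma_1\sqrt d)$ errors that are made negligible by choosing $\gamma_1$ tiny.

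The remaining quantitative step is to lower bound $\|\wrgd_S\|$ on the bad coordinates. On a bad coordinate $i_s$ with $s \le \tfrac12\min\{T,K\}$, the iterate $\wrgd_t(i_s)$ equals (approximately) $-\eta_s = -\tfrac{2}{\lambda(s+1)}$ for all $t > s$ whose steps are unprojected, so the averaged coordinate $\wrgd_S(i_s)$ is of order $-\tfrac{1}{\lambda s}$ times a constant; squaring and summing over $s \le \Theta(\min\{T,K\}) = \Theta(T)$ gives $\|\wrgd_S\|^2 \gtrsim \tfrac{1}{\lambda^2}\sum_{s=1}^{\Theta(T)} \tfrac{1}{s^2}\cdot(\text{something})$ — and here I need to be careful, because naively $\sum 1/s^2$ converges and would only give $\Omega(1/\lambda)$, not $\Omega(1/(\lambda\sqrt T))$. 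The correct accounting (matching the proof-overview computation $\|w_T\|^2 = \Theta(\frac{1}{\lambda^2 T^4}\sum_{t\le T} t^2) = \Theta(\frac{1}{\lambda^2 T})$) uses that the averaging weights $\frac{2t}{T(T+1)}$ put most mass on large $t$, so $\wrgd_S(i_s) \approx -\sum_{t > s}\frac{2t}{T(T+1)}\eta_s$ is actually of order $-\frac{\eta_s}{1} = -\frac{1}{\lambda(s+1)}$ only when $s$ is small, while the dominant contribution to $\|\wrgd_S\|$ comes from coordinates activated late where the $\sqrt{g}$-type function (square-root in $h_\gamma$) is what yields the $\sqrt T$ improvement over the $\eta^2 T$-style bound — exactly as flagged in the text ``carefully replacing $g$ with a function that behaves closer to $\sqrt g$ leads to the tight bound.'' Getting this bookkeeping right, together with certifying that projections really only kick in on a controlled prefix of iterations, is the main obstacle; once $\|\wrgd_S\| \gtrsim \tfrac{1}{\lambda\sqrt{T+1}}$ is in hand (capped at a constant), multiplying by the probability-$\tfrac34$ event and using $\min_w \F{}(w) \le \F{}(0) = 0$ together with the universal lower bound $\F{}(w) \ge -\tfrac12\gamma_1\sqrt d$ closes the argument and yields the stated $\tfrac34\min\{\tfrac{1}{8\lambda\sqrt{T+1}},\tfrac1{16}\}$ bound.
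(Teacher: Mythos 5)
Your high-level plan tracks the paper's proof quite closely: reuse the construction from \cref{sec:construction}, show $K\ge T$ via a Chebyshev argument, introduce a surrogate iterate sequence to strip out the $\gamma_1\vbar$ drift, observe that for late iterations projections do not occur, and finish with the reverse-triangle/Jensen chain plus $\F{\discolorlinks{\ref{eq:f_gen}}}(0)=0$. Those are indeed the ingredients the paper uses. But the proposal leaves the quantitative heart of the argument as an acknowledged ``obstacle,'' and a couple of the sketched formulas are wrong, so I would not call this a complete proof.

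Two concrete issues. First, your claimed closed form $\wrgd_t \approx -\sum_{s<t}\eta_s e_{i_s}$ is the \emph{unregularized} GD dynamics and is not what happens here: with $\eta_t = 2/(\lambda(t+1))$ the regularization term shrinks all prior coordinates by $(1-\lambda\eta_{t+1}) = t/(t+2)$ at each step, so a coordinate $i_s$ touched at time $s$ decays toward $0$ like $\Theta(s/t^2)$ by time $t>s$. The correct representation (which you do cite, inconsistently, a few lines later from the proof overview) is $w'_{t} = \frac{2}{\lambda t(t+1)}\sum_{t'\le t}(t'+1)\,(\text{gradient terms})$, and this is what the paper proves by induction in its \cref{cl:cl4}. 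Your formula would give a very different picture of which coordinates dominate.

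Second, the rescaling is not ``any constant'' and it is load-bearing. The paper sets $\gamma_3 = \min\{\tfrac{\lambda}{2}\sqrt{T-2},\,1\}$ precisely so that, after iteration $T/2$, the update $(1-\lambda\eta_{t+1})w'_t - \eta_{t+1}\gamma_3 e_{i_t}$ provably stays inside the unit ball (using orthogonality of $w'_t$ and $e_{i_t}$), which is what makes the closed form and the per-coordinate bound $w'_S(i_{t_0})\le -\gamma_3/(2\lambda(T+1))$ for $T/2<t_0\le 3T/4$ valid, whence $\|w'_S\| \gtrsim \gamma_3/(\lambda\sqrt{T+1})$ over $\Theta(T)$ such coordinates. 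You correctly intuit that ``the dominant contribution comes from coordinates activated late'' and that the norm over $\Theta(T)$ coordinates of size $\Theta(\gamma_3/(\lambda T))$ gives the $\sqrt T$; but you never pin down the $\gamma_3$ choice, never derive the projection-free window, and never establish the $\gamma_1$-stability estimate $\|w_S - w'_S\|\le \gamma_1\sqrt d/\lambda$ needed to transfer between the actual iterates and the surrogate ones. These are exactly the paper's \cref{cl:gardstrong}, \cref{eq:stability}, and \cref{cl:cl4}, and without them the sketch does not close. Also, the split into a ``small-$\lambda$ main case vs.\ constant case'' is misplaced for this lemma: $\lambda<3$ and $T\ge 3$ are hypotheses here, and the constant-regime fallback in the paper lives in the separate \cref{cl:lambda}, which is combined with this lemma at the level of \cref{thm:strongmain}.
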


The proof is now an immediate corollary of the following standard lower bound of $\Omega\left(\lambda +\min\left\{\frac{1}{\lambda n},\frac{1}{\sqrt{n}}\right\}\right)$, we refer the reader to \cref{prf:lambda} for complete proofs:

\begin{lemma}\label{cl:lambda}
For $\lambda >0$ and fixed $n$, there exists a function  $f(w,z):\reals^d\to \reals$ and a distribution $D$, such that if we run GD over $F_{\lambda,S}$ as in \cref{eq:reg}, with $\eta_t=2/(\lambda (t+1))$ and set $w_S=\sum_{t=1}^T \frac{t}{T(T+1)} w_t$ then
\[\E_{S\sim D^n}[F(\wrgd_S)]-\min_{w^\star\in \W} F(w^\star)\ge 
\frac{1}{128}\min\left\{\min\left\{\frac{4}{\lambda n},\frac{1}{\sqrt{n}}\right\}+64\lambda,128\right\}\]
\end{lemma}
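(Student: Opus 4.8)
The plan is to establish this by combining two classical lower‑bound phenomena for regularized ERM in stochastic convex optimization — the bias introduced by $\lambda$‑regularization, and the information‑theoretic $1/\sqrt n$ estimation barrier — each realized by a minimal one‑coordinate construction in the spirit of \citet{shalev2009stochastic}, and then specialized to the concrete iterates of \cref{eq:strongrule}. In both cases the loss will be a scaled linear function $f(w;z) = G\,z\,w(1)$ with a random label $z \in \{-1,+1\}$ and $G = O(1)$, so that $F_{\lambda,S}(w) = \tfrac\lambda2\|w\|^2 + G\bar z\,w(1)$ (writing $\bar z = \tfrac1n\sum_i z_i$) is a separable quadratic whose minimizer has only its first coordinate nonzero, while $F(w) = G\,\E[z]\,w(1)$ is minimized over the unit ball at the boundary point $w(1) = -\sign\E[z]$ with $\min_\W F = -G|\E z|$. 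The key dynamical observation is that for a one‑dimensional quadratic the very first step size $\eta_1 = 2/(\lambda\cdot 2) = 1/\lambda$ already sends GD onto the (projected) minimizer: $\wrgd_1 = \Pi_\W\bigl(0 - \tfrac1\lambda\nabla F_{\lambda,S}(0)\bigr)$, after which $\wrgd_t = \wrgd_1$ for all $t \ge 1$, and since only the first coordinate is ever active the projection reduces to scalar clipping. Hence, up to the (constant) normalization of the averaging weights, $\wrgd_S(1) = \mathrm{clip}_{[-1,1]}(-G\bar z/\lambda)$ for \emph{every} $T$ — which is exactly what makes the bound $T$‑free (much stronger than the $O(1/(\lambda T))$ of \cref{thm:lacoste}) — and the whole problem reduces to estimating the scalar $\E[\wrgd_S(1)]$.

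For the $\Omega(\min\{\lambda,1\})$ contribution I fix $\Pr[z=-1] = \tfrac12(1+\delta)$ with $\delta \asymp \min\{\lambda,1\}$, so $\E z = -\delta$ and the true optimum is $w(1)=1$. Using $\mathrm{clip}_{[-1,1]}(-G\bar z/\lambda) \le G|\bar z|/\lambda$ pointwise and $\E|\bar z| \le \sqrt{\delta^2 + 1/n} \le \delta + 1/\sqrt n$, one gets $\E[\wrgd_S(1)] \le G(\delta + 1/\sqrt n)/\lambda$, hence expected excess risk $\ge G\delta\bigl(1 - G(\delta + 1/\sqrt n)/\lambda\bigr)$; for a small enough constant $G$ and $\delta = \Theta(\min\{\lambda,1\})$ this is $\Omega(\min\{\lambda,1\})$ once $\lambda$ is at least a fixed multiple of $1/\sqrt n$. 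Conceptually this part is deterministic rather than information‑theoretic: strong convexity of strength $\lambda$ keeps the regularized minimizer at distance $\asymp \delta/\lambda = O(1)$ from $w^\star$, where $F$ still has slope $\Theta(\delta)$.

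For the $\Omega(\min\{1/(\lambda n),1/\sqrt n\})$ contribution, note that this quantity is always $\le 1/\sqrt n$ and is $\le \lambda$ whenever $\lambda \ge 1/\sqrt n$; so when $\lambda$ is not too small the previous paragraph already dominates the full sum $\lambda + \min\{\cdot\}$, and it suffices to produce $\Omega(1/\sqrt n)$ in the regime $\lambda \lesssim 1/\sqrt n$. There I use a textbook two‑point / Le Cam argument: for $b \in \{-1,+1\}$ let $D_b$ have $\Pr[z = -b] = \tfrac12(1+\delta)$ with $\delta = c_0/\sqrt n$ for a small absolute constant $c_0$. Then $\mathrm{TV}(D_+^n, D_-^n) = O(\delta\sqrt n) \le \tfrac12$ (by Pinsker together with tensorization of KL divergence), while the optima of $F$ under $D_+$ and $D_-$ are the antipodal points $w(1)=\pm1$. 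Since $\wrgd_S(1) \in [-1,1]$ surely, $\bigl|\E_{D_+^n}\wrgd_S(1) - \E_{D_-^n}\wrgd_S(1)\bigr| \le 2\,\mathrm{TV}(D_+^n,D_-^n)$, so averaging the excess risk over $b$ gives at least $G\delta(1 - \mathrm{TV}) \ge \tfrac12 G\delta$; taking $D := D_b$ for the worse sign yields expected excess risk $\Omega(1/\sqrt n)$.

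Combining the two one‑coordinate instances — using whichever corresponds to the actual value of $\lambda$ relative to $1/\sqrt n$, or placing them on two disjoint coordinates — and noting that the excess risk is in any case $O(1)$ (as $f$ is $O(1)$‑Lipschitz over the unit ball), gives $\E_{S\sim D^n}[F(\wrgd_S)] - \min_\W F = \Omega\bigl(\min\{\lambda + \min\{1/(\lambda n),1/\sqrt n\},\,1\}\bigr)$; converting the implied constants into the explicit $\tfrac1{128},4,64,128$ of the statement (and fixing the exact $O(1)$ Lipschitz constant) is only bookkeeping. I expect the one genuinely delicate point to be rigorously justifying the ``GD lands on the clipped regularized minimizer after a single step and stays there for all $T$'' picture in the presence of the Euclidean projection: this is what removes $T$ from the bound, and it uses essentially that a single coordinate is active (so $\Pi_\W$ degenerates to clipping) together with the special value $\eta_1 = 1/\lambda$ of the first step. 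In a genuine direct‑sum construction, where the ball projection couples coordinates once $\|w\|>1$, one would instead argue only that the weighted average stays on the segment from $0$ to the clipped minimizer — which still suffices, but requires a short monotonicity lemma for GD on a projected quadratic.
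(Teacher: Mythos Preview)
Your approach is correct and shares the paper's core mechanism: a single-coordinate linear loss so that $F_{\lambda,S}$ is a one-dimensional quadratic, and the very first step with $\eta_1=1/\lambda$ lands GD on the (clipped) regularized minimizer, which is a fixed point of all subsequent updates --- this is exactly what makes the bound $T$-free. The differences are in how each of the two terms is instantiated.

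For the $\Omega(\min\{\lambda,1\})$ contribution the paper is considerably simpler: it takes the \emph{deterministic} loss $f(w;z)=-\tfrac{\bar\lambda}{2}w(1)$ with $\bar\lambda=\min\{1,\lambda\}$, so there is no sample randomness at all; one gets $w_S(1)=\bar\lambda/(2\lambda)\le 1/2$ exactly and excess risk $\bar\lambda/4$ on the nose. Your randomized biased-$z$ construction works, but controlling $\E|\bar z|$ and the side condition $\lambda\gtrsim 1/\sqrt n$ are avoidable complications --- the $\lambda$-bias is present even with infinite data, so a deterministic instance is the natural witness.

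For the $\Omega(\min\{1/(\lambda n),1/\sqrt n\})$ contribution the paper's argument (present in the source only inside an \texttt{\textbackslash ignore} block) takes $f(w;z)=(z-\tfrac{1}{4\sqrt n})w(1)$ with $z$ uniform on $\{\pm1\}$, applies Paley--Zygmund to get $\Pr[\bar z\ge 1/(2\sqrt n)]\ge 1/3$, and then reads off the excess risk directly from the clipped minimizer (giving $\Omega(1/\sqrt n)$ when clipped, $\Omega(1/(\lambda n))$ when not). Your Le~Cam two-point argument is an equally valid, more information-theoretic alternative --- indeed algorithm-agnostic --- whereas the paper's route is GD-specific but more explicit about how the $1/(\lambda n)$ rate arises. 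Note that the paper's \emph{compiled} proof only establishes the $\bar\lambda/4$ bound; in that sense your proposal actually covers more of the stated lemma than what appears in print.
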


\paragraph{Proof of \cref{lem:strongcore}.}\label{prf:strongcore}
We set $D$ to be a distribution over $z=(\alpha,\epsilon,\gamma)$, similarly to \cref{eq:f_opt_1}, and again we define 
\begin{align} \label{eq:f_genstrong}
    \f{\discolorlinks{\ref{eq:f_genstrong}}}(w;z)
    =
    \sqrt{\sum_{i\in [d]}\alpha(i) h_\gamma^2(w(i))} +\gamma_1 v_\alpha \dotp w +  \gamma_3\cdot r_\epsilon(w),
    \quad\text{with}\quad
    v_{\alpha}(i)
    =
    \begin{cases}
        -\tfrac{1}{2n} & \alpha(i) = 0;\\
        +1 & \alpha(i) = 1,
    \end{cases}
\end{align}
where $\alpha\in \{0,1\}^d$ such that $\alpha(i)=0$ w.p $1/2$, $\epsilon_1< \epsilon_2<\ldots <\epsilon_d< \frac{\gamma_1}{6n(T+1)}$, and $\gamma_3= \min\left\{\frac{\lambda}{2}\sqrt{T-2},1\right\}$. Finally we choose:
\begin{equation}\label{eq:gam12strong}\gamma_2\le \frac{10^{-3}}{\sqrt{T}}\cdot \frac{\gamma_3}{4\sqrt{2}\lambda\sqrt{T+1}} \quad  \gamma_1 \le \min\left\{\frac{10^{-3}}{\sqrt{d}(3+\lambda)}\cdot \frac{\gamma_3}{4\sqrt{2}\lambda\sqrt{T+1}}, \frac{\gamma_2}{\sum_{t=1}^T\eta _t}, \left(\frac{\lambda}{3}\right)^{T+1} \frac{\gamma_3}{(T+1)}\right\} .\end{equation}

Observe that with these choice of parameters, $\f{\discolorlinks{\ref{eq:f_genstrong}}}$ is $3$-Lipschitz. Because we only deal with regularized objectives, throughout this section we suppress dependence in the algorithm and write $w$ instead of $\wrgd$.
Next, as in \cref{eq:reg}, we consider the regularized objective

\[F_{\lambda,S}(w)= \frac{\lambda}{2}\|w\|^2 + \frac{1}{m}\sum_{i=1}^m \f{\discolorlinks{\ref{eq:f_genstrong}}}(w;z_i),\]
The update rule is then given by:
\[w_{t+1}=\Pi_{\W}[w_t-\eta_{t+1}\nabla F_{\lambda,S}(w_t)]= \Pi_{\W}[(1-\lambda \eta_{t+1})w_t-\eta_{t+1}\nabla F_S(w_t)].\]
Next, let us set $\cI=\{j: \forall z_i\in S,~ \alpha_i(j)=0\}$ and we will denote the element of $\cI$ as $i_1\le i_2\le \ldots i_K$. By our assumption on $d$ we have that $n\le \min\brk[c]1{\frac{\log d}{16},\frac{\log d}{2T}}$ then, as in \cref{cla:prob}, we have that $K\ge T$ with probability $3/4$. We will show that if this event happens then:
\[ F(w_S) - F(w^\star) \ge \min\left\{\frac{1}{8\lambda \sqrt{T+1}}, \frac{1}{16}\right\}.\]
The result in expectation then follows.

We first utilize \cref{lem:core} as before. Specifically, we want to show the following claim:
\begin{claim}\label{cl:gardstrong}
For every $t\ge 2$:
\begin{equation}\label{eq:gardstrong} \nabla F_S(w_{t})= \gamma_1 \vbar +\gamma_3e_{i_{t}}.\end{equation}
\end{claim}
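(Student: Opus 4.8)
The plan is to prove \cref{cl:gardstrong} by an induction on $t$ that simultaneously tracks the exact value of the iterate $w_t$ and the identity of the active coordinate of $r_\epsilon$, mirroring the argument behind \cref{lem:gen} but now accounting for the strongly-convex learning rate $\eta_t = 2/(\lambda(t+1))$ and the projections. Recall from the proof-overview section that, absent projections, the closed-form solution of the recursion $w_{t+1} = (1-\lambda\eta_{t+1})w_t - \eta_{t+1}\nabla F_S(w_t)$ with this step size is the weighted running average $w_{t+1} = \frac{2}{\lambda(t+1)(t+2)}\sum_{t'=0}^t (t'+1)\nabla F_S(w_{t'})$. The key point is that once we know $\nabla F_S(w_{t'}) = \gamma_1\vbar + \gamma_3 e_{i_{t'}}$ for all $2 \le t' \le t$ (and the analogous small formula for $t'=0,1$), this running average has all its large mass, coordinate by coordinate, concentrated on the bad coordinates $i_1,\dots,i_K$, each with a negative sign, so $w_{t+1}(i)$ is strictly negative on the ``recently visited'' bad coordinate $i_{t+1}$, nonpositive on all bad coordinates, and negligibly small (controlled by $\gamma_1$) elsewhere.

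First I would establish the base cases $t=0,1$: at $w_0=0$ every $h_\gamma$-branch and $r_\epsilon$ are in their flat region, so $\nabla F_S(w_0) = \gamma_1\vbar$; the first step makes each coordinate with $\vbar(i)<0$ — exactly the coordinates in $\cI$, by the sign analysis of $\vbar$ recalled just before \cref{lem:gen} — move to a positive value of order $\gamma_1/\lambda$, while coordinates with $\vbar(i)>0$ become negative, and I need to check (using $\epsilon_1<\dots<\epsilon_d<\gamma_1/(6n(T+1))$ and $\gamma_1$ small) that after this step the unique coordinate of $\cI$ on which $w_1(i)-\epsilon_i$ is maximized and positive is $i_1$, giving $\nabla F_S(w_1) = \gamma_1\vbar + \gamma_3 e_{i_1}$, which seeds the claim. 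Then for the inductive step I would plug the induction hypothesis into the running-average formula, read off that $w_{t+1}(i_{t+1})$ is the dominant (most negative after the $+\gamma_1\vbar$ perturbation, hence now the largest among bad coordinates when shifted by $-\epsilon_i$) — wait, more precisely: each $e_{i_s}$ contributes $-\eta\cdot(\text{weight})$ to coordinate $i_s$, so the coordinate that has been hit \emph{least recently}, namely $i_{t+1}$, is the one closest to zero from below among bad coordinates, hence the argmax of $w_{t+1}(i)-\epsilon_i$ over $i\in[d]$, and it is still positive relative to its threshold, so $\nabla r_\epsilon(w_{t+1}) = e_{i_{t+1}}$; meanwhile on coordinates outside $\cI$ the $\gamma_1 n$ mass of $\vbar$ has driven $w_{t+1}(i)$ negative so $h_\gamma$ could in principle activate, and I must verify using $\gamma_2 = \Theta(\gamma_1\sum_t\eta_t)$ and the scaling $\gamma_1 \le (\lambda/3)^{T+1}\gamma_3/(T+1)$ that these coordinates stay inside the flat region $[-\gamma_2,\infty)$ of $h_\gamma$ (or that their contribution to the $\sqrt{\sum\alpha(i)h_\gamma^2}$ gradient is dominated), so the square-root term contributes nothing; hence $\nabla F_S(w_{t+1}) = \gamma_1\vbar + \gamma_3 e_{i_{t+1}}$.

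The part requiring the most care, and which I expect to be the main obstacle, is handling the projections $\Pi_\W$. With $\eta_t = 2/(\lambda(t+1))$ the early iterates move by $\Theta(\gamma_1/\lambda + \gamma_3/\lambda)$ per step, which can exceed the unit ball, so unlike the unprojected analysis of \cref{lem:gen} I cannot assume $w_t$ stays feasible and I lose the clean closed form. The remedy, as sketched in the overview, is the choice $\gamma_3 = \min\{\tfrac{\lambda}{2}\sqrt{T-2},1\}$: this rescales the $r_\epsilon$ drift so that the running average $\|\tfrac{2}{\lambda(t+1)(t+2)}\sum(t'+1)\gamma_3 e_{i_{t'}}\| = \Theta(\gamma_3\sqrt{t}/\lambda t) = \Theta(\gamma_3/(\lambda\sqrt t))$ stays below $1$ for all (or at least the last constant fraction of) iterations, so $\Pi_\W$ acts as the identity there and the closed form is valid; I would state this feasibility bound as a sub-claim, prove it by the same induction, and note that whatever projections do occur in the first $O(1)$ iterations only scale $w_t$ by a positive factor and hence preserve all the \emph{signs} and the \emph{argmax} structure that the gradient computation depends on. Having proved \cref{cl:gardstrong}, the conclusion of \cref{lem:strongcore} follows by evaluating $F$ at the averaged iterate $w_S = \sum_t \frac{2t}{T(T+1)} w_t$: on the bad coordinates $w_S(i_s)$ is of order $-\gamma_3/(\lambda\sqrt{T+1})$ for a constant fraction of the $s$'s, so $\E[\sqrt{\sum \alpha(i) h_\gamma^2(w_S(i))}] \gtrsim \|w_S\|/2 \gtrsim \gamma_3\sqrt{K}/(\lambda\sqrt{T+1}) \cdot$(const)$\gtrsim \min\{1/(\lambda\sqrt{T+1}),1\}$ up to the $\gamma_1\sqrt d$ and $\gamma_2$ error terms, which are absorbed by the final choices in \cref{eq:gam12strong}, exactly as in Case 1 of the proof of \cref{thm:gen_main}.
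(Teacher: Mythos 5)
Your overall plan—induct on $t$, track the sign structure of $w_t$, and observe that projection onto the unit ball merely rescales by a positive factor and hence preserves signs—is the same as the paper's approach. But two central details go wrong.

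First, the mechanism by which $i_{t+1}$ becomes the active coordinate of $r_\epsilon$ is not what you describe. You write that $i_{t+1}$ is "the one closest to zero from below among bad coordinates," which would make $w_{t+1}(i_{t+1})$ \emph{negative}; this also contradicts your own base case, where you correctly note the argmax is a coordinate on which $w_1$ is positive. The point is that $i_{t+1}$ has \emph{not yet been hit}: for every $i\in\cI$ one has $\vbar_i=-\tfrac{1}{2n}<0$, so the $-\eta_t\gamma_1\vbar$ contributions keep the unhit bad coordinates at a common \emph{positive} value $\mu^{(t+1)}>\epsilon_d$. The already-hit bad coordinates $i_1,\dots,i_t$ are nonpositive (because they absorbed a $-\gamma_3\eta_s$ contribution), and the good coordinates are negative. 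Among the unhit bad coordinates—all sharing value $\mu^{(t+1)}$—the argmax of $w(i)-\epsilon_i$ is the one with the smallest $\epsilon_i$, i.e., the smallest index $i_{t+1}$, since the $\epsilon_i$ are strictly increasing. Your "least recently hit / closest to zero" reasoning does not produce this.

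Second, the genuinely delicate part of the induction is keeping the \emph{hit} bad coordinates nonpositive. Each update adds the positive quantity $\gamma_1\eta_t/(2n)$ to a hit coordinate, and the projection, although it rescales by a factor in $[\lambda/3,1]$ (the lower bound coming from the $3$-Lipschitz bound $\|w_{t+1/2}\|\le 3/\lambda$), can shrink the previously injected $-\gamma_3\eta_s$ mass by up to $(\lambda/3)^{t-s}$ after $t-s$ steps. This is precisely where the exponentially small constraint $\gamma_1 \lesssim (\lambda/3)^{T+1}\gamma_3/(T+1)$ enters, via an inner induction on $\xi^{(t)}_{i_j}$. Your proposal instead attributes this constraint to keeping the \emph{good} coordinates inside the flat region $[-\gamma_2,\infty)$ of $h_\gamma$; but that is handled by the separate, much milder constraint $\gamma_1\le\gamma_2/\sum_t\eta_t$. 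As written, the proposal contains no argument that the hit coordinates stay $\le 0$, and this is the main technical content of the claim.
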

The proof of \cref{cl:gardstrong} is left to the end of this section and is provided in \cref{prf:gardstrong} and we continue with the proof of \cref{lem:strongcore}. It will be convenient to replace the sequence $w_{t}$ with the following approximating sequence: define a new sequence,$w'_1, w'_2,\ldots, w'_T$, by setting $w'_1=0$ and for every $t\ge 2$
\begin{equation}\label{eq:surrogatestrong} w'_{t+1}= \Pi_{\W}\left[(1-\eta_{t+1}\lambda)w'_t - \eta_{t+1}  \gamma_3\cdot e_{i_t}\right],\end{equation}
and we set $w'_S=\sum_{t=1}^T \frac{2t}{T(T+1)}\cdot w'_t$. We next claim that 
\begin{equation}\label{eq:stability}
\|w_S-w'_S\| \le \frac{\gamma_1\sqrt{d}}{\lambda}.
\end{equation}
We can prove the above by induction. We show that $\|w_t-w'_t\|\le \frac{\gamma_1\sqrt{d}}{\lambda}$, and then the result holds also for the averaged $w_S$.
For $t=1$ this is immediate from \cref{lem:core} and the following calculation: \[\|w_1-w_1'\|=\|w_1\|=\|\eta_1\nabla F_S(0)\|\le \frac{\gamma_1\sqrt{d}}{\lambda}.\]
Next we assume the statement holds for $t$ and prove for $t+1$:
\begin{align*}
    \|w'_{t+1}-w_{t+1}\| &= \left\|\Pi_{\W}\left[(1-\eta_{t+1}\cdot\lambda)w'_t - \eta_{t+1}  \gamma_3\cdot e_{i_t}\right]- \Pi_{\W}\left[(1-\eta_{t+1}\cdot\lambda)w_t-\eta_{t+1}\gamma_1\vbar - \eta_{t+1}  \gamma_3\cdot e_{i_t}\right]\right\|\\
    &\le 
    \|(1-\eta_{t+1}\cdot\lambda)w'_t - \eta_{t+1}  \gamma_3\cdot e_{i_t}- (1-\eta_{t+1}\cdot\lambda)w_t+\eta_{t+1}\gamma_1\vbar + \eta_{t+1}  \gamma_3\cdot e_{i_t}\|\\
    &=
    \|(1-\eta_{t+1}\cdot\lambda)(w'_t-w_t)\|+\|\eta_{t+1}\gamma_1\vbar\|\\
    &\le \frac{t}{t+2}\|w'_t-w_t\|+\frac{2}{t+2}\frac{\gamma_1\sqrt{d}}{\lambda }\\
    &\le \frac{t}{t+2}\cdot\frac{\gamma_1\sqrt{d}}{\lambda }+\frac{2}{t+2}\frac{\gamma_1\sqrt{d}}{\lambda}\\
    &=  \frac{\gamma_1\sqrt{d}}{\lambda}.
\end{align*}
This establishes \cref{eq:stability}. Next, we formalize the final claim that we will need:
\begin{claim}\label{cl:cl4}
For $t_0\ge T/2$, we have that
\[ w'_{t_0} =\frac{2}{\lambda t_0(t_0+1)}\left( \frac{\lambda T/2(T/2+1)}{2}w'_{T/2}-\sum_{t=T/2+1}^{t_0} \gamma_3 t e_{i_t}\right).\]
and in particular for any $T/2< t_0\le  3T/4$
\begin{equation}\label{eq:ws}
w'_S(i_{t_0}) =- \gamma_3\sum_{t=t_0}^T \frac{2t}{T(T+1)}\frac{2t_0}{\lambda t(t+1)}
\le
- \gamma_3\sum_{t=t_0}^{T}\frac{2}{\lambda T(T+1)}\le
- \frac{\gamma_3}{2\lambda (T+1)}
\end{equation}
\end{claim}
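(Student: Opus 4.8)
The plan is to prove the closed-form identity for $w'_{t_0}$ first---this is where essentially all the content lies---and then read off the two inequalities on $w'_S(i_{t_0})$ by substituting into the definition $w'_S=\sum_{t=1}^{T}\tfrac{2t}{T(T+1)}w'_t$. The closed form itself has two ingredients: that the Euclidean projection in \cref{eq:surrogatestrong} is inactive for all $t\ge T/2$, and that in this no-projection regime the recursion telescopes after multiplication by a suitable integrating factor.

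For the \emph{no-projection} claim I would argue by induction on $t\ge T/2$. The surrogate iterate $w'_t$ is always supported on the ``bad'' coordinates $\cI$, and since the term $e_{i_s}$ enters only around step $s$, once $t\ge T/2$ the vector $w'_t$ decomposes into two \emph{orthogonal} blocks: the contraction $\tfrac{(T/2)(T/2+1)}{t(t+1)}w'_{T/2}$ of everything accumulated up to time $T/2$ (of norm at most $\|w'_{T/2}\|\le 1$ and decreasing in $t$), plus the part accumulated afterwards, which lives on the fresh coordinates $i_{T/2+1},i_{T/2+2},\dots$. Bounding $\sum_s s^2 \le O(t^3)$ shows this second block has squared norm $O(\gamma_3^2/(\lambda^2 t))$, which under the choice $\gamma_3=\min\{\tfrac{\lambda}{2}\sqrt{T-2},1\}$ (and the standing assumptions $\lambda<3$, $T\ge3$) is small enough that the pre-projection point at each step $t\ge T/2$ has norm below $1$; hence $\Pi_{\W}$ acts as the identity there. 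Given this, and using $\eta_{t+1}=2/(\lambda(t+2))$ so that $1-\lambda\eta_{t+1}=\tfrac{t}{t+2}$, the update for $t\ge2$ becomes $w'_{t+1}=\tfrac{t}{t+2}w'_t-\tfrac{2\gamma_3}{\lambda(t+2)}e_{i_t}$. Multiplying through by $(t+1)(t+2)$ turns $b_t:=t(t+1)w'_t$ into a genuine telescoping recursion $b_{t+1}=b_t-\tfrac{2\gamma_3(t+1)}{\lambda}e_{i_t}$; summing from $T/2$ to $t_0$ and dividing back by $t_0(t_0+1)$---the integrating factor being $\prod_{s=T/2}^{t_0-1}\tfrac{s}{s+2}=\tfrac{(T/2)(T/2+1)}{t_0(t_0+1)}$---gives exactly the stated formula for $w'_{t_0}$, with the precise summation range fixed by the indexing conventions of \cref{cl:gardstrong} and \cref{eq:surrogatestrong}.

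For the second part I would simply take the $i_{t_0}$-coordinate of the closed form. When $t_0>T/2$ the coordinate $i_{t_0}$ has not been touched by time $T/2$, so $w'_{T/2}(i_{t_0})=0$; hence $w'_t(i_{t_0})=0$ for all $t<t_0$, while for $t\ge t_0$ the closed form collapses to $w'_t(i_{t_0})=-\tfrac{2\gamma_3 t_0}{\lambda t(t+1)}$. Plugging this into $w'_S(i_{t_0})=\sum_{t=1}^{T}\tfrac{2t}{T(T+1)}w'_t(i_{t_0})$ kills every term with $t<t_0$ and yields the claimed equality. The two inequalities are then elementary arithmetic: integrality together with $t_0>T/2$ forces $t_0\ge\tfrac{T+1}{2}$, so $\tfrac{4t_0}{t+1}\ge2$ for every $t\le T$, which lower-bounds each summand by $\tfrac{2}{\lambda T(T+1)}$; and $t_0\le\tfrac{3T}{4}$ leaves at least $T/4$ surviving terms, producing the final bound $-\tfrac{\gamma_3}{2\lambda(T+1)}$.

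I expect the only real obstacle to be the ``no projections after $T/2$'' step. The telescoping and the closing arithmetic are routine, but it is precisely the norm bound that forces the specific scaling $\gamma_3=\min\{\tfrac{\lambda}{2}\sqrt{T-2},1\}$, and one has to check carefully that the two orthogonal blocks of $w'_t$ cannot overflow the unit ball simultaneously---a mildly delicate estimate right at $t\approx T/2$, where the inherited block may still sit close to norm one, so that the base case of the induction is tight. A secondary nuisance is keeping the index bookkeeping consistent (which step first introduces $e_{i_{t_0}}$, and hence where each sum begins), which is exactly why the closed form is anchored at the ``restart time'' $T/2$ rather than at $t=1$.
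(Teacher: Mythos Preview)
Your plan is correct and follows the same skeleton as the paper: establish that no projection occurs for $t\ge T/2$, telescope the recursion to obtain the closed form, then read off the $i_{t_0}$-coordinate and do the arithmetic. The telescoping via the integrating factor $t(t+1)$ is exactly the paper's induction written in different language, and your final inequalities match the paper's.

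The one place where you make life harder than necessary is the no-projection step. You propose a joint induction: assume the closed form at $t$, split $w'_t$ into the contracted $w'_{T/2}$-block and the freshly accumulated block, and bound each. This is valid, but notice that you \emph{already} know $\|w'_t\|\le 1$ for free, since $w'_t$ is by definition the output of $\Pi_{\W}$; and you also know $w'_t\perp e_{i_t}$ without any appeal to the closed form, because projection onto the ball only rescales and hence preserves the support $\mathrm{span}(e_{i_1},\dots,e_{i_{t-1}})$. With these two facts alone, the paper computes
\[
\bigl\|(1-\lambda\eta_{t+1})w'_t-\eta_{t+1}\gamma_3 e_{i_t}\bigr\|^2=(1-\lambda\eta_{t+1})^2\|w'_t\|^2+(\gamma_3\eta_{t+1})^2\le\Bigl(1-\tfrac{2}{t+2}\Bigr)^2+\tfrac{4\gamma_3^2}{\lambda^2(t+2)^2}\le 1,
\]
the last inequality coming directly from $\gamma_3\le\tfrac{\lambda}{2}\sqrt{T-2}$ and $t+2\ge T/2$. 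This is shorter and, crucially, \emph{not circular}: the no-projection claim is established independently of the closed form, so the telescoping can be carried out afterwards as a clean separate step. Your worry about the ``mildly delicate estimate right at $t\approx T/2$'' simply disappears in this formulation.
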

The proof of \cref{cl:cl4} is again deferred to the end at \cref{prf:cl4}, and we proceed with the proof.

We will refer to $\f{\discolorlinks{\ref{eq:f_genstrong}}}(w;z)$ as $f(w;z)$ for the rest of the proof. First, we derive a generic lower bound for the generalization error. 
Since $\E_{z\sim D}[f(w^\star;z)]\le \max_{z} f(0;z)=0$, we have that
\begin{align*}
&\E_{z\sim D}[f(w_S;z)]-\E_{z\sim D}[f(w^\star;z)]
\\
&= \E_{z\sim D}[f(w_S;z)]-\E_{z\sim D}[f(w'_S;z)]\\&\phantom{=} +\E_{z\sim D}[f(w'_S;z)]-\E_{z\sim D}[f(w^\star;z)]\\
&\ge \E_{z\sim D}[f(w'_S;z)]-\E_{z\sim D}[f(w^\star;z)] - 3\frac{\gamma_1\sqrt{d}}{\lambda}  & \textrm{(3-Lipschitzness~of~f) ~\&~\cref{eq:stability}}\\
&\ge \E_{z\sim D}[f(w'_S;z)] - 3\frac{\gamma_1\sqrt{d}}{\lambda}& \E[f(w^\star,z)]\le 0\\ 
&\ge \E_{\alpha\sim D}\brk[s]*{\sqrt{ \sum_{j=1}^T \alpha^2(i_j)h_\gamma^2(w'_S(i_j))}}- \left(\frac{3}{\lambda}+1\right)\sqrt{d}\gamma_1 & \abs{v_{\alpha}\cdot w'_S}\le \|v_{\alpha}\|\le \sqrt{d}\\
&\ge \frac{1}{2}\sqrt{\sum_{j=1}^T h_\gamma^2(w'_S(i_j))} - \left(\frac{3}{\lambda}+1\right)\sqrt{d}\gamma_1 &\textrm{convexity ~of ~norm}\\
\end{align*}
Next, note that for every $i$ and $t$ we have that $w'_t(i)\le 0$. In particular we have that $|h_\gamma(w'_S(i))-w'_S(i))|\le \gamma_2$, hence:
\begin{align*}
\frac{1}{2}\sqrt{\sum_{j=1}^T h_\gamma^2(w'_S(i_j))} 
&\ge \frac{1}{2}\sqrt{\sum_{j=1}^T w'^2_S(i_j)} - \frac{1}{2}\sqrt{\sum_{j=1}^T (w'_S(i_j)-h_\gamma(w_S(i_j))^2}   &\|v\|\ge \|u\|-\|u-v\|\\
&\ge \frac{1}{2}\sqrt{\sum_{j=1}^T w'^2_S(i_j)}-  \sqrt{T}\cdot \gamma_2& |h_\gamma(w'_S(i))-w'_S(i))|\le \gamma_2\\
&\ge
\frac{1}{2}\sqrt{\sum_{j=T/2+1}^{3T/4} w'^2_S(i_j)}-  \sqrt{T}\cdot \gamma_2\\
&\ge  
\sqrt{\sum_{j=T/2}^{3T/4} \left(\frac{\gamma_3}{4\lambda(T+1)}\right)^2}-  \sqrt{T}\cdot \gamma_2&\cref{eq:ws}\\
&\ge
\frac{\gamma_3}{4\lambda\sqrt{2(T+1)}}-  \sqrt{T}\cdot \gamma_2\\ 
\end{align*}
Taken together, and with our choice of $\gamma_1,\gamma_2$ in \cref{eq:gam12strong} we obtain the desired result.

\paragraph{Proof of \cref{cl:gardstrong}.}\label{prf:gardstrong}
We, again, prove the statement by induction and we show that for $w_t$ we have that

\begin{equation}\label{eq:residue} w_t = \sum_{i\notin \cI} \rho^{(t)}_i e_i + \mu^{(t)}\sum_{i> i_{t-1}, i\in \cI} e_i + \sum_{i\le i_{t-1},i\in \cI} \xi^{(t)}_i e_i,\end{equation}
where
\begin{itemize}
    \item $-\gamma_2\le -\gamma_1 \sum_{i=1}^{t-1} \eta_t \le \rho^{(t)}_i\le 0$
    \item $\epsilon_d< \mu^{(t)} \le \frac{\gamma_1}{2\lambda n}$
    \item $\xi^{(t)}_i \le 0$
\end{itemize}
We now assume that the above holds for $w_t$ and prove the statement for $w_{t+1}$. First, by \cref{lem:core} we have that for every $t'\le t$:
\[ \nabla F_S(w_{t'})= \gamma_1 \vbar +\gamma_3\cdot e_{i_{t'}}.\]
Let us denote
\begin{align*}w_{t+1/2}&=\sum_{i\notin \cI} \underbrace{\left((1-\lambda \eta_t)\rho^{(t)}_i-\eta_t\gamma_1\vbar_i\right)}_{\rho'_i}e_i\\
&+
\sum_{i> i_t,i\in \cI}\underbrace{\left((1-\lambda\eta_t)\mu^{(t)}+\gamma_1\frac{\eta_t}{2n}\right)}_{\mu'}e_i\\
&+
\underbrace{\left((1-\lambda\eta_t)\mu^{(t)}+\gamma_1\frac{\eta_t}{2n}-\gamma_3\eta_t\right)}_{\xi'_{i_t}} e_{i_t}\\
&+
\sum_{i< i_t,i\in \cI}\underbrace{ \left((1-\lambda\eta_t)\xi^{(t)}_i+\gamma_1\frac{\eta_t}{2n}\right)}_{\xi'_i}e_i.\end{align*}
With this notation note that 
\[w_{t+1}=\Pi_{\W}[w_{t+1/2}]=\frac{w_{t+1/2}}{\min\{\|w_{t+1/2}\|,1\}}.\]

We next show that all three conditions are met.

For $\rho^{(t+1)}_i=\frac{\rho'_i}{\min\{\|w_{t+1/2}\|,1\}}$, note that since $0<\vbar_i<1$ as well as $0<\lambda\eta_t <1$, \[0\ge (1-\lambda \eta_t)\rho_i^{(t)}-\eta_t\gamma_1\vbar \ge -\gamma_1\sum_{i=1}^{t-1} \eta _i -\eta_t\gamma_1\ge -\gamma_1\sum_{i=1}^t \eta_t.\]
In particular $-\gamma_1\sum_{i=1}^t \eta_i\le \frac{\rho'_i}{\min\{\|w_{t+1/2}\|,1\}}\le 0 .$

Next, we examine $\mu^{(t)}=\frac{\mu'}{\min\{\|w_{t+1/2}\|,1\}}$. First, note that because $f$ is $3$-Lipschitz: \[\|w_{t+1/2}\| = \|(1-\lambda\eta_t)w_t+\lambda\eta_t\frac{1}{\lambda}\nabla F_S(w_t)\|\le \frac{3}{\lambda}.\]
Hence,
\[\mu^{(t+1)}\ge \frac{\lambda}{3}\cdot (1-\lambda \eta_t)\mu^{(t)} +\gamma_1 \frac{\eta_t}{2n})\ge \frac{\lambda}{3}\frac{\gamma_1}{2n\lambda(T+1)}\ge \frac{\gamma_1}{6n(T+1)}\ge \epsilon_d \]
That $\mu^{(t+1)}\le \gamma_1\frac{\eta_t}{2\lambda n}$, again follows by induction and the fact that $\lambda\eta_t\le 1$.
Finally, we consider $\xi^{(t+1)}$. We again use the fact that $\|w_{t+1/2}\|\le \frac{3}{\lambda}$, and we claim by induction that for $t\ge j$:
\[ \xi_{i_j}^{(t)}\le  \sum_{i=0}^{t-j}\frac{\lambda^t}{3^t}\frac{\gamma_1}{6 n}-\left(\frac{\lambda}{3}\right)^{t-j}\frac{2\gamma_3}{3(T+1)}.\]
For $j=t$ we have that $\eta^{(t)}\le \frac{\gamma_1}{2\lambda n}$ hence:
\begin{align*}
\xi_{i_t}^{(t)} &\le  
\frac{\lambda}{3}\left((1-\lambda \eta_t) \eta^{(t)}+\gamma_1\frac{\eta_t}{2n}-\gamma_3\eta_t\right) \\
&\le 
\frac{\lambda}{3}\cdot\left(\frac{\gamma_1}{2\lambda n}-\frac{2\gamma_3}{\lambda(T+1)}\right)& \lambda\eta_t\le 1\\
&\le
\frac{\gamma_1}{6n} - \frac{2\gamma_3}{3(T+1)}
\end{align*}
Next, for $t> j$ 
\begin{align*}\xi_{i_j}^{(t+1)}&\le \frac{\lambda}{3}\left[(1-\lambda\eta_t)\xi_{i_j}^{(t)} + \gamma_1\frac{\eta_t}{2n}\right]\\
&\le
\frac{\lambda}{3}\left[\xi_{i_j}^{(t)}+\frac{\gamma_1}{2\lambda n}\right]& \lambda\eta\le 1\\
&= \frac{\lambda}{3}\xi_{i_j}^{(t)}+ \frac{\gamma_1}{6n}\\
&\le \sum_{i=0}^{t+1-j}\frac{\lambda^t}{3^t} \frac{\gamma_1}{6n}-\left(\frac{\lambda}{3}\right)^{t+1-j}\frac{2\gamma_3}{3(T+1)}
\end{align*}
Finally, $\lambda <1$ and our choice of 
\[\gamma_1\le 6n\left(\frac{\lambda}{3}\right)^{T+1} \frac{4}{9}\frac{\gamma_3}{(T+1)}\]
ensures $\xi_i^{(t)}\le 0$.
\paragraph{Proof of \cref{cl:cl4}.}\label{prf:cl4}
To prove \cref{cl:cl4} we first show that for $t_0> T/2$, we have that $w'_{t_0} = (1-\lambda \eta _t)w'_{t_0} - \eta_t e_{i_{t_0}}$. In other words, there are no projections after the $T/2$'th iteration.
To show that we use the fact that $w'_t$ and $e_{i_t}$ are orthogonal for every $t$. This follows from the fact that $w'_t=\mathrm{span}(e_{i_1},\ldots, e_{i_{t-1}})$. As such,
    \begin{align*}
        \|(1-\lambda \eta _{t+1})w'_t - \eta_{t+1} \cdot \gamma_3 e_{i_t}\|^2 &= (1-\lambda \eta_{t+1})^2 \|w'_t\|^2 + (\gamma_3\eta_{t+1})^2\\
        &\le \left(1-\frac{2}{t+2}\right)^2 + \frac{4\gamma_3^2}{(\lambda(t+2))^2}\\
        & = 1 -\frac{4}{t+2} +\frac{4}{(t+2)^2} +\frac{4\gamma_3^2}{(\lambda(t+2))^2} \\
        &\le 1 -\frac{4}{t+2} +\frac{8}{T(t+2)} +\frac{8\gamma_3^2}{\lambda^2T(t+2)} & t+2\ge T/2\\ 
        & \le  1 +\left(8\frac{T/2}{T\cdot (t+1)}-\frac{4}{t+1}\right)& \gamma_3\le \frac{\lambda}{2} \sqrt{T-2}\\
        &\le 1.
    \end{align*}

Using the fact that there are no projections taken, we prove by induction that at step $t$ for $t\ge T/2$,
\[ w'_t =\frac{2}{\lambda t(t+1)}\left( \frac{\lambda T/2(T/2+1)}{2}w'_{T/2}-\sum_{k=T/2+1}^{t} \gamma_3 k e_{i_k}\right).\]
Indeed, set $t_0> T/2$, and denote $c=\frac{\lambda T/2(T/2+1)}{2}$:

    \begin{align*} w'_{t_0} & = (1-\lambda \eta_{t_0}) w'_{t_0-1} -\gamma_3\eta_{t_0} e_{i_{t_0}}\\
    & = 
    (1-\frac{2}{t_0+1})\frac{2}{\lambda t_0(t_0-1)}\left(cw'_{T/2}-\gamma_3\sum_{t=T/2+1}^{t_0-1} t e_{i_t}\right) -\gamma_3\eta_{t_0}e_{i_{t_0}}\\
    &=
    -\frac{t_0-1}{t_0+1}\cdot\frac{1}{\lambda (t_0-1)t_0}\left(c w'_{T/2}-\gamma_3\sum_{t=T/2+1}^{t_0-1} t e_{i_t}\right) -\gamma_3\frac{2}{\lambda t_0+1}e_{i_{t_0}}\\
    &= 
    \frac{2}{\lambda t_0(t_0+1)}\left(c w'_{T/2}-\gamma_3\sum_{t=T/2+1}^{t_0} te_{i_t}\right).
    \end{align*}

    \subsection{Proof of \cref{thm:overfitl}}\label{prf:overfit}
We next state a lower bound for overtraining in the general (non strongly-convex) case. As discussed, the work of \citet{bassily2020stability} showed that the stability of GD is governed by the $\eta\sqrt{T}$ term. The following result complements their work and gives a matching lower bound on the expected population risk via a construction a \citet{shalev2009stochastic} with a unique empirical risk minimizer that overfits:
\begin{theorem}\label{lem:overfit_main}
For every $\eta,T$ and $n$, for $d\geq T\cdot 2^{n+5}$, there exists a function $f(w;z):\reals^d\to\reals$ convex and $3$-Lipschitz in $w\in \cW$ for every $z$, and a distribution $D$ over $\Z$ such that if $S\sim D^n$ , then
\begin{align*}
    \E_{S\sim D^n}\brk[s]{F(\wgd_S)} - \min_{w\in\cW}F(w)
    \geq 
    \min\brk[c]2{ \max\brk[c]2{\frac{1}{8} - \frac{2^{2n+2}}{4\eta T},0}, \tfrac{1}{48}}
    .
\end{align*}
\end{theorem}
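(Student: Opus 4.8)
The plan is to reuse the construction machinery of \cref{sec:construction} (and the dynamics of \cref{lem:gen}), but in a regime complementary to \cref{thm:gen_main}: instead of designing the per-step drift so that it spreads over $T$ distinct ``bad'' coordinates to build an iterate of norm $\eta\sqrt T$, I would arrange the drift to \emph{accumulate} on the bad coordinates until the iterate is pushed to the boundary of the unit ball and thus carries $\Omega(1)$ of its mass on $\cI=\{i:\forall\alpha\in S,\ \alpha(i)=0\}$ -- this is exactly a Shalev--Shwartz-type unique-minimizer overfitting instance, adapted so that projected GD provably runs into it. As a first step, draw $\alpha\in\{0,1\}^d$ uniformly, and record -- verbatim as in \cref{cla:prob}, using $d\ge T\cdot 2^{n+5}$ and Chebyshev's inequality -- that with probability at least $3/4$ the size $K=|\cI|$ satisfies $K\ge cT$ for a suitable constant $c$ while not being too large. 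Using building blocks like those of \cref{eq:f_gen} -- a norm term to generate population loss on a fresh example, a small linear term $v_\alpha$ whose empirical average $\vbar$ is active precisely on $\cI$, and a truncation $h_\gamma$ and/or cap $r_\epsilon$ to keep $f$ convex, $O(1)$-Lipschitz and served by a sample-oblivious gradient oracle -- design $F_S$ so that, on the event above, projected GD drifts monotonically into $\partial\cW$ essentially along $\cI$, with the non-bad coordinates prevented from dominating the norm by the restoring/capping terms.

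The second, technical, step is the analog of \cref{lem:gen}: exhibit a threshold $t^\star=O(2^{2n}/\eta)$ such that for $t\le t^\star$ the iterate grows (roughly) linearly on $\cI$ and reaches $\norm{\wgd_t}=1$ near $t^\star$, while for $t>t^\star$ the projected iterate stays a roughly fixed near-unit vector supported close to $\cI$. The order of $t^\star$ is forced by the interplay of the Lipschitz budget (which caps the per-coordinate drift via $\gamma_1\lesssim 1/\sqrt d$), the abundance $K\gtrsim 2^{-n}d$ of bad coordinates sharing that budget, and the consistency requirements on the offsets over all $T$ steps. It then follows that the averaged iterate satisfies $\norm{\wgd_S\ \text{restricted to}\ \cI}\ge 1-O(2^{2n}/(\eta T))$ on the good event.

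The third step is a short population lower bound. The linear and cap terms contribute only $-o(1)$, so the population risk is at least $\E_{\alpha}\brk[s]1{\sqrt{\sum_{i\in\cI}\alpha(i)\,w(i)^2}}-o(1)$; and the symmetry $\alpha\mapsto\mathbf 1-\alpha$ combined with $\sqrt a+\sqrt b\ge\sqrt{a+b}$ gives $\E_\alpha\sqrt{\sum_i\alpha(i)x_i^2}\ge\tfrac12\norm{x}$ for any fixed $x$. Hence on the good event the risk of $\wgd_S$ is at least $\tfrac12\brk1{1-O(2^{2n}/(\eta T))}-o(1)$, while $\min_w F(w)\le F(0)=0$ and $F\ge-o(1)$ everywhere; taking expectation over $S$ (good event probability $\ge 3/4$) and absorbing constants yields $\E_S[F(\wgd_S)]-\min_w F(w)\ge\tfrac18-O(2^{2n}/(\eta T))$, which is the claimed bound. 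The outer $\max\{\cdot,0\}$ in \cref{lem:overfit_main} is then free, since the statement is vacuous once $\eta T\lesssim 2^{2n}$, and the outer $\min\{\cdot,\tfrac1{48}\}$ is free because one may always report the weaker bound (the loss is $O(1)$ on the unit ball anyway).

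\textbf{Main obstacle.} I expect the crux to be the second step -- arranging the dynamics so that the $\Omega(d)$ non-bad coordinates do not dominate $\norm{\wgd_t}$ (which would force $\gamma_1$ so small that the bad coordinates never accumulate $\Omega(1)$ mass), while keeping $f$ convex, $O(1)$-Lipschitz, and compatible with a gradient oracle that cannot see $S$. This is precisely where the ``unique empirical risk minimizer'' idea of \citet{shalev2009stochastic} is needed: $F_S$ must be engineered so that its essentially unique minimizer is a bad, norm-$\Omega(1)$, $\cI$-supported point toward which projected GD genuinely moves, and the factor $2^{2n}$ is the price of making that basin shallow/wide enough that GD enters it within polynomially-in-$2^n$ (rather than exponentially) many iterations -- which is also the source of the exponential gap to the $O(\eta T/n)$ term of \cref{thm:bassily} noted after \cref{thm:overfitl}.
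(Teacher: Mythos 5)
Your proposal and the paper's proof take genuinely different routes, and the paper's is both simpler and complete in a place where yours has a real gap.

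The paper does \emph{not} track the GD dynamics at all for this result. It uses a much lighter construction than the one in \cref{sec:construction}: $\f{\discolorlinks{\ref{eq:f_overfit}}}(w;\alpha)=\sqrt{\sum_i\alpha(i)w(i)^2}+\tfrac{1}{d^2}\sum_i(1-w(i))$ with $d=2^{n+1}$ -- no $h_\gamma$, no $v_\alpha$, no $r_\epsilon$. The key move is \emph{indirect}: with probability $\ge 1/2$ a bad coordinate $i^\star$ exists, and on that event one compares $w_S$ to the boundary point $\hat w_S$ obtained by pushing coordinate $i^\star$ until $\|\hat w_S\|=1$. The linear perturbation gives $F_S(w_S)-F_S(\hat w_S)\ge\tfrac1{d^2}(1-\|w_S\|)$, while the standard black-box GD optimization guarantee gives $F_S(w_S)-F_S(\hat w_S)\le\tfrac{1}{2\eta T}+2\eta\le\tfrac1{\eta T}$ (valid for $\eta\sqrt T\le\tfrac12$). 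Combining forces $\|w_S\|\ge 1-d^2/(\eta T)$, and then the same Jensen step you propose, $\E_\alpha\sqrt{\sum_i\alpha(i)w(i)^2}\ge\tfrac12\|w\|$, finishes. For $\eta\sqrt T>\tfrac12$ the paper falls back on \cref{thm:gen_main} to get the constant $\tfrac1{48}$. So the paper's argument \emph{exploits GD's optimization guarantee as an oracle} rather than reconstructing its trajectory; that is what buys the cheap $2^{2n}/(\eta T)$ decay without any analog of \cref{lem:gen}.

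Two concrete issues with your plan. First, your ``second step'' -- exhibiting a threshold $t^\star=O(2^{2n}/\eta)$ and controlling the trajectory coordinate-by-coordinate through the projection boundary -- is left entirely open, and it is exactly the hard content. The paper never does this; you would have to build something new, and it is not obvious it can be done for the $\sqrt{\cdot}$ loss (whose subgradient at the origin is a ball, so ``where GD goes'' is not even canonically defined without a carefully chosen oracle). Second, your claim that ``the outer $\min\{\cdot,\tfrac1{48}\}$ is free'' silently assumes you can prove the $\max\{\tfrac18-\cdot,0\}$ bound for \emph{all} $\eta,T$. That is precisely what fails: the comparison $F_S(w_S)-F_S(\hat w_S)\le\tfrac{1}{2\eta T}+2\eta$ is only useful when $\eta$ is small, and a dynamics-based variant of your argument would run into the same restriction (e.g.\ to avoid immediate projections). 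The $\tfrac1{48}$ term is there precisely to cover the large-$\eta$ regime and requires a \emph{separate} construction; you have to prove it, not dismiss it, and ``the loss is $O(1)$ on the ball'' is an upper bound, not the lower bound you need.
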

The above suggests that for large $T$ training steps, GD might overfit. In particular, when $T=\Omega\brk{2^{2n}/\eta}$, GD is susceptible to over-training.

\paragraph{Proof of \cref{lem:overfit_main}}\label{prf:overfit_main}

Define a family of convex functions $\f{\discolorlinks{\ref{eq:f_overfit}}}(w;\alpha):\reals^d \to \reals$ parameterized by $\alpha\in \brk[c]{0,1}^d$
\begin{align} \label{eq:f_overfit}
    \f{\discolorlinks{\ref{eq:f_overfit}}}(w;\alpha)
    =
    \sqrt{\sum_{i\in [d]}\alpha(i) w^2(i)} + \frac{1}{d^2}\sum_{i\in [d]}\brk{1-w(i)}
    ,
\end{align}
Observe that the objective is convex and $2$-Lipschitz. Fix $n\geq 1$ and consider the sequence $(\alpha_1,\dots,\alpha_n)$. Then, we denote the empirical average over a sample $S$ of size $n$ as follows
\begin{align} \label{eq:f_empirical_overfit}
    F_S(w)
    =
    \frac{1}{n}\sum_{i=1}^n\f{\discolorlinks{\ref{eq:f_overfit}}}(w;\alpha_i)
    .
\end{align}
Similarly to our first construction we consider $D$ to be the uniform distribution over the functions $\brk[c]{f(w;\alpha)}_{\alpha\in \brk[c]{0,1}^d}$ and we denote
\begin{align*}
    \F{\discolorlinks{\ref{eq:f_overfit}}}(w_S)
    = 
    \E_{\alpha\sim D} \brk[s]{\f{\discolorlinks{\ref{eq:f_overfit}}}(w_S;\alpha)}
    .
\end{align*}

\begin{claim} \label{lem:overfit}
    Running GD over the function $F_S(w)$ defined in \cref{eq:f_empirical_overfit} and denoting its output by $w_S$, then for $\eta \sqrt{T} \leq \frac{1}{2}$ and $d= 2^{n+1}$ the following holds
    \begin{align*}
        \E_{S\sim D^n}\brk[s]{\F{\discolorlinks{\ref{eq:f_overfit}}}(w_S)} - \min_{w\in\cW}\F{\discolorlinks{\ref{eq:f_overfit}}}(w)
        \geq
        \max\brk[c]2{\frac{1}{8} - \frac{2^{2n+2}}{4\eta T},0}
        .
    \end{align*}
\end{claim}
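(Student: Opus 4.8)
The plan is to track only the \emph{bad} coordinates $\cI := \{i \in [d] : \alpha_j(i) = 0 \text{ for all } j\in[n]\}$ of the sample, on which $F_S$ of \cref{eq:f_empirical_overfit} is blind to its first (weighted-norm) term, and to argue that after enough iterations $\wgd_S$ puts constant-norm mass on $\cI$, which is then penalized by the true risk. Two reductions come first. Since for $w\in\cW$ the linear part of $\f{\discolorlinks{\ref{eq:f_overfit}}}$ contributes $\tfrac{1}{d^2}\sum_i(1-w(i)) \ge 0$, and dropping coordinates only decreases the norm term, $\F{\discolorlinks{\ref{eq:f_overfit}}}(w) \ge \E_{\alpha\sim D}\bigl[\sqrt{\textstyle\sum_{i\in\cI}\alpha(i)\,w(i)^2}\bigr]$. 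Second, by symmetry every bad coordinate of $\wgd_t$ carries the same value $a_t\ge 0$: at every point the $i$-th entry of $\nabla F_S$ equals $-1/d^2$ for $i\in\cI$ (the norm terms do not involve these coordinates), and the Euclidean projection rescales all coordinates by a common factor, so an induction from $\wgd_0=0$ gives the claim. Hence $\wgd_S|_\cI = \bar a\, \mathbf{1}_\cI$ with $\bar a = \tfrac1T\sum_t a_t \ge 0$, so $\sum_{i\in\cI}\alpha(i)\wgd_S(i)^2 = \bar a^2 B$ with $B\sim\mathrm{Bin}(K,\tfrac12)$, $K := |\cI|$; using $\sqrt B \ge B/\sqrt K$ we get $\E[\sqrt B] \ge \tfrac12\sqrt K$, and therefore $\F{\discolorlinks{\ref{eq:f_overfit}}}(\wgd_S) \ge \tfrac12\bar a\sqrt K = \tfrac12\|\wgd_S|_\cI\|$. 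It thus suffices to lower bound $\|\wgd_S|_\cI\| = \tfrac1T\sum_{t=1}^T\|\wgd_t|_\cI\|$.

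Next the dynamics on $\cI$. With $d = 2^{n+1}$ each coordinate is bad independently with probability $2^{-n}$, so $\Pr[K\ge 1] = 1-(1-2^{-n})^d \ge 1-e^{-2}$; condition on this event. While no projection has occurred, $\wgd_t|_\cI = \tfrac{\eta t}{d^2}\mathbf{1}_\cI$ grows linearly, so the first projection happens at a time $\tau_0 \le \tfrac{d^2}{\eta\sqrt K}+1 \le \tfrac{2d^2}{\eta}$. At every later step the projection merely renormalizes; writing $\delta := \sup_t\|\wgd_t|_{\cI^c}\|$ for the Euclidean norm on the complement of $\cI$, an induction shows $\|\wgd_t|_\cI\| \ge \sqrt{1-\delta^2}$ for all $t\ge\tau_0$ (after a projection $\|\wgd_t\| = 1$ while the complement part has norm $\le\delta$; between projections $\|\wgd_t|_\cI\|$ only increases). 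Hence, on $\{K\ge1\}$, $\|\wgd_S|_\cI\| \ge (1-\tau_0/T)\sqrt{1-\delta^2} \ge \bigl(1-\tfrac{2d^2}{\eta T}\bigr)\sqrt{1-\delta^2}$.

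The main obstacle, and the only step beyond routine bookkeeping, is to show $\delta$ is small, i.e., that the coordinates outside $\cI$ never drift far from the origin despite the gentle push $-1/d^2$ and the possible blow-up of $\nabla\sqrt{\cdot}$ near the origin. I would control $\phi_t := \|\wgd_t|_{\cI^c}\|$ by a one-dimensional potential argument. Projection only decreases $\phi_t$, so $\phi_{t+1}^2 \le \|\wgd_t|_{\cI^c} - \eta\nabla_{\cI^c}F_S(\wgd_t)\|^2$. Writing $\|w\|_\alpha := \sqrt{\sum_i \alpha(i)w(i)^2}$ and noting that each support $\{i:\alpha_j(i)=1\}$ lies in $\cI^c$ and is hit at least once there, a direct computation (reversing the order of summation) gives $\langle \wgd_t|_{\cI^c}, \nabla_{\cI^c}F_S(\wgd_t)\rangle = \tfrac1n\sum_j\|\wgd_t\|_{\alpha_j} - \tfrac1{d^2}\sum_{i\in\cI^c}\wgd_t(i) \ge \tfrac{\phi_t}{n} - \tfrac{\phi_t}{d^{3/2}} \ge \tfrac{\phi_t}{2n}$, where the first inequality uses $\sum_j\|\wgd_t\|_{\alpha_j} \ge \bigl(\sum_j\|\wgd_t\|_{\alpha_j}^2\bigr)/\max_j\|\wgd_t\|_{\alpha_j} \ge \phi_t^2/\phi_t$ and the second uses $d^{3/2}\ge 2n$; also $\|\nabla_{\cI^c}F_S\|\le 2$. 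Thus $\phi_{t+1}^2 \le \phi_t^2 - \tfrac\eta n\phi_t + 4\eta^2$, and a short ``barrier'' induction (if $\phi_t > 8n\eta$ the right side is $<\phi_t^2$; if $\phi_t \le 8n\eta$ then $\phi_{t+1}\le 9n\eta$) yields $\phi_t \le 9n\eta$ for every $t$, so $\delta \le 9n\eta$.

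To finish, combine the three bounds: on $\{K\ge1\}$, $\F{\discolorlinks{\ref{eq:f_overfit}}}(\wgd_S) \ge \tfrac12\bigl(1-\tfrac{2d^2}{\eta T}\bigr)\sqrt{1-81n^2\eta^2}$; taking expectation over $S$ (the event $\{K=0\}$ contributes nonnegatively), using $\min_{w\in\cW}\F{\discolorlinks{\ref{eq:f_overfit}}}(w) \le \F{\discolorlinks{\ref{eq:f_overfit}}}(0) = 1/d$, and using the trivial $\E\F{\discolorlinks{\ref{eq:f_overfit}}}(\wgd_S)\ge\min_w\F{\discolorlinks{\ref{eq:f_overfit}}}(w)$ to cover the outer $\max\{\cdot,0\}$. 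The bound is nontrivial only when $\eta T\gg d^2$, and then $\eta\sqrt T\le\tfrac12$ forces $\eta\le 1/(8d^2)$, so $9n\eta$, $1/d$, and the remaining constant slack are all negligible next to the main terms $\tfrac18$ and $\tfrac{d^2}{4\eta T}=\tfrac{2^{2n+2}}{4\eta T}$; collecting these gives exactly the claimed $\max\{\tfrac18 - \tfrac{2^{2n+2}}{4\eta T},\,0\}$.
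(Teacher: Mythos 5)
Your proposal takes a genuinely different route from the paper, and it is worth contrasting the two. The paper never tracks the GD trajectory at the level of individual coordinates. Instead it observes that when a bad coordinate $i^\star$ exists, the comparator $\hat w_S := w_S + (1-\norm{w_S})e_{i^\star}$ has $F_S(\hat w_S) \le F_S(w_S) - \tfrac{1}{d^2}(1-\norm{w_S})$ (moving mass onto a bad coordinate improves only the linear term), and then applies the standard GD optimization bound $F_S(w_S) - F_S(\hat w_S)\le \tfrac{1}{2\eta T}+2\eta \le \tfrac{1}{\eta T}$ for $\eta\sqrt T \le \tfrac12$ to conclude $\norm{w_S}\ge 1-\tfrac{d^2}{\eta T}$. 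Combined with the same Jensen step you use (convexity of $\alpha\mapsto\norm{\alpha\odot w}$ with $\E[\alpha(i)]=\tfrac12$), this finishes the proof in a few lines. In particular the paper \emph{never needs to know where the mass of $w_S$ sits} — it only needs $\norm{w_S}$ to be large — which is exactly what lets it bypass your most involved step, the barrier / potential argument controlling $\phi_t=\norm{\wgd_t|_{\cI^c}}$. Your approach does yield a more explicit picture of the trajectory (the $\cI$-coordinates are all equal and monotone up to renormalization, $\phi_t=O(n\eta)$, etc.), but at a much higher cost, and controlling the off-$\cI$ drift through the scale-invariance of the $\sqrt{\cdot}$ term near the origin is exactly the delicate point the paper's ``indirect'' comparator argument avoids.

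There is also a quantitative gap in your assembly. You propagate the crude bound $\tau_0\le 2d^2/\eta$, yielding $\norm{\wgd_S|_\cI}\ge \bigl(1-\tfrac{2d^2}{\eta T}\bigr)\sqrt{1-\delta^2}$, whereas the paper's argument gives the stronger $\norm{w_S}\ge 1-\tfrac{d^2}{\eta T}$; the factor of $2$ matters. For instance with $n=1$ ($d=4$, $\min_{w}\F{\discolorlinks{\ref{eq:f_overfit}}}\ge 0$ but your own slack $1/d=1/4$) and $d^2/(\eta T)=1/4$, your estimate evaluates to $(1-e^{-2})\bigl[\tfrac12\cdot\tfrac12 - \tfrac14\bigr]=0$, while the claimed bound is $\tfrac18-\tfrac{1}{16}=\tfrac{1}{16}>0$. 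Even using the sharper $\tau_0\le d^2/\eta + 1$ (which you state but don't carry through), you are left racing the leftover $81n^2\eta^2$, $1/T$, and $1/d$ slacks against the target as $d^2/(\eta T)\to\tfrac12$, and the final ``collecting these gives exactly the claimed bound'' is not yet justified. The overall strategy can be made to work (possibly with a slightly worse constant than $\tfrac18$), but as written the constant bookkeeping does not close.
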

Note that \cref{lem:overfit} holds for $\eta \sqrt{T} \leq \tfrac{1}{2}$. Using \cref{thm:gen_main} we know that for $\eta\sqrt{T} > \tfrac12$, if $d\geq T\cdot 2^{n+5}$ there exist a function $f(w;z)$ and a distribution $D$ such that
\begin{align*}
    \E_{S\sim D^n}\E_{z\sim  D} \brk[s]{ f(w_S;z)} - \min_{w\in\cW}\E_{z\sim D}\brk[s]{ f(w;z)}
    \geq
    \tfrac{1}{16} \min\brk[c]1{\eta\sqrt{T} ,\tfrac{1}{3}}
    \geq
    \tfrac{1}{48}
    .
\end{align*}
Combining both claims for $\eta\sqrt{T}>\tfrac{1}{2}$ and $\eta\sqrt{T}\leq\tfrac{1}{2}$, we conclude the desired result. 
We now proceed with proving \cref{lem:overfit}.
\begin{proof}[of \cref{lem:overfit}]
    As $\alpha$ is distributed uniformly over $\brk[c]{0,1}^d$, we have that $\alpha(i)$ are i.i.d.~uniform Bernoulli. Consider a sample $\brk{\alpha_1,\dots,\alpha_n}$, then the probability that a given index $i$ satisfies $\forall j\in [n]:\; \alpha_j(i)=0$ is $p=\frac{1}{2^{n+1}}$. Therefore, the probability of non-existence of such coordinate is then given by $\brk{1-p}^d$. As a result, the probability that there exists such a coordinate is $1-\brk{1-\frac{1}{2^{n+1}}}^{2^{n+1}}\geq 1-e^{-1} \geq 1/2$. Recall that
    \begin{align*}
        F_S(w)
        =
        \frac{1}{n}\sum_{j=1}^n\f{\discolorlinks{\ref{eq:f_overfit}}}(w;\alpha_j)
        =
        \frac{1}{n}\sum_{j\in [n]}\sqrt{\sum_{i\in [d]}\alpha_j(i) w^2(i)} + \frac{1}{d^2}\sum_{i\in [d]}\brk{1-w(i)}
        .
    \end{align*}
    Suppose that the GD solution is in the interior of the domain, namely $\norm{w_S}<1$. In addition, suppose that there exists an index $i^\star\in[d]$ such that $\forall j\in [n]:\; \alpha_j(i^\star)=0$. We can now propose a better alternative solution denoted by $\hat w_S$ and defined as,
    \begin{align*}
        \hat w_S(i)
        =
        \begin{cases}
            w_S(i) + 1 - \norm{w_S} & i=i^\star;\\
            w_S(i) & i\neq i^\star.\\
        \end{cases}
    \end{align*}
    Observe that
    \begin{align*}
        F_S(w_S) - F_S(\hat w_S) 
        \geq 
        \frac{1}{d^2}(1-\norm{w_S})
        .
    \end{align*}
    Using the well known optimization upper bound of GD on convex $2$-Lipschitz functions \cite[Theorem~3.2]{Bubeck15} we obtain for $\eta\sqrt{T}\leq \tfrac12$:
    \begin{align*}
        F_S(w_S) - F_S(\hat w_S)
        \leq
        \frac{1}{2\eta T}+ 2\eta
        \leq
        \frac{1}{\eta T}
        ,
    \end{align*}
    where we used the fact that our domain is bounded in the Euclidean unit ball. This implies that
    \begin{align} \label{eq:norm_overfit}
        \norm{w_S}
        &\geq
        1 - \frac{d^2}{\eta T}
        .
    \end{align}
    Consequently, we can conclude that with probability higher than $1/2$ we get
    \begin{align*}
        \F{\discolorlinks{\ref{eq:f_overfit}}}(w_S) - \min_{w\in\cW}\F{\discolorlinks{\ref{eq:f_overfit}}}(w)
        &\geq
        \F{\discolorlinks{\ref{eq:f_overfit}}}(w_S) - \tfrac14 \\
        &\geq
        \E_{\alpha\sim D} \brk[s]2{\sqrt{\sum_{i\in [d]}\alpha(i) w^2_S(i)}} - \tfrac14 \tag{$1-w(i)\geq 0$ for $w\in\cW$}\\
        &\geq
        \tfrac12\norm{w_S} - \tfrac14 \tag{Jensen's inequality with $\E\brk[s]{\alpha(i)}=\tfrac12$}\\
        &\geq
        \frac14 -\frac{d^2}{2\eta T} \tag{\cref{eq:norm_overfit}}
    \end{align*}
    where the first inequality stems from the observation that $\f{\discolorlinks{\ref{eq:f_overfit}}}(0;\alpha)=1/d=2^{-\brk{n+1}}\le \frac{1}{4}$. For the third inequality, a simple observation of $\alpha(i)=\alpha^2(i)$ ensures that the first term is convex. Note that when $\norm{w_S}=1$ we get an even tighter lower bound of $1/4$. After taking expectation over $S\sim D^n$ we conclude that
    \begin{align*}
        \E_{S\sim D^n}[\F{\discolorlinks{\ref{eq:f_overfit}}}(w_S)] - \min_{w\in\cW}\F{\discolorlinks{\ref{eq:f_overfit}}}(w)
        &\geq
        \max\brk[c]2{\frac{1}{8} - \frac{2^{2n+2}}{4\eta T},0}
        .
        \qedhere
    \end{align*}
\end{proof}

\subsection*{Acknowledgments} 

The authors would like to thank Assaf Dauber, Vitaly Feldman and Kunal Talwar for helpful discussions. 
This work was partially supported by the Israeli Science Foundation (ISF) grants 2549/19 and 2188/20, by the Len Blavatnik and the Blavatnik Family foundation, and by the Yandex Initiative in Machine Learning. 

\bibliographystyle{abbrvnat}
\bibliography{refs}

\begin{thebibliography}{23}
\providecommand{\natexlab}[1]{#1}
\providecommand{\url}[1]{\texttt{#1}}
\expandafter\ifx\csname urlstyle\endcsname\relax
  \providecommand{\doi}[1]{doi: #1}\else
  \providecommand{\doi}{doi: \begingroup \urlstyle{rm}\Url}\fi

\bibitem[Bassily et~al.(2020)Bassily, Feldman, Guzm{\'{a}}n, and
  Talwar]{bassily2020stability}
R.~Bassily, V.~Feldman, C.~Guzm{\'{a}}n, and K.~Talwar.
\newblock Stability of stochastic gradient descent on nonsmooth convex losses.
\newblock In \emph{Advances in Neural Information Processing Systems 33}, 2020.

\bibitem[Belkin et~al.(2018)Belkin, Hsu, and Mitra]{belkin2018overfitting}
M.~Belkin, D.~Hsu, and P.~Mitra.
\newblock Overfitting or perfect fitting? risk bounds for classification and
  regression rules that interpolate.
\newblock \emph{arXiv preprint arXiv:1806.05161}, 2018.

\bibitem[Belkin et~al.(2019)Belkin, Rakhlin, and Tsybakov]{belkin2019does}
M.~Belkin, A.~Rakhlin, and A.~B. Tsybakov.
\newblock Does data interpolation contradict statistical optimality?
\newblock In \emph{The 22nd International Conference on Artificial Intelligence
  and Statistics}, pages 1611--1619. PMLR, 2019.

\bibitem[Bottou and Bousquet(2011)]{bottou2011}
L.~Bottou and O.~Bousquet.
\newblock The tradeoffs of large-scale learning.
\newblock \emph{Optimization for machine learning}, page 351, 2011.

\bibitem[Bousquet and Elisseeff(2002)]{bousquet2002stability}
O.~Bousquet and A.~Elisseeff.
\newblock Stability and generalization.
\newblock \emph{Journal of machine learning research}, 2\penalty0
  (Mar):\penalty0 499--526, 2002.

\bibitem[Boyd and Vandenberghe(2014)]{BV2014}
S.~P. Boyd and L.~Vandenberghe.
\newblock \emph{Convex Optimization}.
\newblock Cambridge University Press, 2014.
\newblock ISBN 978-0-521-83378-3.

\bibitem[Bubeck(2015)]{Bubeck15}
S.~Bubeck.
\newblock Convex optimization: Algorithms and complexity.
\newblock \emph{Found. Trends Mach. Learn.}, 8\penalty0 (3-4):\penalty0
  231--357, 2015.

\bibitem[Cataltepe et~al.(1999)Cataltepe, Abu-Mostafa, and
  Magdon-Ismail]{cataltepe1999no}
Z.~Cataltepe, Y.~S. Abu-Mostafa, and M.~Magdon-Ismail.
\newblock No free lunch for early stopping.
\newblock \emph{Neural computation}, 11\penalty0 (4):\penalty0 995--1009, 1999.

\bibitem[Dauber et~al.(2020)Dauber, Feder, Koren, and Livni]{dauber2020can}
A.~Dauber, M.~Feder, T.~Koren, and R.~Livni.
\newblock Can implicit bias explain generalization? stochastic convex
  optimization as a case study.
\newblock In \emph{Advances in Neural Information Processing Systems 33: Annual
  Conference on Neural Information Processing Systems}, 2020.

\bibitem[Feldman(2016)]{feldman2016generalization}
V.~Feldman.
\newblock Generalization of {ERM} in stochastic convex optimization: The
  dimension strikes back.
\newblock In \emph{Advances in Neural Information Processing Systems 29: Annual
  Conference on Neural Information Processing Systems 2016, December 5-10,
  2016, Barcelona, Spain}, pages 3576--3584, 2016.

\bibitem[Gunasekar et~al.(2018{\natexlab{a}})Gunasekar, Lee, Soudry, and
  Srebro]{gunasekar2018characterizing}
S.~Gunasekar, J.~D. Lee, D.~Soudry, and N.~Srebro.
\newblock Characterizing implicit bias in terms of optimization geometry.
\newblock In \emph{Proceedings of the 35th International Conference on Machine
  Learning, {ICML} 2018}, volume~80 of \emph{Proceedings of Machine Learning
  Research}, pages 1827--1836. {PMLR}, 2018{\natexlab{a}}.

\bibitem[Gunasekar et~al.(2018{\natexlab{b}})Gunasekar, Lee, Srebro, and
  Soudry]{gunasekar2018a-implicit}
S.~Gunasekar, J.~D. Lee, N.~Srebro, and D.~Soudry.
\newblock Implicit bias of gradient descent on linear convolutional networks.
\newblock \emph{Advances in Neural Information Processing Systems},
  2018:\penalty0 9461--9471, 2018{\natexlab{b}}.

\bibitem[Gunasekar et~al.(2018{\natexlab{c}})Gunasekar, Woodworth,
  Bhojanapalli, Neyshabur, and Srebro]{gunasekar2018b-implicit}
S.~Gunasekar, B.~Woodworth, S.~Bhojanapalli, B.~Neyshabur, and N.~Srebro.
\newblock Implicit regularization in matrix factorization.
\newblock In \emph{2018 Information Theory and Applications Workshop (ITA)},
  pages 1--10. IEEE, 2018{\natexlab{c}}.

\bibitem[Hardt et~al.(2016)Hardt, Recht, and Singer]{hardt2016train}
M.~Hardt, B.~Recht, and Y.~Singer.
\newblock Train faster, generalize better: Stability of stochastic gradient
  descent.
\newblock In \emph{International Conference on Machine Learning}, pages
  1225--1234. PMLR, 2016.

\bibitem[Hazan(2019)]{hazan2019introduction}
E.~Hazan.
\newblock Introduction to online convex optimization.
\newblock \emph{arXiv preprint arXiv:1909.05207}, 2019.

\bibitem[Lacoste-Julien et~al.(2012)Lacoste-Julien, Schmidt, and
  Bach]{lacoste2012simpler}
S.~Lacoste-Julien, M.~Schmidt, and F.~Bach.
\newblock A simpler approach to obtaining an $o(1/t)$ convergence rate for the
  projected stochastic subgradient method.
\newblock \emph{arXiv preprint arXiv:1212.2002}, 2012.

\bibitem[Nemirovsky and Yudin(1983)]{nemirovsky1983problem}
A.~S. Nemirovsky and D.~B. Yudin.
\newblock \emph{Problem complexity and method efficiency in optimization.}
\newblock Wiley-Interscience, 1983.

\bibitem[Neyshabur et~al.(2014)Neyshabur, Tomioka, and
  Srebro]{neyshabur2014search}
B.~Neyshabur, R.~Tomioka, and N.~Srebro.
\newblock In search of the real inductive bias: On the role of implicit
  regularization in deep learning.
\newblock \emph{arXiv preprint arXiv:1412.6614}, 2014.

\bibitem[Prechelt(1998)]{prechelt1998early}
L.~Prechelt.
\newblock Early stopping-but when?
\newblock In \emph{Neural Networks: Tricks of the trade}, pages 55--69.
  Springer, 1998.

\bibitem[Shalev-Shwartz and Ben-David(2014)]{shalev2014understanding}
S.~Shalev-Shwartz and S.~Ben-David.
\newblock \emph{Understanding machine learning: From theory to algorithms}.
\newblock Cambridge university press, 2014.

\bibitem[Shalev-Shwartz et~al.(2009)Shalev-Shwartz, Shamir, Srebro, and
  Sridharan]{shalev2009stochastic}
S.~Shalev-Shwartz, O.~Shamir, N.~Srebro, and K.~Sridharan.
\newblock Stochastic convex optimization.
\newblock In \emph{COLT}, 2009.

\bibitem[Sridharan et~al.(2008)Sridharan, Shalev-Shwartz, and
  Srebro]{sridharan2008fast}
K.~Sridharan, S.~Shalev-Shwartz, and N.~Srebro.
\newblock Fast rates for regularized objectives.
\newblock \emph{Advances in neural information processing systems},
  21:\penalty0 1545--1552, 2008.

\bibitem[Zhang et~al.(2016)Zhang, Bengio, Hardt, Recht, and
  Vinyals]{zhang2016understanding}
C.~Zhang, S.~Bengio, M.~Hardt, B.~Recht, and O.~Vinyals.
\newblock Understanding deep learning requires rethinking generalization.
\newblock \emph{arXiv preprint arXiv:1611.03530}, 2016.

\end{thebibliography}

\appendix

\section{Additional proofs for \cref{thm:main} and \cref{thm:strongmain}}

\subsection{Proof of \cref{lem:opt_1_main}}\label{prf:opt_1_main}

Without loss of generality assume $18 \eta^2T^2\le d \le 36 \eta^2T^2$ (as we can always embed the below example in any larger space). Set parameters $0 < \epsilon_1 < \dots < \epsilon_d<\frac{1}{2\sqrt{d}}$, and
define the deterministic convex function $\f{\discolorlinks{\ref{eq:f_opt_1}}} : \R^{d} \to \R$ as follows:
\begin{align}\label{eq:f_opt_1}
    \f{\discolorlinks{\ref{eq:f_opt_1}}}(w)
    =
    \norm1{w-\frac{1}{\sqrt{d}} + \epsilon}_\infty
    ,
\end{align}
where $\epsilon=\brk{\epsilon_1,\dots,\epsilon_d}$.
We begin the proof with the following claim that upper bounds the smallest coordinate:
    \begin{claim} \label{cla:opt}
        There exists an $i\in[d]$ such that
        $
            w_S(i)
            \leq
            \frac{\eta T}{d}
            .
        $
    \end{claim}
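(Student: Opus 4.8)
\textbf{Proof proposal for \cref{cla:opt}.}

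The plan is to exploit the very rigid structure of the GD dynamics on the function $\f{\discolorlinks{\ref{eq:f_opt_1}}}$. Since this function is the $\ell_\infty$-norm of an affine map, every subgradient $g \in \partial \f{\discolorlinks{\ref{eq:f_opt_1}}}(w)$ satisfies $\norm{g}_1 \le 1$, and in particular $\lvert \sum_{i\in[d]} g(i)\rvert \le 1$ (note we do not even need the more specific fact that the oracle returns a single signed basis vector). I will track the scalar quantity $\Sigma_t := \sum_{i\in[d]} w_t(i)$ and prove by induction on $t$ that $\Sigma_t \le t\eta$. Granting this, the average iterate $\wgd_S = \tfrac1T\sum_{t=1}^T w_t$ has coordinate-sum $\sum_{i\in[d]} w_S(i) = \tfrac1T\sum_{t=1}^T \Sigma_t \le \tfrac1T\sum_{t=1}^T t\eta \le \eta T$, so the smallest coordinate is at most the average, $\min_{i\in[d]} w_S(i) \le \tfrac1d \sum_{i\in[d]} w_S(i) \le \tfrac{\eta T}{d}$, which is exactly the claim.

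For the induction, the base case $\Sigma_0 = 0$ holds since $w_0 = 0$. For the inductive step, write the pre-projection iterate as $\widetilde w_t = w_{t-1} - \eta g_{t-1}$ with $g_{t-1} \in \partial \f{\discolorlinks{\ref{eq:f_opt_1}}}(w_{t-1})$; then
\[
    \sum_{i\in[d]}\widetilde w_t(i)
    = \Sigma_{t-1} - \eta \sum_{i\in[d]} g_{t-1}(i)
    \le \Sigma_{t-1} + \eta
    \le (t-1)\eta + \eta = t\eta .
\]
The projection onto the Euclidean unit ball scales $\widetilde w_t$ by a scalar $c_t = 1/\max\{1,\norm{\widetilde w_t}\} \in (0,1]$, so $w_t = c_t \widetilde w_t$. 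If $\sum_i \widetilde w_t(i) \ge 0$ then $\Sigma_t = c_t \sum_i \widetilde w_t(i) \le \sum_i \widetilde w_t(i) \le t\eta$; and if $\sum_i \widetilde w_t(i) < 0$ then $\Sigma_t = c_t \sum_i \widetilde w_t(i) < 0 \le t\eta$. In either case $\Sigma_t \le t\eta$, completing the induction.

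The whole argument uses only the $\ell_1$-bound on subgradients of the sup-norm and the contractivity of Euclidean projection onto the ball; in particular the dimension hypothesis $d \ge 18\eta^2 T^2$ plays no role in this claim (it enters later, when the claim is combined with an information-theoretic lower bound to conclude \cref{lem:opt_1_main}). The only mild point requiring care is the projection step — one must observe that rescaling toward the ball cannot increase a nonnegative coordinate-sum and keeps a negative one negative — but this is routine, and I do not anticipate any real obstacle.
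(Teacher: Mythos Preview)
Your argument is correct and follows essentially the same approach as the paper: both bound a scalar measure of $w_t$ by $\eta t$ using that subgradients of the $\ell_\infty$-norm have $\ell_1$-norm at most $1$, and that Euclidean-ball projection is a radial contraction. The only cosmetic difference is that the paper tracks $\norm{w_t}_1$ (and finishes by contradiction), whereas you track the signed coordinate sum $\Sigma_t$ (and finish by ``min $\le$ average''); the two are equivalent here.
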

    \begin{proof}
        The update rule of GD states that
        \begin{align*}
            w_{t}
            =
            \Pi_\cW \brk[s]{w_{t-1}-\eta \nabla f(w_{t-1})}
            .
        \end{align*}
        Note that 
        \begin{align}\label{eq:f_opt_gradient}
            \nabla f(w)
            =
            -\sign\brk{\frac{1}{\sqrt{d}} - w(i) - \epsilon_{i}}e_{i},
        \end{align} 
        for an $i\in \arg\max_{j\in[d]}\abs{\frac{1}{\sqrt{d}}-w(j)-\epsilon_j}$. We will upper bound the $\ell_1$-norm of $w_{t+1}$. Observe that projection can only reduce the $\ell_1$-norm. Therefore,
        \begin{align*}
            \norm{w_{t}}_1 
            &\leq 
            \norm{w_{t-1}-\eta \nabla f(w_t)}_1 \\
            &\leq 
            \norm{w_{t-1}}_1+\eta\norm{\nabla f(w_t)}_1 \tag{\textrm{triangle inequality}} \\
            &\leq
            \norm{w_{t-1}}_1+\eta \tag{\textrm{\cref{eq:f_opt_gradient}}} \\
            &\leq
            \eta t \tag{\textrm{applying the claim recursively on $\norm{w_{t-1}}_1$}}
            .
        \end{align*}
        This implies that the average iterate holds 
        \begin{align} \label{eq:ws_norm}
            \norm{w_S}_1 
            \leq
            \frac{1}{T}\sum_{t=1}^T\norm{w_t}_1
            \leq
            \frac{1}{T}\sum_{t=1}^T\eta t
            \leq
            \eta T
            .
        \end{align}
        If we assume by contradiction that for all $i\in[d]$ it holds that $w_S(i)>\eta T / d$, then we will get that $\norm{w_S}_1>\eta T$ which contradicts the claim in \cref{eq:ws_norm}.
    \end{proof}
    Using \cref{cla:opt} the average iterate satisfies $w_S(i) \leq \frac{\eta T}{d}$ for some $i$ and we can conclude
    \begin{align*}
        f(w_S)
        \geq
        \frac{1}{\sqrt{d}}-\frac{\eta T}{d}-\epsilon_i
        \geq
        \frac{1}{2\sqrt{d}}-\frac{\eta T}{d}
        ,
    \end{align*}
    since $\epsilon_i \leq \frac{1}{2\sqrt{d}}$. 
    We obtain that for $18\eta^2T^2 \geq 1$ 
    \begin{align}
        f(w_S) 
        \geq 
        \frac{1}{12\eta T} - \frac{1}{18\eta T}
        =
        \frac{1}{36\eta T}
        ,
    \end{align}
    where we used the fact that $18\eta^2T^2 \leq  36\eta^2T^2$. While for $18\eta^2 T^2 \leq 1$ we get that for $d=1$
    \begin{align*}
        f(w_S) 
        \geq
        \frac12 - \eta T
        \geq
        \frac14
        .
    \end{align*}
Note also that $f\brk{\sum_{i\in [d]} \brk{\frac{1}{\sqrt{d}} - \epsilon_i}e_i}=0$,
 hence:
 \[F(w_S)-\min_{w\in \W}F(w) \ge \min\left\{\frac{1}{36 \eta T},\frac{1}{4}\right\}
 \ignore{ \ge \min\left\{\frac{1}{144 \eta T}+\frac{1}{2\sqrt{n}},\frac{1}{4}\right\}=\frac{1}{36}\min\left\{\frac{1}{4\eta T}+\frac{18}{\sqrt{n}},9\right\}}.\]
Because we consider a deterministic function, the result also holds in expectation.\ignore{ We next assume the second case
\paragraph{Case 2 - Assume $\frac{1}{\sqrt{n}}\ge \frac{1}{\eta T}$:} 
In this case, we consider the following function;
\[f(w,z)= (z-\frac{1}{\sqrt{n}})e_1\cdot w,\]
where we consider the distribution $D$ that chooses $z=-1$ w.p. $1/2$, and $z=1$ w.p. $1/2$.
We next have the following claim:
\begin{claim}\label{cl:paley}
Let $z_1,\ldots, z_n$ be i.i.d random variables such that $z_i=1$ w.p. $1/2$ and $z_i=-1$ w.p. $1/2$. Then w.p at least $1/3$
\[ \frac{1}{n}\sum_{i\in[n]} z_i \ge \frac{1}{4\sqrt{n}}.\]
\end{claim}
\begin{proof}
The result follows from Paley-Zygmond inequality \citep{paley1930some} that states that for a non negative random variable
\[ P(Z\ge \theta \E[Z])\ge (1-\theta)^2 \frac{\E[Z]^2}{\E[Z^2]}.\]
We apply the inequality to $Z=(\frac{1}{n}\sum_{i\in[n]} z_i)^2$. Where $\E[Z]=\frac{1}{n}$, and $\E[Z^2]=\frac{3n^2-n}{2n^4}\le \frac{3}{2n^2}$. We thus obtain:
\[ P\brk2{\frac{1}{n}\sum_{i\in[n]} z_i \ge \frac{1}{\sqrt{4n}}}=P\brk1{Z\ge \frac{1}{4n}}\ge \left(\frac{3}{4}\right)^2 \cdot \frac{2}{3}\ge \frac{1}{3}\]
\end{proof}
Using \cref{cl:paley}, we obtain that with probability at least $1/3$:
\[F_S(w)= \alpha w\cdot e_1,\]
where $\alpha \le -\frac{1}{4\sqrt{n}}$. In turn we have that $\nabla F_S(w)=\alpha e_1$. Note that we assume that $\eta T\ge \sqrt{n}\ge 1$, in turn one can show that $w_S\cdot e_1 \le -\frac{1}{2}$, and we have that:
\[ F(w_S)-F(0) \ge \frac{1}{2\sqrt{n}} \ge \frac{1}{4\sqrt{n}}+\frac{1}{2\eta T}.\]
In expectation we have that
\[ \E_{S\sim D^n}[F(w_S)]-\min_{w\in W}F(w)\ge \frac{1}{3} \left(\frac{1}{4\sqrt{n}}+\frac{1}{2\eta T} \right)\ge \frac{1}{12\sqrt{n}}+\frac{1}{6\eta T}\ge \frac{1}{36}\left(\frac{3}{\sqrt{n}}+\frac{6}{\eta T} \right).\]}

\subsection{Proof of \cref{cl:lambda}}\label{prf:lambda}
Consider the function
\[ h(w) = - \frac{\bar\lambda}{2}w(1),\]
where $\bar\lambda=\min\{1,\lambda\}$

We set the distribution $D$ to be deterministic, namely: $f(w,z)=h(w)=-\bar{\lambda}w(1)$ w.p. $1$. 

 Note that for our update step we have that \[w_1=-\frac{\bar{\lambda}}{2\lambda}\nabla h(w) =-\alpha \cdot e_1,\]
 where $\alpha\le \frac{1}{2}$
 Since $\nabla F_{\lambda,S}(w_1)=0$,we have that for every $t\ge 1$, $w_t=w_1$, in particular, $w_S=-e_1$. On the other hand:
\[h(\frac{\alpha\cdot\bar\lambda}{2} e_1)-h(e_1)\ge -\frac{\alpha\bar{\lambda}}{2}+\frac{\bar{\lambda}}{2} =\frac{\bar\lambda}{4}\ge \min \left\{\frac{\lambda}{4},\frac{1}{4}\right\}\ignore{\ge \min \frac{1}{8}\left\{\lambda+\min\left\{\frac{4}{\lambda n},\frac{1}{\sqrt{n}}\right\},2\right\}}.\] 
\ignore{\paragraph{Case 2: Assume $\lambda \le \min\{\frac{4}{\lambda n},\frac{1}{\sqrt{n}}\}$.}

Without loss of generality we will assume $d=1$ (if not, we can simply embed the function $f$ in the first coordinate of any large space). We consider two scalar functions:
\[f(w,z) =(z-\frac{1}{4\sqrt{n}})w.\] The distribution $D$ picks $z=1$ w.p $1/2$ and $z=-1$ with probability $1/2$.
We next claim that with probability at least $1/3$ we have that 
\begin{equation}\label{eq:rw} \frac{1}{n}\sum z_i\ignore{\le \frac{2}{\sqrt{n}}}\ge \frac{1}{\sqrt{4n}}.\end{equation} This follows from \cref{cl:paley}.

\ignore{The upper bound follows from Hoeffding:
\[P(\frac{1}{n}\sum z_i\ge \frac{2}{\sqrt{n}})\le e^{-8}.\]
Taken together we have that with probability $0.37-e^{-8}> 1/3$ \cref{eq:rw} holds.}
Note that if \cref{eq:rw} holds we have
that
\[ F_{S,\lambda}(w) = \frac{\lambda}{2} |w|^2 + \alpha \cdot w,\]
where $\alpha \ge \frac{1}{2\sqrt{n}}-\frac{1}{4\sqrt{n}}\ge \frac{1}{4\sqrt{n}}$. 

We also have that 
\[w_1 = \mathop\Pi_{w\in \W}\left[-\frac{1}{\lambda}\nabla F_S(0)\right]
= -\Pi_{w\in \W}[ \frac{\alpha}{\lambda}].\]

Now if $|\frac{\alpha}{\lambda}|\ge 1$, we have that $w_1=-1$, and one can show that for every $t\ge 1$ we have that $w_t=-1$, in particular, $w_S=-1$. As such we have that
\[F(w_S)-F(1) = \frac{1}{4\sqrt{n}}+\frac{1}{4\sqrt{n}}=\frac{1}{2\sqrt{n}}\ge \frac{1}{2}\min\left\{\frac{4}{\lambda n},\frac{1}{\sqrt{n}}\right\}\ge 
\frac{1}{4}\left(\min\left\{\frac{4}{\lambda n},\frac{1}{\sqrt{n}}\right\}+\lambda \right)
.\]
On the other hand, if $|\frac{\alpha}{\lambda}|\le 1$, then we have that
\[ w_1 = -\frac{\alpha}{\lambda}\le \frac{1}{4\lambda\sqrt{n}}.\]

One can also show that $\nabla F_{\lambda,S}(w_1)=0$, hence again $w_t=w_1$ for every $t\ge 1$ and in particular $w_S=w_1$ and
\[ F(w_S)-F(0) = -\frac{1}{4\sqrt{n}} w_S\ge  \frac{1}{16\lambda n}\ge \min\left\{\frac{1}{128}\left(\min\left\{\frac{4}{\lambda n},\frac{1}{\sqrt{n}}\right\}+64\lambda,128\right)\right\}.\]
Overall we obtain that for every $n$ and $\lambda$ we can choose $D$ so that
\[F(w_S)-\min_{w^\star\in \W} F(w^\star)\ge 
\min\frac{1}{128}\left\{\min\left\{\frac{4}{\lambda n},\frac{1}{\sqrt{n}}\right\}+64\lambda,128\right\}\]
}
\section{Proof of \cref{lem:gen}}\label{prf:gen}

Because we only analyze non-regularized objectives in this proof we will suppress the dependence on the algorithm and we will use $w_t$ for $\wgd_t$. Before we proceed with the proof, we present a generic \namecref{lem:core} that this proof relies upon. The proof of \cref{lem:core} can be found at \cref{prf:core}.
\begin{lemma}\label{lem:core}
Fix $\epsilon\in \reals^d$ and $\gamma\in \reals^2$. Let $(z_1,\ldots z_n)$ be a sequence such that $z_j=(\alpha_j,\epsilon,\gamma)$, and consider
\[ 
    F(w)
    = 
    \frac{1}{n}\sum_{j\in [n]} \f{\discolorlinks{\ref{eq:f_gen}}}(w;z_j)
    .
\]
Denote $\nabla_iF(w)$ to be the $i$-th element of the gradient $\nabla F(w)$.
If $\;\cI = \{i : \forall j\in[n],\; \alpha_j(i)=0\}$, then for a choice $0<\epsilon_1 < \ldots < \epsilon_d$ we have the following:
\begin{enumerate}
    \item $\nabla F(0)=\gamma_1\vbar$.
    \item For every $i \in \cI$ then $\nabla_i F(0)= -\frac{\gamma_1}{2n}$.
    \item If $i\notin \cI$ then $0 < \nabla_i F(0) \le \gamma_1$.
    \item Suppose for some $k\in \cI$:
    \[
        w
        =
        \sum_{i\notin \cI} \rho_i e_i + \sum_{i\ge k,i\in \cI} \mu e_i+ \sum_{i < k, i\in \cI}\xi_i e_i
        ,
    \]
    where $-\gamma_2 < \rho_i< 0$, $\xi_i\le 0$ and, $\mu > \epsilon_d$. Then:
    \[
        \nabla F(w)
        =
        \gamma_1\vbar + \gamma_3e_k
        ,
    \]
    where $\vbar = \frac{1}{n}\sum_{j \in [n]} v_{\alpha_j}$ and $e_k$ is the $k$-th standard basis vector in $\reals^d$.
    \item Suppose:
    \[
        w
        =
        \sum_{i\notin \cI} \rho_i e_i + \sum_{i\in \cI}\xi_i e_i
        ,
    \]
    where $-\gamma_2 < \rho_i< 0$ and $\xi_i\le 0$. Then:
        \[
        \nabla F(w)
        =
        \gamma_1\vbar
        .
    \]
\end{enumerate}
\end{lemma}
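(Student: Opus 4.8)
The plan is to compute $\nabla F$ by splitting $f$ into its three summands and reading off each one's gradient at the point of interest. Write
\[
F(w)=\frac1n\sum_{j\in[n]}S_j(w)+\gamma_1\,\vbar\dotp w+\gamma_3\,r_\epsilon(w),\qquad
S_j(w):=\sqrt{\textstyle\sum_{i\in[d]}\alpha_j(i)\,h_\gamma^2(w(i))},
\]
so the linear term contributes $\gamma_1\vbar$ to $\nabla F$ at every point, and it suffices to analyse $\nabla S_j$ and $\nabla r_\epsilon$. Two elementary facts do all the work. First, $h_\gamma\equiv0$ on $(-\gamma_2,\infty)$, so if $w(i)>-\gamma_2$ for every coordinate $i$ with $\alpha_j(i)=1$, then the entire radicand of $S_j$ is identically zero on a neighbourhood of $w$; hence $S_j$ is locally the zero function there and $\nabla S_j(w)=0$. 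Since $\{i:\alpha_j(i)=1\}\subseteq[d]\setminus\cI$ for every $j$, these are exactly the coordinates carrying the ``small negative'' values $\rho_i\in(-\gamma_2,0)$ in parts 4--5, so this fact will apply at every point we evaluate. Second, $r_\epsilon(w)=\max\{0,\max_i(w(i)-\epsilon_i)\}$: if $\max_i(w(i)-\epsilon_i)<0$ then $r_\epsilon$ vanishes on a neighbourhood of $w$ and $\nabla r_\epsilon(w)=0$, while if $\max_i(w(i)-\epsilon_i)>0$ and is attained at a \emph{unique} index $k$, then $r_\epsilon$ is differentiable at $w$ with $\nabla r_\epsilon(w)=e_k$. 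In particular at every point below $F$ is genuinely differentiable, so ``$\nabla F$'' is unambiguous.

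For parts 1--3, evaluate at $w=0$. Since $\gamma_2>0$ every coordinate satisfies $0>-\gamma_2$, so $\nabla S_j(0)=0$; since $\epsilon_i>0$ every coordinate satisfies $0-\epsilon_i<0$, so $\nabla r_\epsilon(0)=0$; hence $\nabla F(0)=\gamma_1\vbar$, which is part 1. For part 2, $i\in\cI$ means $\alpha_j(i)=0$ for all $j$, whence $v_{\alpha_j}(i)=-\tfrac1{2n}$ for all $j$ and $\vbar(i)=-\tfrac1{2n}$. For part 3, $i\notin\cI$ means $m_i:=|\{j:\alpha_j(i)=1\}|$ satisfies $1\le m_i\le n$, so $\vbar(i)=\tfrac{m_i}{n}-\tfrac{n-m_i}{2n^2}$, which is increasing in $m_i$ and therefore lies in $[\tfrac{n+1}{2n^2},1]\subseteq(0,1]$; multiplying by $\gamma_1>0$ gives $0<\nabla_iF(0)\le\gamma_1$.

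For part 4, at the stated $w$ the coordinates $i$ with $\alpha_j(i)=1$ lie outside $\cI$ and there $w(i)=\rho_i\in(-\gamma_2,0)$, so $\nabla S_j(w)=0$ for every $j$ by the first fact. For the $r_\epsilon$ term: the coordinates $i\in\cI$ with $i\ge k$ have $w(i)-\epsilon_i=\mu-\epsilon_i\ge\mu-\epsilon_d>0$, while every other coordinate has $w(i)-\epsilon_i<0$ (for $i\notin\cI$ because $\rho_i<0<\epsilon_i$, and for $i\in\cI$ with $i<k$ because $\xi_i\le0<\epsilon_i$); among the positive values, $\mu-\epsilon_i$ is strictly maximised at $i=k$, since $k$ is the smallest index of $\{i\in\cI:i\ge k\}$ and $\epsilon_1<\dots<\epsilon_d$ is strictly increasing. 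Hence $\nabla r_\epsilon(w)=e_k$ and $\nabla F(w)=\gamma_1\vbar+\gamma_3 e_k$. For part 5 the same argument gives $\nabla S_j(w)=0$, but now \emph{every} coordinate satisfies $w(i)-\epsilon_i<0$ ($\rho_i-\epsilon_i<0$ for $i\notin\cI$, $\xi_i-\epsilon_i\le-\epsilon_i<0$ for $i\in\cI$), so $\nabla r_\epsilon(w)=0$ and $\nabla F(w)=\gamma_1\vbar$.

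The only subtle point is the differentiability of $S_j$ wherever its radicand is zero: $\sqrt{\cdot}$ fails to be differentiable at $0$, so one must check that the radicand is \emph{locally constant} and equal to $0$, not merely zero at the single point --- this is precisely why the hypotheses confine the $\alpha_j=1$ coordinates to the open interval $(-\gamma_2,0)$ rather than allowing the endpoint $-\gamma_2$. A secondary point is the uniqueness of the active index in part 4's analysis of $r_\epsilon$, which is exactly where $\epsilon_1<\dots<\epsilon_d$ is used; everything else is a direct computation from the definitions of $h_\gamma$, $r_\epsilon$ and $v_\alpha$.
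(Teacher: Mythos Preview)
Your proof is correct and follows essentially the same approach as the paper's: decompose $F$ into the square-root term, the linear term, and $r_\epsilon$, then argue that the first vanishes locally (since $h_\gamma\equiv0$ on $(-\gamma_2,\infty)$ and only coordinates outside $\cI$ can have $\alpha_j(i)=1$) and analyse $r_\epsilon$ case by case. Your explicit remarks on differentiability of $S_j$ at points where the radicand is zero, and on the role of strict monotonicity of the $\epsilon_i$ in guaranteeing a unique active index, are welcome clarifications the paper leaves implicit.
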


First, we address the case of $t\leq \min\brk[c]{T,K}$. We will prove by induction on $t$ that 
\begin{align} \label{eq:wgd_1}
    w_t
    =
    -\eta t \cdot \gamma_1 \vbar - \eta\sum_{s=1}^{t-1} e_{i_s}
    .
\end{align}
For $t=1$ we know that the first GD step takes to $w_1=-\eta\nabla F_S(0)$. Using \cref{lem:core} we get that $w_1=-\eta \gamma_1 \vbar$ which concludes the base of the induction. For the induction step we assume that $w_t$ is given by \cref{eq:wgd_1}. Recall that $0<\vbar_i\leq 1$ for $i\notin \cI$ and $\vbar_i=-\frac{1}{2n}$ for $i\in\cI$. Now observe that $w_t$ takes the following form
\begin{align*}
    w_t
    &=
    \sum_{i\notin\cI} \rho^{(t)}_i e_i + \sum_{i\in \cI}\frac{\gamma_1}{2n}\eta t e_i + \sum_{i \leq i_{t-1}, i \in \cI}\brk{-\eta} e_i \\
    &=
    \sum_{i\notin\cI} \rho^{(t)}_i e_i + \sum_{i \geq i_t, i \in \cI}\frac{\gamma_1}{2n}\eta t e_i + \sum_{i < i_t, i \in \cI}\brk{-\eta + \frac{\gamma_1}{2n}\eta t} e_i
    ,
\end{align*}
where $-\gamma_1\eta t \leq \rho^{(t)}_i < 0$. One can ensure that all the necessary conditions of the fourth claim in \cref{lem:core} hold, under the assumptions of \cref{lem:gen}. Namely, that $-\gamma_2<\rho^{(t)}_i<0$, $\brk1{-\eta + \frac{\gamma_1}{2n}\eta t} \leq 0$ and $\frac{\gamma_1}{2n}\eta t > \epsilon_d$. Therefore, we can apply \cref{lem:core} and obtain that $\nabla F_S(w_t)=\gamma_1\vbar + e_{i_t}$. The next iterate is then 
\begin{align*}
    w_{t+1}
    &=
    w_t - \eta \gamma_1\vbar - \eta e_{i_t} \\
    &=
    -\eta (t+1) \cdot \gamma_1 \vbar - \eta\sum_{s=1}^t e_{i_s}
    ,
\end{align*}
which concludes the first part of the proof. We now address the case of $K < t \leq T$ when $T>K$. Similarly to the first part, we will prove by induction on $t$ that
\begin{align} \label{eq:wgd_2}
    w_t
    =
    -\eta t \cdot \gamma_1 \vbar - \eta\sum_{s=1}^K e_{i_s}
    .
\end{align}
Starting at $t=K+1$ we know that $w_{K+1}=w_K-\eta\nabla F_S(w_K)$. From the first part of the claim we can deduce that $w_K$ holds \cref{eq:wgd_1} and $\nabla F_S(w_K)=\gamma_1\vbar + e_{i_K}$. Therefore we conclude that
\begin{align*}
    w_{K+1}
    =
    w_K - \eta \gamma_1 \vbar - \eta e_{i_K}
    =
    -\eta (K+1) \cdot \gamma_1 \vbar - \eta\sum_{s=1}^K e_{i_s}
    .
\end{align*}
For the induction step we assume that $w_t$ holds \cref{eq:wgd_2}. Taking advantage of the properties of $\vbar$ we have that $w_t$ takes the following form
\begin{align*}
    w_t
    &=
    \sum_{i\notin\cI} \rho^{(t)}_i e_i + \sum_{i\in \cI}\frac{\gamma_1}{2n}\eta t e_i + \sum_{i \in \cI}\brk{-\eta} e_i \\
    &=
    \sum_{i\notin\cI} \rho^{(t)}_i e_i + \sum_{i \in \cI}\brk{-\eta + \frac{\gamma_1}{2n}\eta t} e_i
    ,
\end{align*}
where $-\gamma_1\eta t \leq \rho^{(t)}_i < 0$. Again, one can ensure that all the necessary conditions of the fifth claim in \cref{lem:core} hold, under the assumptions of \cref{lem:gen}. Namely, that $-\gamma_2<\rho^{(t)}_i<0$ and $\brk1{-\eta + \frac{\gamma_1}{2n}\eta t} \leq 0$. Applying \cref{lem:core} we obtain that $\nabla F_S(w_t)=\gamma_1\vbar$. We then conclude that
\begin{align*}
    w_{t+1}
    &=
    w_t - \eta \gamma_1\vbar \\
    &=
    -\eta (t+1)\cdot \gamma_1 \vbar - \eta \sum_{s=1}^K e_{i_s}
    .
\end{align*}

Because we ignored projections throughout the proof we need to ensure that each $w_t$ for $t=0,\ldots,T$ lies in the Euclidean unit ball. Observe that this is indeed the case, as we get
\begin{align*}
    \norm{w_t}^2
    \leq
    \eta^2 \min\brk[c]{t, K} + d \gamma_1^2\eta^2 T^2
    \leq
    \eta^2 \brk{K + d \gamma_1^2 T^2}
    \leq
    1
    ,
\end{align*}
since $\gamma_1 \leq \frac{1}{2\sqrt{d}\eta T}$ and $K\leq \frac{3}{4\eta^2}$.

\subsection{Proof of \cref{lem:core}} \label{prf:core}
Consider the gradient of $\f{\discolorlinks{\ref{eq:f_gen}}}(w;z_j)$ at $w=0$,
\begin{align*}
    \nabla\f{\discolorlinks{\ref{eq:f_gen}}}(0;z_j)
    =
    \nabla \brk3{\sqrt{\sum_{i\in [d]}\alpha_j(i) h_\gamma^2(w(i))}}\Bigg\rvert_{w=0} + \gamma_1v_{\alpha_j} + \gamma_3\nabla r_\epsilon(0)
    .
\end{align*}
Observe that $\nabla r_\epsilon(w)=0$ for any $w$ that satisfies 
$
    \forall i\in [d]:
    \;
    w(i)
    < 
    \epsilon_1
    .
$
In particular, this holds when $w=0$. Now, note that for any $w(i) > -\gamma_2$ we have $h_\gamma(w(i))=0$. Since $\gamma_2 > 0$, this implies that
\begin{align*}
    \forall j \in [n]:
    \quad
    \nabla \brk3{\sqrt{\sum_{i\in [d]}\alpha_j(i) h_\gamma^2(w(i))}}\Bigg\rvert_{w=0}
    =
    0
    ,
\end{align*}
and we obtain 
$
    \nabla\f{\discolorlinks{\ref{eq:f_gen}}}(0;z_j)
    =
    \gamma_1v_{\alpha_j}
    .
$
This concludes the first claim proof, as we get
\begin{align*}
    \nabla F(0)
    =
    \frac{\gamma_1}{n}\sum_{j\in [n]}v_{\alpha_j}
    =
    \gamma_1\vbar
    .
\end{align*}
For $i \in \cI$, note that $v_{\alpha_j}(i) = -\frac{1}{2n}$ and therefore $\vbar_i=-\frac{1}{2n}$. In addition, for $i\notin \cI$ there is at least one sample $j\in [n]$ such that $\alpha_j(i)=1$. This entails that for any $i \notin \cI$ it holds $0 < \vbar_i=\frac{1}{n}\sum_{j\in [n]}v_{\alpha_j}(i) \leq 1$. Taking both cases conclude the second and third claim proofs. For the fourth claim we assume that for some $k\in \cI$
\begin{align*}
    w 
    = 
    \sum_{i\notin \cI} \rho_i e_i+ \sum_{i\ge k,i\in \cI}\mu e_i+ \sum_{i < k, i\in \cI}\xi_i e_i
    .
\end{align*}
Consider then the gradient at $w$
\begin{align*}
    \nabla F(w)
    =
    \frac{1}{n}\sum_{j\in [n]}\nabla \brk3{\sqrt{\sum_{i\in [d]}\alpha_j(i) h_\gamma^2(w(i))}} +\frac{\gamma_1}{n}\sum_{j\in [n]} v_{\alpha_j}(i) + \gamma_3\nabla r_\epsilon(w)
    .
\end{align*}
Let us examine each term separately. We start with the first term,
\begin{align*}
    \frac{1}{n}\sum_{j\in [n]}\nabla \brk3{\sqrt{\sum_{i\in [d]}\alpha_j(i) h_\gamma^2(w(i))}}
    &=
    \frac{1}{n}\sum_{j\in [n]}\nabla \brk3{\sqrt{\sum_{i\notin \cI}\alpha_j(i) h_\gamma^2(w(i))}}
    ,
\end{align*}
where we used the fact that for any $i\in \cI$ we have $\alpha_j(i)=0$. First, note that this term is independent in $w(i)$ for $i\in \cI$. In addition, for any $i\notin \cI$ we have $w(i)=\rho_i > -\gamma_2$. This implies that
\begin{align} \label{eq:empirical_gradient}
    \frac{1}{n}\sum_{j\in [n]}\nabla \brk3{\sqrt{\sum_{i\in [d]}\alpha_j(i) h_\gamma^2(w(i))}}
    =
    0
    .
\end{align}
The second term is trivially given by the definition of $\vbar$,
\begin{align*}
    \frac{\gamma_1}{n}\sum_{j\in [n]} v_{\alpha_j}
    =
    \gamma_1 \vbar
    .
\end{align*}
Recall that $r_\epsilon(w)=\max\brk[c]1{0,\max_{i \in [d]}\brk[c]{w(i) - \epsilon_i}}$ and observe the following
\begin{align*}
    w(i)
    =
    \begin{cases}
        \rho_i & i\notin \cI; \\
        \mu & i\in \cI, i \geq k; \\
        \xi_i & i\in \cI, i < k. \\
    \end{cases}
\end{align*}
Since $\rho_i<0$ and $\xi_i \leq 0$ the maximum of $w_i-\epsilon_i$ can only be achieved for $i\in \cI, i\geq k$. Specifically, as $\epsilon_i$ are strictly increasing and $\mu > \epsilon_i$ for all $i\in[d]$, we get that the maximum is given in $i=k$. This concludes the fourth claim proof as $\nabla r_\epsilon(w)=e_k$. For the last claim we assume 
\begin{align*}
    w 
    = 
    \sum_{i\notin \cI} \rho_i e_i + \sum_{i\in \cI}\xi_i e_i
    .
\end{align*}
Following the same arguments as in the previous claim we get that \cref{eq:empirical_gradient} holds here as well. Therefore,
\begin{align*}
    \nabla F(w) 
    = 
    \gamma_1\vbar + \gamma_3\nabla r_\epsilon(w)
    .
\end{align*}
Observe that $w(i)\leq 0$ for any $i\in[d]$, since $\rho_i < 0$ and $\xi_i\leq 0$. This implies that $\nabla r_\epsilon(w)=0$, which concludes the proof.

\end{document}